\documentclass[twoside]{article}

\usepackage[accepted]{AISTATS2026/aistats2026}
\usepackage{caption}
\usepackage{booktabs}
\usepackage{subcaption}
\usepackage{graphicx} 
\usepackage{amsthm}
\usepackage{xfrac}
\usepackage{multirow}
\usepackage{amssymb}
\usepackage{mathtools}
\usepackage{xcolor}
\usepackage{color,soul}
\usepackage{algpseudocode}
\usepackage{amsmath}
\usepackage{comment}
\usepackage{scalerel}
\usepackage{enumitem}
\usepackage{wrapfig}
\usepackage{nccmath}
\usepackage{array}
\usepackage{enumitem}
\usepackage{longtable}
\usepackage{url}
\usepackage{hyperref}
\usepackage{wrapfig}
\allowdisplaybreaks
\usepackage[ruled]{algorithm2e}
\SetKwInput{Input}{Input}
\SetKwInput{Output}{Output}

\usepackage{array}
\usepackage{paracol}
\globalcounter{algocf}

\usepackage{etoolbox}

\SetAlCapFnt{\footnotesize}
\SetAlCapNameFnt{\footnotesize}

\newtheorem{theorem}{Theorem}
\newtheorem*{theorem*}{Theorem}
\newtheorem{theoremMain}{Theorem}
\newtheorem{corollaryMain}{Corollary}
\newtheorem{corollary}{Corollary}

\newtheorem{lemma}{Lemma}[section]

\newtheorem{assumption}{Assumption}
\newtheorem{definition}{Definition}[section]

\newtheorem*{rep@theorem}{\rep@title}
\newcommand{\newreptheorem}[2]{%
\newenvironment{rep#1}[1]{%
 \def\rep@title{#2 \ref{##1}}%
 \begin{rep@theorem}}%
 {\end{rep@theorem}}}
\newreptheorem{theorem}{Theorem}

\newcommand{\niru}[1]{{\color{red} (Niru) #1}}

\newcommand{\dasha}{\mbox{Byz-DASHA-PAGE}\xspace}


\newcommand{\expect}[1]{\mathop{{}\mathbb{E}}\left[{#1}\right]}
\newcommand{\condexpect}[2]{\mathbb{E}_{#1}\left[{#2}\right]}

\newcommand{\suchthat}{\ensuremath{~\middle|~}}
\newcommand{\knowing}{\suchthat{}}
\newcommand{\card}[1]{\left\lvert{#1}\right\rvert}

\newcommand{\norm}[1]{\left\lVert{#1}\right\rVert}

\newcommand{\indexvar}[3]{\ensuremath{{{#3}^{\ifthenelse{\equal{#1}{}}{}{\left({#1}\right)}}_{#2}}}}
\newcommand{\indexvarNoPar}[3]{\ensuremath{{{#3}^{\ifthenelse{\equal{#1}{}}{}{\left{#1}\right}}_{#2}}}}

\providecommand{\iprod}[2]{\ensuremath{\left\langle #1,\,#2  \right\rangle}}

\providecommand{\norm}[1]{\ensuremath{\left\lVert#1\right\rVert }}

\newcommand{\proba}[2]{\ensuremath{\text{P}\!\left({#1}\ifthenelse{\equal{#2}{}}{}{\knowing{}{#2}}\right)}}

\renewcommand{\paragraph}[1]{\textbf{#1}~}



\newcommand{\mmean}{\overline{m}^t_\mathcal{H}}
\newcommand{\honest}{\mathcal{H}}
\newcommand{\lossHonest}[1]{\nabla \mathcal{L}_\mathcal{H}\left(\theta^{#1}\right)}
\newcommand{\lossWorker}[2]{\nabla \mathcal{L}_{#1}\left(\theta^{#2}\right)}
\newcommand{\mmeanT}[1]{\overline{m}^{#1}_\mathcal{H}}

\newcommand{\lossworker}[2]{\nabla \mathcal{L}_{#1}\left(\theta^{#2}\right)}

\newcommand{\sparHonest}[1]{\overline{\tilde{g}}_\mathcal{H}^{#1}}

\newcommand{\mmmtdrifti}[2]{\mathcal{M}_{#1}^{#2}}

\def\mask{\mathsf{mask}}
\def\R{\mathbb{R}}

\begin{document}

\twocolumn[
    \aistatstitle{Reconciling Communication Compression and Byzantine-Robustness in Distributed Learning}
    \aistatsauthor{Diksha Gupta\And Antonio Honsell\And Chuan Xu\And Nirupam Gupta\And Giovanni Neglia}  
    \aistatsaddress{}
]

\medskip 

\begin{abstract}
    Distributed learning enables scalable model training over decentralized data, but remains hindered by Byzantine faults and high communication costs. While both challenges have been studied extensively in isolation, their interplay has received limited attention. Prior work has shown that naively combining communication compression with Byzantine-robust aggregation can severely weaken resilience to faulty nodes. The current state-of-the-art, Byz-DASHA-PAGE, leverages a momentum-based variance reduction scheme to counteract the negative effect of compression noise on Byzantine robustness. In this work, we introduce RoSDHB, a new algorithm that integrates classical Polyak momentum with a coordinated compression strategy. Theoretically, RoSDHB matches the convergence guarantees of Byz-DASHA-PAGE under the standard $(G,B)$-gradient dissimilarity model, while relying on milder assumptions and requiring less memory and communication per client. Empirically, RoSDHB demonstrates stronger robustness while achieving substantial communication savings compared to Byz-DASHA-PAGE.
\end{abstract}

\section{Introduction}\label{sec:intro}
Distributed learning enables scalable model training over decentralized data by leveraging parallel computation across multiple nodes (workers). In the standard setting, each worker computes local updates on its data and transmits them to a central server, which aggregates these updates to refine a global model. The updated model is then redistributed to the workers, and the process repeats iteratively. This paradigm accelerates training and preserves data locality, but it remains vulnerable to attacks and communication bottlenecks.

A particular challenge arises from malicious or faulty workers, commonly referred to as \textit{Byzantine} workers in distributed computing literature~\cite{lamport82}. Such workers can arbitrarily corrupt their updates, severely degrading the convergence and accuracy of the global model. This has motivated a large body of work on Byzantine-robust distributed learning~\cite{blanchard_2017_adversaries, brute_bulyan, yin2018byzantine, li2021ditto, pillutla2022robust, karimireddy2022byzantinerobust, allouah2023fixing}, which develops robust aggregation rules to mitigate the impact of corrupted updates during server-side aggregation. These defenses are especially critical in open or partially trusted environments, such as \textit{federated learning}~\cite{mcmahan17fedavg}, where the non-iid nature of data across devices makes distinguishing Byzantine updates from legitimate but heterogeneous ones particularly challenging.

Another major line of work focuses on improving the communication efficiency of distributed learning. Transmitting full model parameters or gradients from every worker to the server at each round is often impractical under bandwidth constraints. To address this, communication compression schemes such as gradient sparsification~\cite{stich2018sparsified} and quantization~\cite{alistarh2017qsgd} have been widely studied (e.g.,~\cite{chen2021communication, hamer2020fedboost, reisizadeh2020fedpaq, richtarik2021ef21, oh2024communication}). While these methods effectively reduce communication costs, they have mostly been analyzed in isolation from adversarial robustness. The interaction between communication compression and Byzantine resilience remains relatively underexplored, with only a few notable attempts~\cite{bernstein2018signsgd, sattler2019robust, gorbunov2023variance, rammal2024communication}.

The current state-of-the-art (SOTA) approach for combining Byzantine-robustness with communication compression in distributed learning is \dasha\cite{rammal2024communication}. This algorithm considers a robust variant of distributed stochastic gradient descent (SGD) with \textit{unbiased compression}, and addresses the adverse effect of compression on Byzantine-robustness through \textit{momentum variance reduction}—a technique originally proposed in~\cite{cutkosky2019momentum} to achieve optimal convergence rates under noisy gradients. 
This mechanism is central to the significant improvements of \dasha over earlier methods such as Byz-VR-MARINA~\cite{gorbunov2023variance}, but it comes at the cost of maintaining additional momentum states \emph{before compression}, i.e., on the worker side, which leads to substantial per-worker memory usage.
Moreover, the convergence of \dasha relies on a special  property of loss functions, namely bounded global Hessian variance~\cite{szlendak2021permutation} (see Appendix~\ref{app:ghv}).

We address the aforementioned limitations of \dasha by proposing a new algorithm, named \textbf{Ro}bust \textbf{S}parsified \textbf{D}istributed \textbf{H}eavy-\textbf{B}all (\textbf{RoSDHB}). 
The design of RoSDHB is deliberately simpler than \dasha. Instead of the more involved \textit{momentum variance reduction} scheme, RoSDHB employs the classical \textit{Polyak’s momentum} (a.k.a.~heavy-ball method). 
Crucially, RoSDHB applies heavy-ball momentum \emph{after} compression, i.e., once the compressed gradients have been received by the server. This design shift moves the momentum operation to the server and halves the memory requirements at each worker, while still effectively mitigating the noise introduced by compression.
A further innovation of RoSDHB is the use of a \textit{global mask}, i.e., a shared sparsity pattern applied across all workers. This design 
reduces the uplink communication footprint, as workers transmit only the jointly selected coordinates and not the corresponding indices. 
While offering both memory and communication improvements, 
the convergence guarantee of RoSDHB is comparable to that of \dasha in the gradient descent setting, without the additional assumption of bounded global Hessian variance. 

In particular, we establish tight convergence guarantees RoSDHB on smooth nonconvex loss functions under the standard $(G,B)$-gradient dissimilarity~\cite{karimireddy2020scaffold} and Lipschitz smoothness assumptions. Unlike prior momentum-based results for Byzantine-robust distributed learning~\cite{Karimireddy2021, farhadkhani2022byzantine}, our analysis addresses sparsification noise whose variance grows with the gradient norm of the average loss, rather than being uniformly bounded. To handle this more challenging regime, we develop a new Lyapunov function, which could be of independent interest to the distributed optimization community. 

Although RoSDHB achieves convergence rates comparable to \dasha—under weaker and more standard assumptions—our experimental results reveal a clear practical advantage, in both full gradient and stochastic gradient scenarios. In particular, RoSDHB demonstrates stronger robustness to Byzantine attackers and achieves significant communication savings across benchmark image classification tasks.

{\bf In summary, our contributions are three-fold:} (i) We propose a new distributed learning algorithm RoSDHB that provides Byzantine-robustness while enforcing communication compression. Compared to prior work, RoSDHB incurs reduced memory and communication costs for the workers. (ii) 
We show (for the first time) that simple heavy-ball momentum, applied after sparsification and in combination with a global mask, yields tight Byzantine-robustness under standard assumptions. (iii) Our experiments show that RoSDHB consistently outperforms the SOTA method in practice, achieving superior robustness to Byzantine attackers with significant communication savings.

The rest of the paper is organized as follows. Section~\ref{sec:background} reviews background material on Byzantine-robustness and gradient sparsification. Section~\ref{sec:algo} introduces our algorithm RoSDHB, presents its theoretical analysis, and includes a formal comparison with SOTA methods. 
Section~\ref{sec:experiments} reports empirical results on benchmark machine learning tasks. Finally, Section~\ref{sec:conc_rw}  
summarizes our contributions, and outlines promising directions for future research.

\section{Problem Setting \& Background}\label{sec:background}
        




In this section, we present the problem setup and the assumptions used throughout the paper. We use $\norm{\cdot}$ to denote the Euclidean norm.

\paragraph{Distributed learning.} 
We consider a server-based distributed learning setting with one central server and $n$ workers $[n]\coloneq\{1, 2, \ldots, n\}$.  Each worker $i$ holds a local dataset $D_i = \{z_i^1, z_i^2, \ldots, z_i^{m}\}$, drawn from a possibly different distribution, as is often the case in \textit{federated learning} settings.
For each data point $z \in \mathcal{Z}$, a model with parameter vector $\theta \in \mathbb{R}^d$ incurs a loss $\ell(\theta, z)$, where $\ell$ is real-valued and differentiable. The local empirical loss at worker $i$ is $\mathcal{L}_i(\theta) = \tfrac{1}{m} \sum_{j=1}^{m} \ell(\theta, z_i^j)$, and the global objective is to minimize the average loss $\mathcal{L}(\theta) = \tfrac{1}{n} \sum_{i=1}^n \mathcal{L}_i(\theta)$.\footnote{
The analysis remains valid both when workers hold datasets of different sizes and when the global loss is defined as a weighted combination of the local losses.
} We allow $\mathcal{L}(\theta)$ to be nonconvex, in which case the goal is to find a stationary point (defined formally later).
To this purpose, the server can employ distributed gradient descent (DGD). At each iteration $t \in {1, \ldots, T}$, the server broadcasts the current model $\theta^{t-1}$ to all workers. Each worker $i$ computes its local gradient $g_i^t = \nabla \mathcal{L}_i(\theta^{t-1})$ and sends it to the server. The server aggregates them as $g^t = \tfrac{1}{n} \sum_{i=1}^n g_i^t$ and updates the model by $\theta^t = \theta^{t-1} - \gamma g^t$, where $\gamma > 0$ is the learning rate. The initial model $\theta^0$ is chosen arbitrarily.




\paragraph{Communication Compression.} 
To reduce the communication cost of DGD, several compression schemes have been proposed~\cite{beznosikov2023biased}. We focus on \textit{RandK sparsification}~\cite{stich2018sparsified}, where at each iteration $t$ worker $i$ sends only $k$ ($k \leq d$) randomly selected coordinates of its local gradient $g_i^t$.
We define the \textit{compression parameter} as $\alpha \coloneqq d/k$. 
Formally, each worker samples a binary mask $\mask_i(k) \in \{0,1\}^d$ with exactly $k$ entries equal to $1$, indicating the selected coordinates. The compressed message is the pair $(\mask_i(k), \mathcal{C}_k(g_i^t))$, where $\mathcal{C}_k(g_i^t) = ([g_i^t]_{\ell_1}, \ldots, [g_i^t]_{\ell_k})$ contains the gradient values at the selected indices $\ell_1 < \cdots < \ell_k$ satisfying $[\mask_i(k)]_{\ell_j} = 1$.
Using this information, the server reconstructs
$\tilde{g}_i^t  = \alpha (g^t_i \odot \mask_i(k)) \in \R^d$, where $\odot$ denotes the element-wise (Hadamard) product. The server then averages $\overline{\tilde{g}}^t = \tfrac{1}{n} \sum_{i=1}^n \tilde{g}_i^t$ and updates the model as $\theta^t = \theta^{t-1} - \gamma \overline{\tilde{g}}^t$.
RandK is an unbiased compression operator with bounded variance~\cite[Def.~1]{beznosikov2023biased}, since
$$\expect{\tilde{g}_i^t} = g^t_i ~ \text{ and } ~ \expect{\norm{\tilde{g}_i^t - g^t_i}^2} \leq (\alpha - 1) \norm{g^t_i}^2,$$
where the expectation is over the randomness of $\mask_i(k)$.

\paragraph{Byzantine-robustness.} 
We consider a scenario wherein an adversary corrupts a set of at most $f$ workers out of total $n$, whose identity is fixed but a priori unknown to the server. We refer to these corrupted workers as {\em Byzantine workers}, as per the terminology used in distributed computing~\cite{lamport82}, and the remaining uncorrupted workers as the {\em honest workers}. We denote by $\mathcal{H} \subseteq [n]$  the set of honest workers ($|\mathcal{H}| = n-f$).
Byzantine workers can deviate arbitrarily from the prescribed algorithm. We consider the worst-case scenario where they may collude seamlessly, know the learning algorithm used by the server, and observe all messages exchanged between the server and the honest workers throughout training.
In the presence of Byzantine workers, the server’s objective is to find a stationary point of the average loss over the honest workers, $\mathcal{L}_{\mathcal{H}}(\theta) \coloneqq \frac{1}{\card{\mathcal{H}}} \sum_{i \in \mathcal{H}}  \mathcal{L}_i\!\left( \theta \right)$.
An algorithm that, in the presence of at most $f$ Byzantine workers, guarantees convergence to an $\epsilon$-stationary point of $\mathcal{L}_{\mathcal{H}}$ is said to be $(f,\epsilon)$-resilient. Formally~\cite{allouah2023fixing}: 

\begin{definition}[$(f, \epsilon)$-Resilience]
A distributed learning algorithm is said to be $(f, \epsilon)$-resilient if, in the presence of at most $f$ Byzantine workers, it outputs a parameter $\hat{\theta} \in \mathbb{R}^d$ such that $\norm{\nabla \mathcal{L}_{\mathcal{H}}(\hat{\theta})}^2 \leq \epsilon$.
\end{definition}

The convergence of distributed first-order methods typically relies on standard conditions concerning the regularity of the global loss and the heterogeneity of local losses. In this paper, we adopt two broad assumptions that capture these aspects. These assumptions are imposed only on the honest workers, while Byzantine workers are left unconstrained and may act arbitrarily to disrupt training. We first introduce the following smoothness assumption on the honest workers’ global loss, which is standard even in centralized first-order machine learning methods~\cite{bottou2018optimization}.

\begin{assumption}{(Smoothness)}\label{assumption:smoothness}
    The average loss function $\mathcal{L}_\mathcal{H}: \mathbb{R}^d \rightarrow \mathbb{R}$ is $L$-Lipschitz smooth, i.e., there exists $L \geq 0$ such that for all $\theta, \theta' \in \mathbb{R}^d$,
    $$\norm{\nabla\mathcal{L}_\mathcal{H}(\theta) - \nabla\mathcal{L}_\mathcal{H}(\theta')} \leq L \norm{\theta - \theta'} .$$
\end{assumption}

To model data heterogeneity across honest workers, we rely on the standard notion of $(G,B)$-gradient dissimilarity~\cite{karimireddy2020scaffold, allouah2023robust}.
\begin{assumption}{($(G,B)$-gradient dissimilarity)}\label{assumption:gradientGB}
    There exist two non-negative constants $G$ and $B$, such that, for all $\theta \in \mathbb{R}^d$, we have:
        $$\textstyle \frac{1}{|\mathcal{H}|}\sum\limits_{i\in \mathcal{H}} \norm{\lossworker{i}{}- \lossworker{\mathcal{H}}{}}^2 \leq G^2 + B^2\norm{\lossworker{\mathcal{H}}{}}^2.$$
\end{assumption}


DGD with simple averaging as the server-side aggregation rule, $g^t = \tfrac{1}{n} \sum_{i=1}^n g_i^t$, does not achieve $(f,\epsilon)$-resilience~\cite{krum}.  Resilience requires limiting the influence of Byzantine workers, which can be achieved, for example, by replacing averaging with a \emph{robust aggregation} rule~\cite{allouah2023fixing}.
\begin{definition}[$(f,\kappa)$-Robustness]
\label{def:robust_aggregation}
An aggregation rule $F : (\mathbb{R}^d)^n \to \mathbb{R}^d$ is said to be $(f,\kappa)$-robust if, for any set of $n$ vectors $\{x_1,\ldots,x_n\} \subseteq \mathbb{R}^d$ and any subset $S \subseteq [n]$ of size $n-f$, it holds that
\[
    \norm{F(x_1,\ldots,x_n) - \overline{x}_S}^2 \leq \frac{\kappa}{|S|}\sum_{i \in S}\norm{x_i - \overline{x}_S}^2,
\]
where $\overline{x}_S \coloneqq \tfrac{1}{|S|} \sum_{i \in S} x_i$.
\end{definition}

The above definition encompasses several robust aggregation rules, including coordinate-wise trimmed mean (CWTM), and geometric median (GeoMed)~\cite[Chapter 4]{guerraoui2024robust}, which are $(f, \kappa)$-robust for different values of $\kappa$, where $\kappa$ typically depends on $f$ and $n$, and increases with $f$.

We call \textit{robust DGD} a variant of DGD where averaging is replaced by an $(f,\kappa)$-robust aggregation rule. Specifically, at each iteration $t$, the server updates the parameter vector as $\theta^t = \theta^{t-1} - \gamma R^t$, where $R^t \coloneqq F(g^t_1, \ldots, g^t_n)$. Under Assumptions~\ref{assumption:smoothness} and~\ref{assumption:gradientGB}, it has been shown~\cite{allouah2023robust} that if $\kappa = \frac{f}{n-2f}$, robust DGD achieves $(f,\epsilon)$-convergence with
$\epsilon \in O\!\left( \tfrac{f}{n - (2 + B^2) f}\right)$,
which matches the lower bound on $\epsilon$ established in~\cite{allouah2023fixing}. When $n \geq (2+\nu)f$ for some $\nu > 0$, robust aggregation rules such as CWTM and GeoMed (when combined with nearest-neighbor mixing, NNM) achieve $\kappa \in \Theta\!\left(\tfrac{f}{n-2f}\right)$~\cite{allouah2023fixing}.


\paragraph{Integrating Robustness and Communication Compression.}
Achieving both communication compression, to alleviate bandwidth constraints, and robustness, to mitigate the impact of Byzantine workers, is highly desirable but nontrivial. Simply adding compression schemes such as RandK to robust DGD is insufficient; more sophisticated algorithms are required.
Some works attempt this integration but under restrictive assumptions: SignSGD with majority vote~\cite{bernstein2018signsgd} requires honest gradients to follow identical unimodal distributions, while norm filtering with error feedback under biased compression~\cite{ghosh2021communication} assumes subexponential gradients. The current state-of-the-art method addressing this joint goal is \dasha~\cite{rammal2024communication}. Its key ingredient is \emph{momentum variance reduction}, a scheme originally proposed in~\cite{cutkosky2019momentum} to attain optimal convergence rates for SGD on smooth nonconvex functions.
\dasha, however, comes with notable memory costs. Each worker must store four vectors of dimension $d$ (the number of model parameters): the current parameter vector, the current gradient, the gradient from the previous communication round, and an additional pseudogradient. On the server side, the memory requirement is $2(n+1)d$, as it must store each worker’s pseudogradient and current update, along with the global model and the robust aggregate. Furthermore, under RandK sparsification, each worker uses local masks and therefore transmits $k$ values together with the $k$ corresponding indices (or, equivalently, $d$ bits). Finally, the resilience guarantees of \dasha rely not only on Assumptions~\ref{assumption:smoothness} and~\ref{assumption:gradientGB}, but also on the additional condition of \emph{bounded global Hessian variance}~\cite{szlendak2021permutation} (reported as Assumption~\ref{assumption:ghv} in Appendix~\ref{app:ghv}). Unless each worker local loss  is $L$-smooth, it is easy to construct an example when the global loss is smooth, but bounded global Hessian variance is not satisfied (see Appendix~\ref{app:ghv}). 

As we show in the following, our algorithm RoSDHB overcomes these limitations of \dasha.

\section{Our Algorithm: Robust Sparsified Distributed Heavy-Ball (RoSDHB)}\label{sec:algo}

\noindent In this section, we present our algorithm Robust Sparsified Distributed Heavy-Ball (RoSDHB). The pseudocode is provided in Algorithm~\ref{alg:rghb}, while Section~\ref{sec:desc_RoSDHB} details its key components and complexity. Section~\ref{sec:analysis_RoSDHB} then presents the convergence analysis of RoSDHB and compares it with state-of-the-art results, including \dasha.



\begin{algorithm*}
    \caption{\textbf{Ro}bust \textbf{S}parsified \textbf{D}istributed \textbf{H}eavy-\textbf{B}all (\textbf{RoSDHB})}
    \textbf{Input:} Initial model $\theta^0 \in \R^d$ (chosen arbitrarily), total iterations $T \ge 1$, learning rate $\gamma$, robust aggregator $F$, momentum coefficient $\beta \in [0, 1)$, sparsification parameter $k$, 
    and, for each honest worker $w_i$: $m_i^0 = 0$.\\
  
    \vspace{5pt}
    \textbf{For} $t = 1$ to $T$:
    \begin{algorithmic}[1]
        \State \textbf{Server} generates $\mask(k) \in \{0, 1\}^d$ with $k$ randomly chosen elements set to $1$ and rest to $0$.  
        \State \textbf{Server} broadcasts current model $\theta^{t-1}$ and $\mask(k)$ to all the workers. \label{step:server}
        \State For each \textbf{honest worker} $i$ do:
        \begin{enumerate}[wide,labelindent=0pt, label=\alph*.]
            \item Compute local gradient: $g_i^t = \lossWorker{i}{t-1}$ .
            \item Send to the server: $\mathcal{C}_k\left( g_i^t\right) = \left( [g_i^t]_{\ell_1}, \ldots,[g_i^t]_{\ell_k} \right)$ where $\ell_1 < \ldots < \ell_k$, $[\mask(k)]_{\ell_{j}} = 1, \forall j \in [k]$ .
        \end{enumerate}

        \noindent \texttt{$\slash\slash$ A Byzantine worker $j$ can send arbitrary $k$ values in $\mathcal{C}_k\left( g_j^t\right)$. }
        
        \State \textbf{Server} computes for each worker $i$, : $\tilde{g}_i^t = \frac{d}{k} (g_i^t \odot \mask(k))$ using $\mathcal{C}_k\left( g_i^t\right)$. 
        \State \textbf{Server} computes momentum for each worker $i$: $m_i^t = \beta m_i^{t-1} + (1-\beta)\tilde{g}_i^t$ .

        \State \textbf{Server} aggregates the momenta:
        $R^t = F(m_1^t, \ldots ,m_n^t)$ .

        \State \textbf{Server} updates the model: $\theta^{t} = \theta^{t-1} - \gamma R^t $ ,   
        
    \end{algorithmic} 
    \textbf{\textbf{Server} returns} $\hat{\theta}$, chosen uniformly at random from $\{\theta^0, \ldots, \theta^{T-1}\}$.
    \label{alg:rghb}

\end{algorithm*}

\subsection{Description of RoSDHB}\label{sec:desc_RoSDHB}

In each iteration $t \in \{1, \ldots, T\}$, with $T \ge 1$ denoting the total number of iterations, the server broadcasts to all workers (i) the current model parameter $\theta^{t-1}$ and (ii) a RandK sparsification mask $\mask(k) \in \{0,1\}^d$, obtained by setting $k$ randomly chosen entries to $1$ and the rest to $0$. The initial model $\theta^0$ is chosen arbitrarily by the server. Upon receiving $\theta^{t-1}$ and the mask, each honest worker $i$ computes its local gradient $g_i^t = \nabla \mathcal{L}_i(\theta^{t-1})$ and sends back the $k$ coordinates corresponding to the nonzero entries of the mask, i.e., $\mathcal{C}_k(g_i^t) = ([g_i^t]_{\ell_1}, \ldots, [g_i^t]_{\ell_k})$ with $\ell_1 < \cdots < \ell_k$ and $[\mask(k)]_{\ell_j} = 1$ for all $j \in [k]$. In contrast, a Byzantine worker $j$ may transmit any arbitrary set of $k$ values in $\mathcal{C}_k(g_j^t)$.
From the received messages, the server reconstructs an unbiased estimator of each local gradient as
$$\textstyle \tilde{g}^t_i = \frac{d}{k} \left( g_i^t \odot \mask(k)\right).$$
The server then updates the momentum vector $m_i^t$ of each worker as
$$\textstyle m_i^t = \beta m_i^{t-1} + (1-\beta)\tilde{g}_i^t,$$
with momentum coefficient $\beta \in [0,1)$ and initialization $m_i^0 = 0$. Next, the server applies the robust aggregation rule $F$ to obtain $R^t = F(m_1^t, \ldots, m_n^t)$, and updates the model as
$$\theta^{t} = \theta^{t-1} - \gamma R^t.$$

After $T$ iterations, the server outputs $\hat{\theta}$, chosen uniformly at random from ${\theta^0, \ldots, \theta^{T-1}}$. The use of momentum vectors in the update rule mirrors Polyak’s momentum method, also known as the heavy-ball method, which motivates the name of our algorithm.

 \paragraph{RoSDHB complexity.} RoSDHB is lightweight in terms of computation, memory, and communication. Each honest worker computes one local gradient per iteration, incurring the same computational cost as in standard DGD. On the memory side, each worker maintains only two $d$-dimensional vectors—the current parameter vector and gradient—in contrast to \dasha, which requires four such vectors. The server stores the global model together with one momentum vector per worker, for a total memory footprint of $2(n+1)d$, identical to \dasha. Communication from workers to the server is slightly reduced: with a global mask, each worker transmits only $k$ gradient values per iteration without sending the corresponding indices, whereas \dasha with RandK requires transmitting both $k$ values and their indices (or equivalently $d$ bits). This saving comes at the cost of distributing the mask from the server to the workers. However, 
 this overhead of transmitting masks can be made negligible if they are generated by a random number generator shared between the server and the workers, initialized with a common seed, so that only the initial seed must be communicated. Overall, RoSDHB incurs strictly lower memory and worker-to-server communication overhead than \dasha, while preserving the same computational complexity.

\subsection{Analysis of RoSDHB}\label{sec:analysis_RoSDHB}





We prove the following theorem on the convergence of RoSDHB in Appendix \ref{app:glorhb}.

\begin{theoremMain}\label{thm:rosdhb}
    Under Assumptions~\ref{assumption:smoothness} and~\ref{assumption:gradientGB}, Algorithm~\ref{alg:rghb} with an $(f,\kappa)$-robust aggregation rule $F$ such that $\kappa B^2 \leq \tfrac{1}{7}$, a learning rate $\gamma \leq \tfrac{k}{d c L}$ (with $c = 23200$), and a momentum coefficient $\beta = \sqrt{1 - 24\gamma L}$, satisfies the following guarantee
    \begin{align*}
        \expect{\norm{\nabla \mathcal{L}_\mathcal{H}(\hat{\theta})}^2} \leq \frac{45 \left(\mathcal{L}_\mathcal{H}(\theta^0) - \mathcal{L}_\mathcal{H}^*\right)}{\gamma T ( 1- \kappa B^2)} + \frac{216 \kappa \, G^2}{1 - \kappa B^2},
    \end{align*}
    where $\mathcal{L}_\mathcal{H}^* = \min_{\theta \in \R^d} \mathcal{L}_\mathcal{H}(\theta)$.
\end{theoremMain}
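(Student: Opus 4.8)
The plan is to run a potential (Lyapunov) argument that simultaneously tracks three quantities along the trajectory: the objective gap $\mathcal{L}_\mathcal{H}(\theta^{t}) - \mathcal{L}_\mathcal{H}^*$, the \emph{momentum drift} $\delta^t := \overline{m}^t_\mathcal{H} - \nabla\mathcal{L}_\mathcal{H}(\theta^{t-1})$ of the honest average momentum $\overline{m}^t_\mathcal{H} := \frac{1}{|\mathcal{H}|}\sum_{i\in\mathcal{H}} m_i^t$ away from the true honest gradient, and the \emph{momentum dispersion} $\rho^t := \frac{1}{|\mathcal{H}|}\sum_{i\in\mathcal{H}}\norm{m_i^t - \overline{m}^t_\mathcal{H}}^2$ of the honest momenta. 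I would first write the $L$-smoothness descent inequality for the update $\theta^t = \theta^{t-1} - \gamma R^t$, then split $R^t = \nabla\mathcal{L}_\mathcal{H}(\theta^{t-1}) + \delta^t + (R^t - \overline{m}^t_\mathcal{H})$, bounding the aggregation error by $\norm{R^t - \overline{m}^t_\mathcal{H}}^2 \leq \kappa\rho^t$ via $(f,\kappa)$-robustness. After Young's inequality on the resulting inner products, this gives a one-step bound of the shape $\mathcal{L}_\mathcal{H}(\theta^t) \leq \mathcal{L}_\mathcal{H}(\theta^{t-1}) - \tfrac{\gamma}{2}\norm{\nabla\mathcal{L}_\mathcal{H}(\theta^{t-1})}^2 + O(\gamma)\norm{\delta^t}^2 + O(\gamma\kappa)\rho^t$, up to the lower-order $\tfrac{L\gamma^2}{2}\norm{R^t}^2$ contribution.

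Next I would derive the two recursions. For the dispersion, the crucial observation is that \emph{global} sparsification uses one common mask, so $\tilde{g}_i^t - \overline{\tilde{g}}^t_\mathcal{H} = \tfrac{d}{k}\big((g_i^t - \overline{g}^t_\mathcal{H})\odot\mask(k)\big)$; hence both its conditional mean $g_i^t - \overline{g}^t_\mathcal{H}$ and its conditional second moment $\alpha\norm{g_i^t - \overline{g}^t_\mathcal{H}}^2$ are controlled purely by the cross-worker \emph{dissimilarity}, which $(G,B)$-gradient dissimilarity bounds by $G^2 + B^2\norm{\nabla\mathcal{L}_\mathcal{H}(\theta^{t-1})}^2$. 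Expanding $m_i^t - \overline{m}^t_\mathcal{H} = \beta(m_i^{t-1}-\overline{m}^{t-1}_\mathcal{H}) + (1-\beta)(\tilde{g}_i^t - \overline{\tilde{g}}^t_\mathcal{H})$, separating the conditionally zero-mean noise, and using convexity of $\norm{\cdot}^2$ gives a contraction $\expect{\rho^t} \leq \beta\,\expect{\rho^{t-1}} + C_\rho\big(G^2 + B^2\expect{\norm{\nabla\mathcal{L}_\mathcal{H}(\theta^{t-1})}^2}\big)$ with $C_\rho = O(1-\beta)$. This is exactly where global beats local sparsification: with independent masks the noise second moment would scale with $\norm{g_i^t}^2$ rather than the dissimilarity, injecting a much larger multiple of $\norm{\nabla\mathcal{L}_\mathcal{H}}^2$. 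For the drift, writing $\delta^t = \beta\delta^{t-1} + \beta(\nabla\mathcal{L}_\mathcal{H}(\theta^{t-2})-\nabla\mathcal{L}_\mathcal{H}(\theta^{t-1})) + (1-\beta)(\overline{\tilde{g}}^t_\mathcal{H} - \nabla\mathcal{L}_\mathcal{H}(\theta^{t-1}))$, taking expectations so the zero-mean noise cross term vanishes, bounding the gradient difference by $L\gamma\norm{R^{t-1}}$ via smoothness, and using $\expect{\norm{\overline{\tilde{g}}^t_\mathcal{H} - \nabla\mathcal{L}_\mathcal{H}(\theta^{t-1})}^2}\leq(\alpha-1)\norm{\nabla\mathcal{L}_\mathcal{H}(\theta^{t-1})}^2$ yields $\expect{\norm{\delta^t}^2}\leq \beta^2(1+a)\expect{\norm{\delta^{t-1}}^2} + O(\gamma^2 L^2)\expect{\norm{R^{t-1}}^2} + (1-\beta)^2(\alpha-1)\expect{\norm{\nabla\mathcal{L}_\mathcal{H}(\theta^{t-1})}^2}$.

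I would then form the potential $V^t = \mathcal{L}_\mathcal{H}(\theta^t) - \mathcal{L}_\mathcal{H}^* + c_1\norm{\delta^t}^2 + c_2\rho^t$ and combine the three inequalities, choosing $c_1,c_2$ so the decay of $\delta^t$ and $\rho^t$ absorbs the $+O(\gamma)\norm{\delta^t}^2$ and $+O(\gamma\kappa)\rho^t$ fed in by the descent step. The two design choices in the statement are precisely what closes the bookkeeping: $\beta = \sqrt{1-24\gamma L}$ makes $1-\beta^2 = 24\gamma L$ and $1-\beta = \Theta(\gamma L)$, so the noise terms, carrying factors $(1-\beta)$ or $(1-\beta)^2$, are of order $\gamma L$ or smaller; and $\gamma\leq \frac{k/d}{cL} = \frac{1}{c\alpha L}$ ensures the iterate-dependent noise multiple $(1-\beta)^2(\alpha-1)\approx \gamma^2L^2\alpha$ is of order $\gamma L/c \ll \gamma$, so it cannot overwhelm the main $-\tfrac{\gamma}{2}\norm{\nabla\mathcal{L}_\mathcal{H}}^2$ descent. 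The condition $\kappa B^2 \leq \tfrac{1}{25}$ controls the remaining $\norm{\nabla\mathcal{L}_\mathcal{H}(\theta^{t-1})}^2$ contribution entering through $c_2\kappa B^2\rho^t$, guaranteeing a net coefficient of the form $-\Theta(\gamma)(1-\kappa B^2)\norm{\nabla\mathcal{L}_\mathcal{H}(\theta^{t-1})}^2$ while leaving a residual of order $\kappa G^2$. Telescoping $\expect{V^t}\leq \expect{V^{t-1}} - \Theta(\gamma)(1-\kappa B^2)\expect{\norm{\nabla\mathcal{L}_\mathcal{H}(\theta^{t-1})}^2} + O(\gamma\kappa G^2)$ from $t=1$ to $T$, using $V^T\geq 0$ and $\delta^0=\rho^0=0$, then averaging over the uniformly random output $\hat\theta$ turns $\tfrac{1}{T}\sum_t\expect{\norm{\nabla\mathcal{L}_\mathcal{H}(\theta^{t-1})}^2}$ into $\expect{\norm{\nabla\mathcal{L}_\mathcal{H}(\hat\theta)}^2}$, producing the claimed rate.

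The main obstacle is the coupled, iterate-dependent noise in the drift and dispersion recursions: unlike the classical bounded-variance momentum analysis where the noise contributes a fixed $\sigma^2$ that telescopes cleanly, here the sparsification variance scales with $\norm{\nabla\mathcal{L}_\mathcal{H}(\theta^{t-1})}^2$, so both tracked error quantities re-inject the very quantity the descent is trying to drive to zero. Closing the loop requires choosing $c_1,c_2$ and the learning rate jointly so that the aggregate coefficient on $\norm{\nabla\mathcal{L}_\mathcal{H}(\theta^{t-1})}^2$ across all three tracked quantities is strictly negative; this is the role of the new Lyapunov function and the source of the large numerical constant $c=23200$.
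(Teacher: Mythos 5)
Your proposal follows essentially the same route as the paper's proof: the same Lyapunov function combining the objective, the momentum deviation $\delta^t$, and the per-worker momentum dispersion (the paper's $\Upsilon^{t-1}_\mathcal{H}$, weighted by $z=\tfrac{1}{8L}$ and $z'=\tfrac{\kappa}{4L}$), the same three ingredients (smoothness descent with the aggregation error bounded by $\norm{R^t-\overline{m}^t_\mathcal{H}}^2 \le \kappa\,\Upsilon^t_\mathcal{H}$, the shared-mask variance bounds that scale with the $(G,B)$-dissimilarity and with $\left(\tfrac{d}{k}-1\right)\norm{\nabla\mathcal{L}_\mathcal{H}}^2$, and the deviation recursion), and the same role for $\beta=\sqrt{1-24\gamma L}$, $\gamma\le\tfrac{k/d}{cL}$, and $\kappa B^2\le\tfrac{1}{25}$. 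The one detail you gloss over is the telescoping base: since $m_i^0=0$ the first deviation obeys $\expect{\norm{\delta^1}^2}\approx\beta^2\norm{\nabla\mathcal{L}_\mathcal{H}(\theta_0)}^2$ rather than vanishing, and the paper absorbs this into the initial potential via $\norm{\nabla\mathcal{L}_\mathcal{H}(\theta_0)}^2\le 2L\left(\mathcal{L}_\mathcal{H}(\theta_0)-\mathcal{L}_\mathcal{H}^*\right)$, which affects only the numerical constant $45$ and not the structure of your argument.
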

\begin{proof}[Proof sketch]
    The proof relies on the following novel Lyapunov function:
$$V^t \coloneqq \mathbb{E}\!\left[2\mathcal{L}_\mathcal{H}(\theta^{t-1}) + \tfrac{1}{8L}\lVert\delta^t\rVert^2 + \tfrac{\kappa}{4L}\Upsilon^{t-1}_\mathcal{H}\right].$$ 
Here $\delta^t \coloneqq \overline{m}^t - \nabla \mathcal{L}_\mathcal{H}(\theta^{t-1})$, with $\overline{m}^t \coloneq \tfrac{1}{|\mathcal{H}|} \sum_{i \in \mathcal{H}} m_i^t$, measures the deviation of the averaged momentum from the actual gradient. The term $\Upsilon^{t-1}_\mathcal{H} \coloneqq \tfrac{1}{|\mathcal{H}|} \sum_{i \in \mathcal{H}} \lVert m_i^{t-1} - \overline{m}_{\mathcal{H}}^{t-1}\rVert^2$ instead captures the drift of individual client momenta from their average. 
\end{proof}

Recall that $\alpha = d/k$ denotes the RandK compression ratio.
We obtain the following corollary.
\begin{corollaryMain}\label{co:rosdhb}
    In the same conditions of Theorem~\ref{thm:rosdhb},  let $\gamma = \tfrac{1}{\alpha c L}$, then
        \begin{align}            \label{eq:corollary_bound}
         \textstyle   \expect{\norm{\nabla \mathcal{L}_\mathcal{H}(\hat{\theta})}^2} \leq \mathcal{O}\! \left( \frac{\alpha}{T ( 1- \kappa B^2)} + \frac{\kappa G^2}{1 - \kappa B^2} \right).
        \end{align}
\end{corollaryMain}

\paragraph{On the tightness of our result.} 
When $\alpha = 1$ (i.e., no compression), Corollary~\ref{co:rosdhb} recovers the same bound as the SOTA result without compression~\cite{allouah2023robust}. Hence, the reasoning in Sec.~5.1 of~\cite{allouah2023robust} directly implies that the convergence bound in Corollary~\ref{co:rosdhb} is optimal under the best possible robust aggregation rule, namely one with $\kappa = f/(n-2f)$. 

\paragraph{Interplay between compression and robustness.}
The corollary also highlights the interplay between compression and robustness. When $B=0$, it separates the two effects in a manner analogous to what was observed in~\cite{allouah2023fixing} under $(G,0)$-heterogeneity: one term reflects the impact of compression, and the other the impact of Byzantine robustness. By contrast, when $B>0$, the first term in the bound shows that compression and robustness no longer decouple but instead amplify each other’s effect: stronger compression (higher $\alpha$) increases the influence of Byzantine robustness, and a larger fraction of Byzantine workers exacerbates the impact of compression.

\noindent \textbf{Comparison with \dasha.} The SOTA method addressing both robustness and compression is \dasha~\cite{rammal2024communication}.
Our algorithm’s convergence guarantee in Theorem~\ref{thm:rosdhb} matches that of \dasha, but under fewer assumptions. In particular, our analysis requires only Lipschitz smoothness of the honest-average loss, whereas \dasha further assumes \emph{bounded global Hessian variance} for the honest workers’ losses.
For a fair comparison, we adapt the guarantee reported in~\cite{rammal2024communication} to the gradient–descent setting (with $n-f \ge 2$) and evaluate it in the most favorable regime for \dasha—namely, setting the Hessian-variance term to zero and omitting some terms. Let $\tilde{\theta}$ denote its output after $T$ iterations. The resulting \emph{best-case} upper bound, derived in Appendix~\ref{app:dasha}, is:
\begin{align*}
\textstyle
    \expect{\norm{\nabla \mathcal{L}_\mathcal{H}\left(\tilde{\theta}\right)}^2} \lesssim \mathcal{O} \left(\frac{1+4(\alpha-1) \left(\frac{1}{\sqrt{n}}+\sqrt{\kappa}\right)}{T(1-2\kappa B^2) } + \frac{\kappa G^2}{1-2\kappa B^2} \right).
\end{align*}
Comparing the above rate with the bound for RoSDHB in Corollary~\ref{co:rosdhb}, two differences stand out. First, \dasha has a worse dependence on the robustness parameter $\kappa$: both terms are divided by $(1-2\kappa B^2)$ instead of $(1-\kappa B^2)$ as in RoSDHB, and the vanishing term carries an additional $\sqrt{\kappa}$ factor. Since $\kappa \ge f/(n-2f)$, this penalty grows as the Byzantine fraction increases. Second, the impact of the compression ratio $\alpha$ is milder for \dasha when $f/n$ is small, because $\alpha$ appears only through $(\alpha-1)(1/\sqrt{n}+\sqrt{\kappa})$, whereas for RoSDHB the $1/T$ term scales linearly with $\alpha$. Consequently, given that $\kappa \in \mathcal{O}(f/n)$ for state-of-the-art robust aggregation rules~\cite{allouah2023fixing}, \dasha incurs a lighter compression penalty in the small-$f$, large-$n$ regime.
\section{Empirical Evaluation}\label{sec:experiments}


\subsection{Experimental setup}
\paragraph{Datasets and Models.} 
 We evaluate our method on two standard image classification benchmarks: MNIST and CIFAR-10. The MNIST dataset~\cite{mnist} consists of grayscale images of handwritten digits, with 60,000 training and 10,000 test samples across 10 digit classes. CIFAR-10~\cite{cifar} contains 32×32 RGB images, comprising 50,000 training and 10,000 test samples over 10 object categories. For MNIST, we use a lightweight neural network with one convolutional layer followed by a fully connected layer, totaling about 12K parameters. For CIFAR-10, we employ a deeper ResNet-18 model~\cite{he2016deep}, with about 11M parameters.


\paragraph{Distributed system setup.} We consider a server-based distributed learning scenario with $10$ honest workers and $f$ Byzantine attackers. To model real-world data heterogeneity, we distribute samples with the same label across clients following a symmetric Dirichlet distribution with parameter $w=5$~\cite{wang2020federated}. The server applies the coordinate-wise trimmed mean (CWTM) robust aggregation method~\cite{yin2018byzantine}, while clients compress their updates using the RandK sparsifier~\cite{beznosikov2023biased}, using a sampling ratio $1/\alpha = k/d$ that retains only $k$ out of $d$ coordinates.
We evaluate resilience against two attacks: the Fall of Empires (FOE)~\cite{xie2020fall} and A Little Is Enough (ALIE)~\cite{baruch2019alittle}.
Detailed descriptions of CWTM, FOE and ALIE are provided in Appendix~\ref{app:attacks_defense}.

\paragraph{Hyperparameters.} Learning rates are tuned via grid search for each compression ratio in the non-adversarial setting ($f=0$), for both RoSDHB and \dasha. Refer to Appendix~\ref{app:dasha} for the full description of \dasha.
In the MNIST experiments, each honest worker computes the full gradient,  ensuring consistency with the theoretical analysis of RoSDHB.
For the more memory-intensive training of ResNet-18 on CIFAR-10, we use mini-batch gradient updates, as is standard in practice. RoSDHB honest workers compute mini-batch gradient updates with $b=128$ at each iteration. In its general form, \dasha allows each client to perform a full-gradient update with probability $p$ and a mini-batch update with probability $1-p$. We set the mini-batch size to $b=128$ and follow the configuration in~\cite{rammal2024communication},  by setting  $p=b/m$, where $m$ is the local dataset size. Appendix~\ref{app:experimental details} provides all hyper-parameters and computing resources.


For MNIST, we evaluate scenarios with $f \in \{1,3,5,9\}$ Byzantine attackers and sampling ratios $1/\alpha = k/d \in \{0.05, 0.1, 0.3, 0.5, 1.0\}$ for $250$ communication rounds. For CIFAR-10, we evaluate $f \in \{1,3,5\}$ and $k/d \in \{0.25, 0.5, 0.75, 1.0\}$ for $300$ communication rounds. For each combination of $f$ and $k/d$, we perform $5$ independent runs and report the mean along with its standard error. Due to space limitations, we only report $f \in \{1,3\}$ scenarios employing FOE attack strategy; the full set of experiments is provided in Appendix~\ref{app:fullset_experiments}. In the figures, \dasha is simply indicated as SOTA.

\begin{figure}[bth]
    \centering

    \begin{subfigure}[t]{\linewidth}
        \centering
        \includegraphics[width=\linewidth]{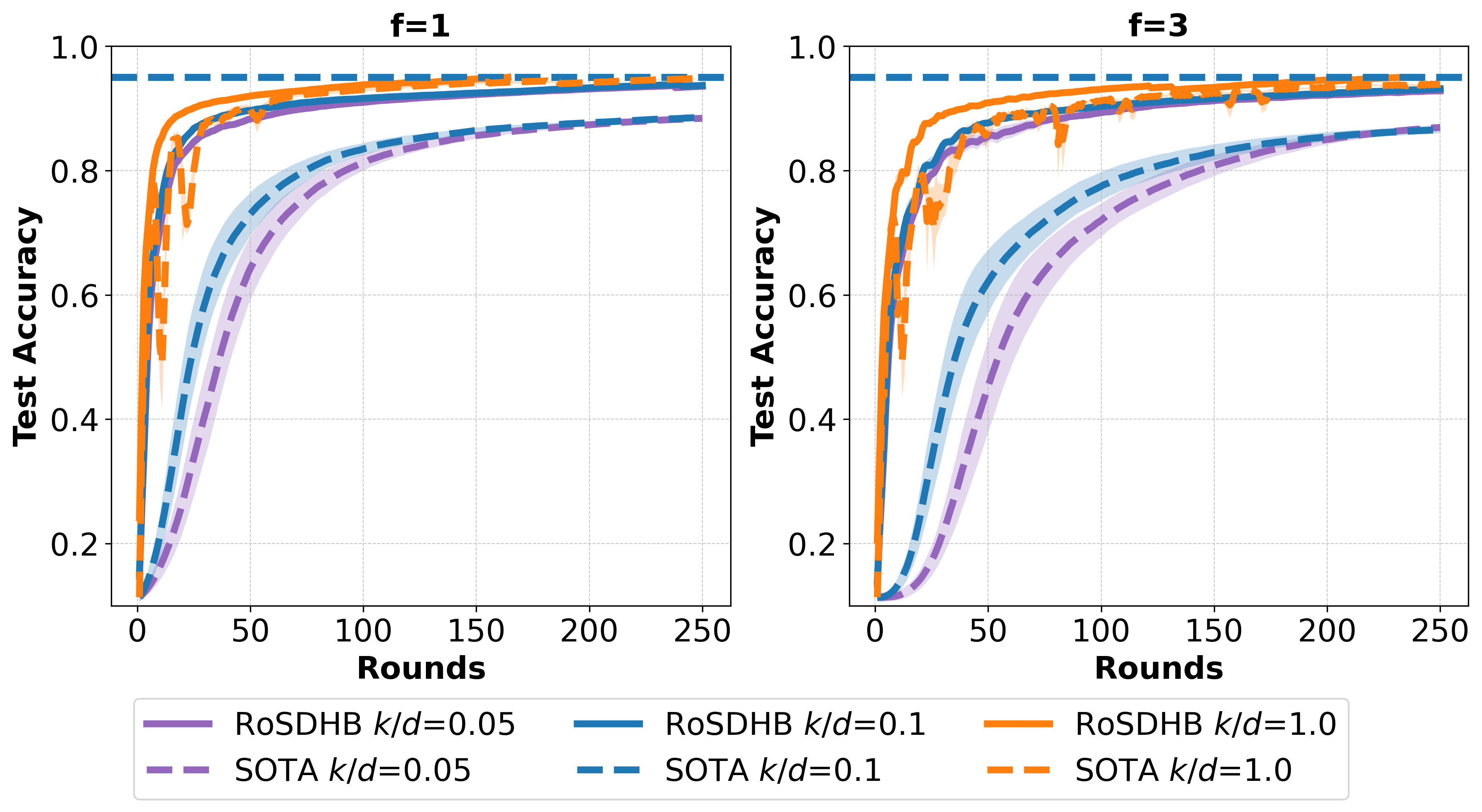}
        \caption{MNIST Dataset.}
        \label{fig:convMNIST}
    \end{subfigure}
    \hfill
    \begin{subfigure}[t]{\linewidth}
        \centering
        \includegraphics[width=\linewidth]{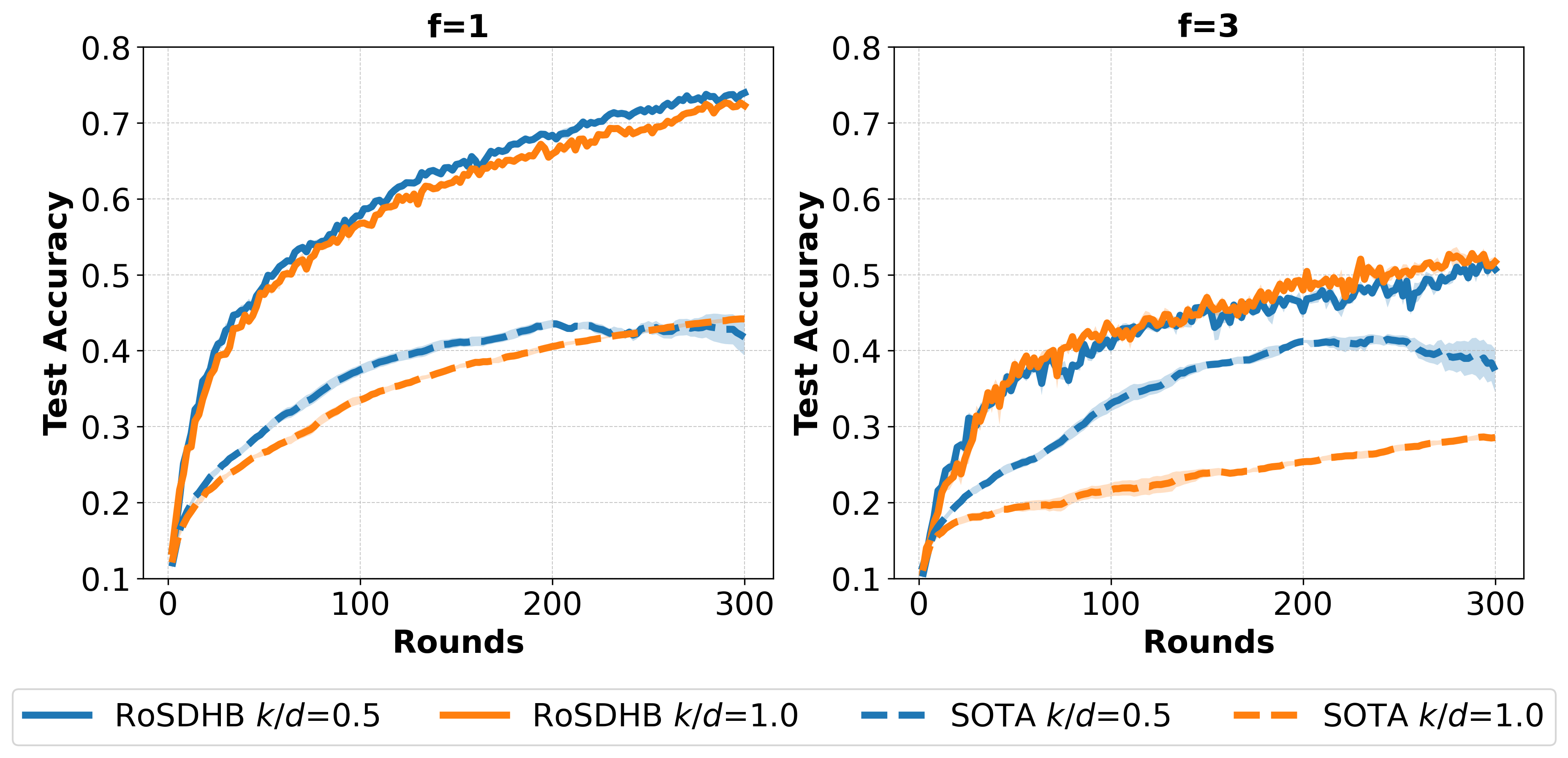}
        \caption{CIFAR-10 Dataset.}
        \label{fig:convCIFAR}
    \end{subfigure}

    \caption{The convergence plots of RoSDHB and \dasha (SOTA) under varying number of Byzantine workers $f$ and sampling ratios $k/d$.}
    \label{fig:convergence}
    \vspace{-10pt}
\end{figure}

\subsection{Empirical results}

\paragraph{Stable and fast convergence.} Figure~\ref{fig:convergence} illustrates the test accuracy convergence of our method RoSDHB (solid lines) compared with  \dasha (dashed lines) under different numbers of Byzantine attackers $f$ and sampling ratios $k/d$ for both MNIST and CIFAR-10 datasets. Across all experimental settings, RoSDHB exhibits faster and more stable convergence than \dasha. 
For instance, for $k/d \le 0.1$,
RoSDHB achieves over $92.5\%$ test accuracy on MNIST by the end of training, compared to only about $88.5\%$ for the  baseline.  On CIFAR-10, RoSDHB attains over $70\%$ test accuracy, whereas \dasha reaches only around $44\%$.
Moreover, the convergence curves of RoSDHB remain relatively close across different $k/d$ values, indicating that its performance is less sensitive to the sampling ratio than the baseline. 



\begin{figure}
    \centering
    \includegraphics[width=\linewidth]{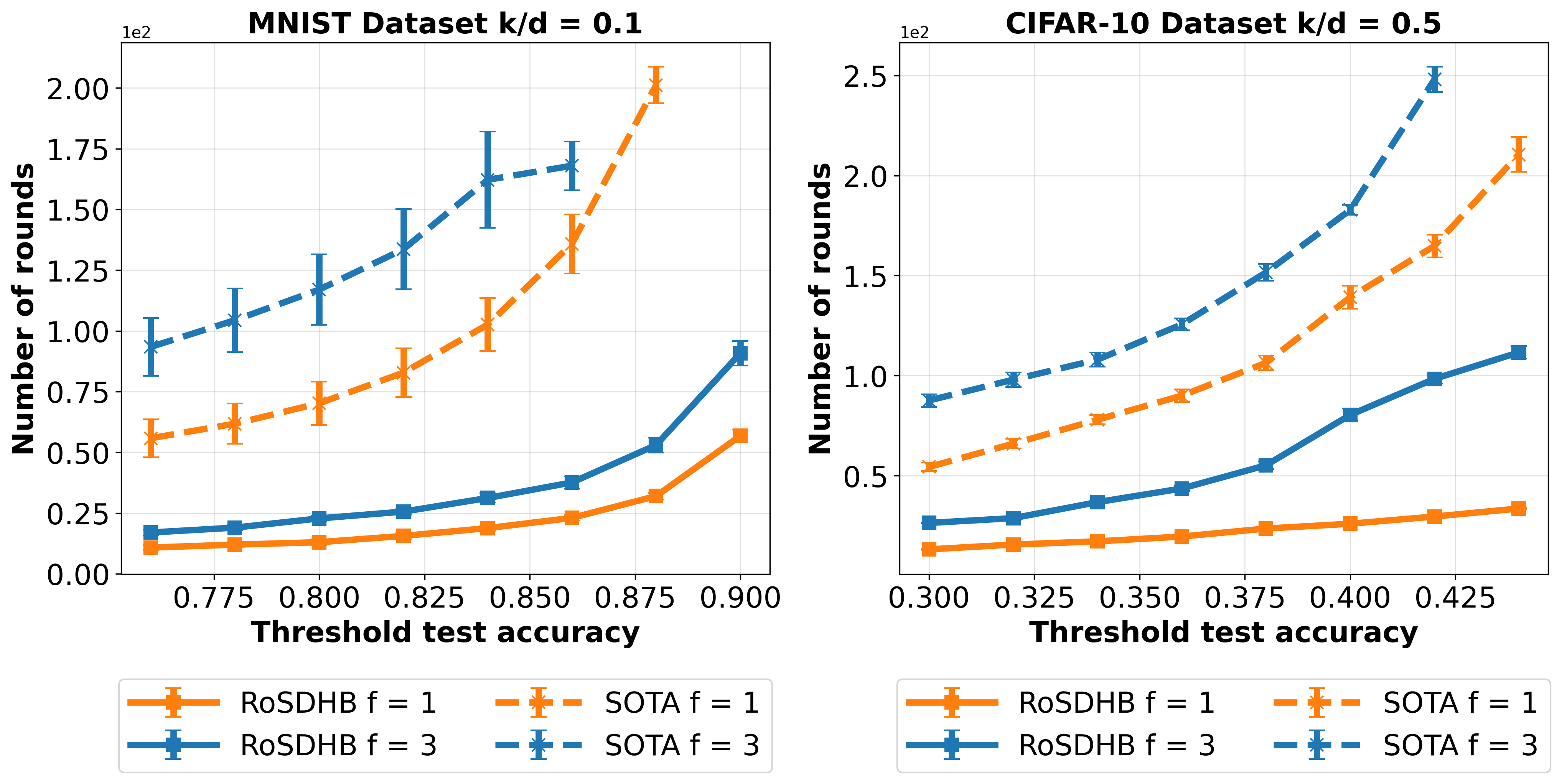}
    \caption{Comparison of \textsc{RoSDHB} and \dasha (SOTA) convergence time required to reach various test threshold accuracy under varying Byzantine workers $f\in\{1,3 \}$.} 
    \label{fig:varying_threshold}
\end{figure}

\paragraph{Speedup and robustness.} In Figure~\ref{fig:varying_threshold}, we present the number of communication rounds required to reach different test accuracies under varying numbers of Byzantine attackers $f$. As expected, the number of communication rounds grows with $f$, reflecting the theoretical dependence of the convergence rate on $\kappa$, which itself increases with the number of Byzantine attackers~$f$. 
Overall, RoSDHB achieves more than $4.5\times$ speedup on MNIST with $k/d=0.1$ and $2.3\times$ on CIFAR-10 with $k/d=0.5$, where speedup is measured as the reduction in communication rounds needed to reach a given test accuracy compared to \dasha. The exact speedup values on different threshold values are reported in Tables~\ref{tab:mnist_speedup_table} and~\ref{tab:cifar_speedup_table} in Appendix~\ref{app:further_results}.
Furthermore, RoSDHB shows stronger robustness to Byzantine attackers: with $f=3$, it still outperforms the baseline with only $f=1$. By contrast, the baseline with $f=3$ fails to reach the target thresholds of $88\%$ on MNIST and $44\%$ on CIFAR-10 within the allotted communication rounds.


\begin{figure}[h!]

    \begin{subfigure}[t]{\linewidth}
        \centering
        \includegraphics[width=\linewidth]{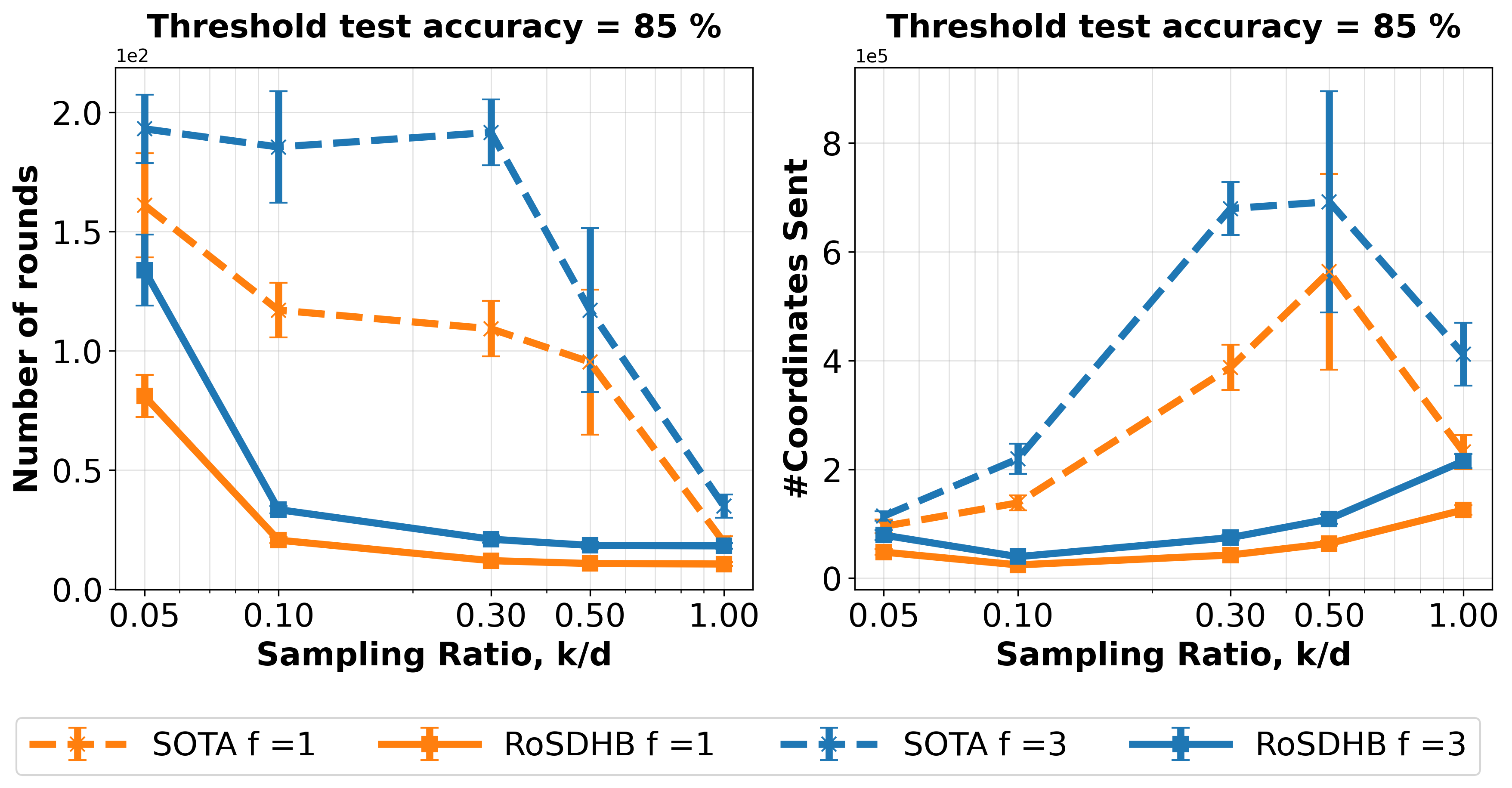}
        \caption{MNIST Dataset.}
        \label{fig:commMNIST}
    \end{subfigure}
    \hfill

    
    \begin{subfigure}[t]{\linewidth}
        \centering
        \includegraphics[width=\linewidth]{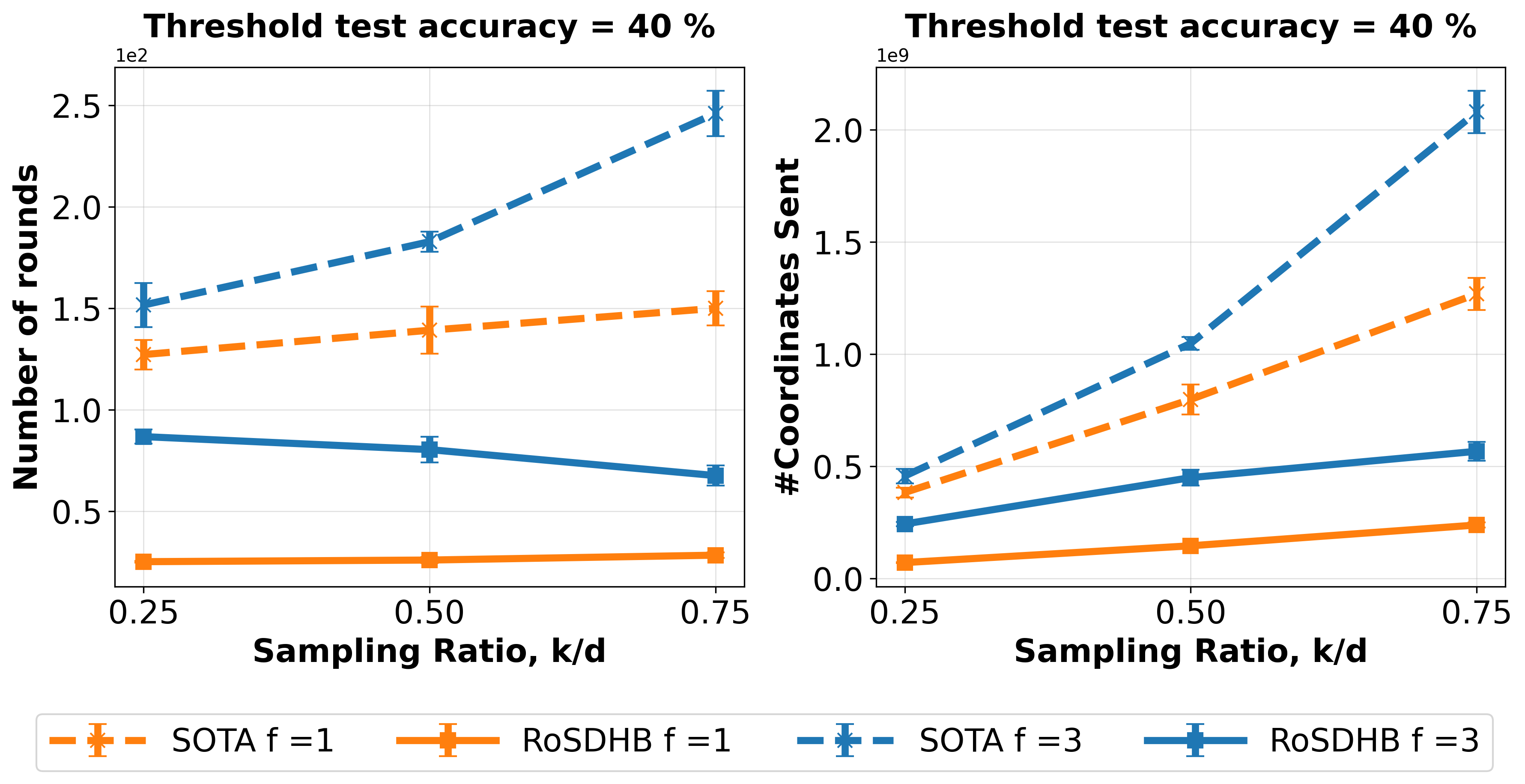}
        \caption{CIFAR-10 Dataset.}
        \label{fig:commCIFAR}
    \end{subfigure}

    \caption{Comparison of convergence time and communication cost between \textsc{RoSDHB} and \dasha (SOTA) under varying sampling ratios and number of Byzantine workers \(f \in \{1,3\}\).} 
    \label{fig:communication_mnist_cifar10}
\end{figure}

\paragraph{Compression \& communication efficiency.} Figure~\ref{fig:communication_mnist_cifar10} reports the performance of RoSDHB and the baseline, measured by the number of rounds and the number of coordinates transmitted per client to reach a target test accuracy, under different sampling ratios $k/d$. 
For RoSDHB, the number of rounds increases as the sampling ratio $k/d$ ($=1/\alpha$) decreases. The interplay between compression and robustness is also evident: at smaller sampling ratios, RoSDHB becomes more vulnerable to Byzantine attackers, leading to a comparatively larger increase in the number of rounds. Both effects are consistent with our theoretical analysis, in particular Corollary~\ref{co:rosdhb}. 
Interestingly, although smaller sampling ratios require more rounds, the reduced number of coordinates transmitted per round can completely offset this increase (Fig.~\ref{fig:commCIFAR}, right), or lead to non-monotonic behavior (Fig.~\ref{fig:commMNIST}, right).
By contrast, the results for \dasha are less aligned with its theoretical analysis in~\cite{rammal2024communication}.
While the expected trend holds for MNIST, with more rounds needed as the sampling ratio decreases, CIFAR-10 exhibits the opposite behavior. 
Furthermore, sensitivity to Byzantine attackers does not consistently increase at smaller sampling ratios.
In any case, RoSDHB achieves substantial communication savings compared to the  baseline across both datasets.
For instance, RoSDHB reduces the number of transmitted coordinates by $89\%$ relative to the baseline at $k/d=0.3$ on MNIST. More detailed communication saving statistics are reported in Tables~\ref{tab:convergence_time_coords_speedup_savings} and~\ref{tab:cifar10_comm_time_bits_savings} in Appendix~\ref{app:further_results}. 

\paragraph{Resilience to data heterogeneity.} Finally, we examine the performance of RoSDHB and \dasha under varying levels of data heterogeneity among honest workers. To this end, we control the Dirichlet distribution parameter $w$ for data partitioning, ranging from a highly heterogeneous regime ($w=0.5$) to a more moderate regime ($w=5$).
\begin{figure}[htb!]
  \begin{center}
    \includegraphics[width=0.9\linewidth]{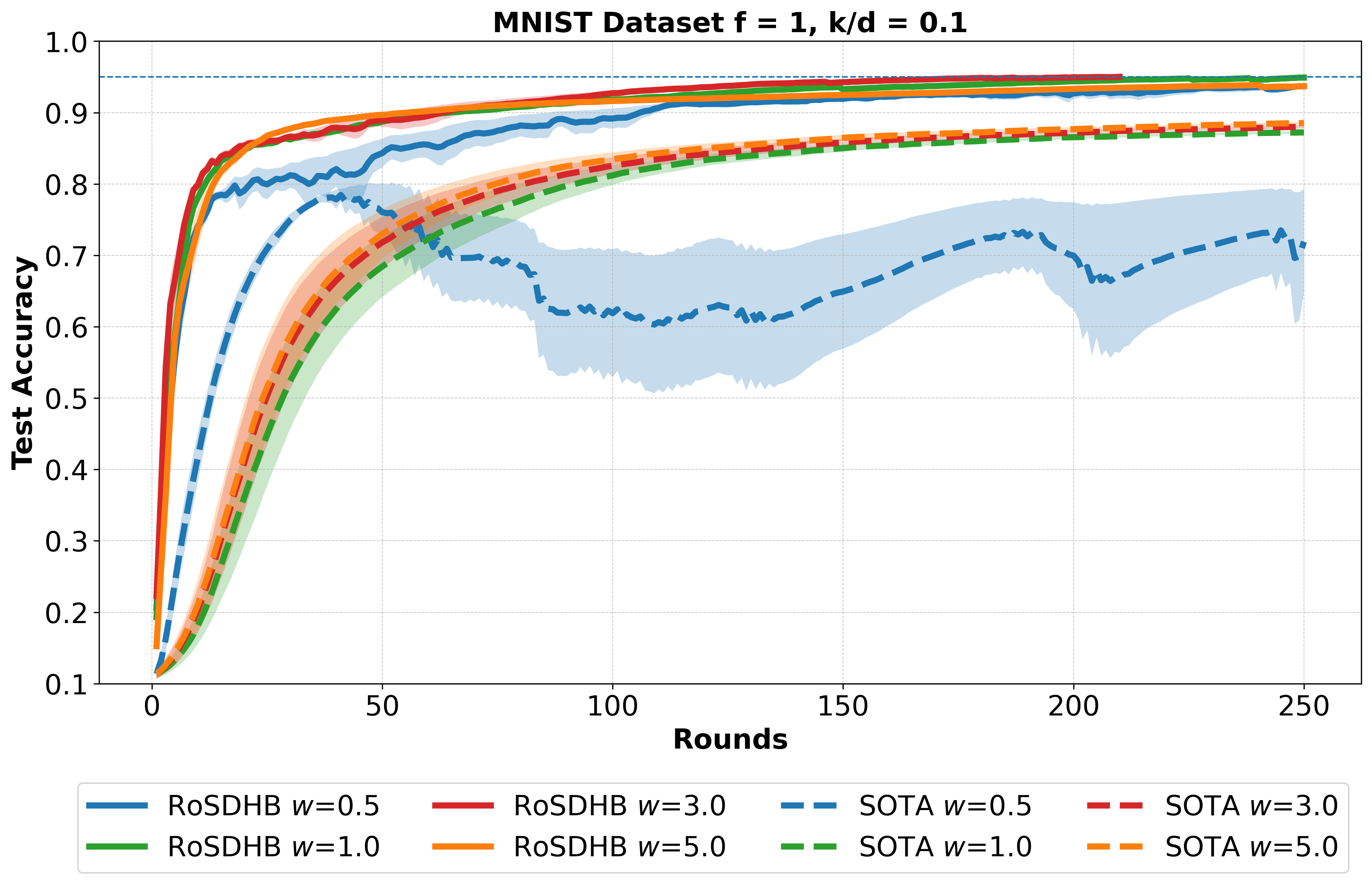}
  \end{center}
    \caption{The convergence plot of RoSDHB and \dasha (SOTA) on MNIST with $k/d=0.1$ under varying data heterogeneity.}
    \label{fig:data_hetero_effect}
\end{figure}
Figure~\ref{fig:data_hetero_effect} shows that RoSDHB consistently outperforms the  baseline across all levels of data heterogeneity. Moreover, RoSDHB demonstrates greater resilience to heterogeneity: in the highly heterogeneous regime ($w=0.5$), it achieves a test accuracy of $93.7\%$, whereas the  baseline reaches only $71.9\%$.

The relative performance of RoSDHB and \dasha is consistent under the ALIE attack (Appendix~\ref{app:fullset_experiments}). Overall, these results highlight the robustness and scalability of \textsc{RoSDHB} across a wide range of compression ratios, Byzantine worker fractions, data heterogeneity levels, and benchmark datasets. 
Our proposed method not only sustains high accuracy under compression and adversarial conditions but also achieves significant improvements in communication efficiency. 


\section{Concluding Remarks}\label{sec:conc_rw}
We have proposed RoSDHB, a new distributed learning algorithm that achieves Byzantine-robustness while enforcing communication compression. By design, RoSDHB reduces both memory and communication costs at the workers compared to the SOTA Byz-DASHA-PAGE~\cite{rammal2024communication}. Our theoretical analysis shows that the classical heavy-ball momentum, when applied after gradient sparsification and combined with a coordinated (global) sparsification scheme, is sufficient to guarantee tight Byzantine-robustness under standard assumptions. Beyond these guarantees, our experiments confirm that RoSDHB consistently outperforms Byz-DASHA-PAGE in practice, providing stronger resilience to Byzantine attackers together with substantial communication savings.

A natural future direction is to extend our framework to distributed stochastic gradient descent, moving beyond the full-gradient setting considered here. Other promising extensions include asynchronous training, adaptive sparsification masks, and more general compression schemes. Another open line of work is to explore the interplay of RoSDHB with privacy-preserving mechanisms such as differentially private Gaussian noise. Finally, motivated by federated learning, we plan to investigate the behavior of RoSDHB under multiple local SGD steps at each round.

\bibliographystyle{apalike}
\bibliography{references}

\newpage

\section*{Checklist}



\begin{enumerate}

  \item For all models and algorithms presented, check if you include:
  \begin{enumerate}
    \item A clear description of the mathematical setting, assumptions, algorithm, and/or model. [Yes]
    \item An analysis of the properties and complexity (time, space, sample size) of any algorithm. [Yes]
    \item (Optional) Anonymized source code, with specification of all dependencies, including external libraries. [Yes]
  \end{enumerate}

  \item For any theoretical claim, check if you include:
  \begin{enumerate}
    \item Statements of the full set of assumptions of all theoretical results. [Yes]
    \item Complete proofs of all theoretical results. [Yes]
    \item Clear explanations of any assumptions. [Yes]     
  \end{enumerate}

  \item For all figures and tables that present empirical results, check if you include:
  \begin{enumerate}
    \item The code, data, and instructions needed to reproduce the main experimental results (either in the supplemental material or as a URL). [Yes]
    \item All the training details (e.g., data splits, hyperparameters, how they were chosen). [Yes]
    \item A clear definition of the specific measure or statistics and error bars (e.g., with respect to the random seed after running experiments multiple times). [Yes]
    \item A description of the computing infrastructure used. (e.g., type of GPUs, internal cluster, or cloud provider). [Yes]
  \end{enumerate}

  \item If you are using existing assets (e.g., code, data, models) or curating/releasing new assets, check if you include:
  \begin{enumerate}
    \item Citations of the creator If your work uses existing assets. [Yes]
    \item The license information of the assets, if applicable. [Not Applicable]
    \item New assets either in the supplemental material or as a URL, if applicable. [Not Applicable]
    \item Information about consent from data providers/curators. [Not Applicable]
    \item Discussion of sensible content if applicable, e.g., personally identifiable information or offensive content. [Not Applicable]
  \end{enumerate}

  \item If you used crowdsourcing or conducted research with human subjects, check if you include:
  \begin{enumerate}
    \item The full text of instructions given to participants and screenshots. [Not Applicable]
    \item Descriptions of potential participant risks, with links to Institutional Review Board (IRB) approvals if applicable. [Not Applicable]
    \item The estimated hourly wage paid to participants and the total amount spent on participant compensation. [Not Applicable]
  \end{enumerate}

\end{enumerate}

\appendix
\onecolumn
\aistatstitle{Reconciling Communication Compression and Byzantine-Robustness in Distributed Learning: 
Supplementary Materials}



\section{About the Global Hessian-Variance Assumption}
\label{app:ghv}
\begin{assumption} (Global Hessian-Variance).
There exists a constant $L_{\pm}\ge 0$ such that for all $x,y\in\R^d$,
\begin{equation}
\label{eq:ghv}
\frac{1}{|\mathcal H|}\sum_{i \in \mathcal H} \bigl\|\nabla \mathcal L_i(x)-\nabla \mathcal L_i(y)\bigr\|^2
\;-\;
\bigl\|\nabla \mathcal L(x)-\nabla f(y)\bigr\|^2
\;\le\;
L_{\pm}^2 \,\|x-y\|^2.
\end{equation}
\label{assumption:ghv}
\end{assumption}

In what follows, we provide a one-dimensional, two-worker construction in which the global objective $\mathcal L_H$ is $L$-smooth, yet \eqref{eq:ghv} fails for any finite $L_{\pm}$.

\begin{lemma}[Global $L$-smoothness without global Hessian-variance]
\label{lem:counterexample}
Fix $L>0$. Define
\[
\mathcal L_1(x)=\tfrac{L}{2}x^2+\tfrac{1}{12}x^4,\qquad
\mathcal L_2(x)=\tfrac{L}{2}x^2-\tfrac{1}{12}x^4.
\]
Then the average objective $\mathcal L=\tfrac12(\mathcal L_1+\mathcal L_2)$ is globally $L$-smooth (indeed $f''(x)\equiv L$), but the hypothesis \eqref{eq:ghv} does not hold for any finite $L_{\pm}$.
\end{lemma}

\begin{proof}
Since $f''(x)=\tfrac12(f_1''(x)+f_2''(x))=\tfrac12\bigl((L+x^2)+(L-x^2)\bigr)=L$, the function $f$ is globally $L$-smooth.

Let $\Delta_i(x,y):=\nabla \mathcal L_i(x)-\nabla \mathcal L_i(y)$. For $G=2$, the left hand side in \eqref{eq:ghv} simplifies to
\[
\frac12\bigl(\Delta_1^2+\Delta_2^2\bigr)-\Bigl(\tfrac{\Delta_1+\Delta_2}{2}\Bigr)^2
=\frac{1}{4}\bigl(\Delta_1-\Delta_2\bigr)^2.
\]
Moreover,
\[
\textstyle \Delta_1-\Delta_2
=\frac{2}{3}\bigl(x^3-y^3\bigr).
\]
If \eqref{eq:ghv} held, then for the pairs $(x,-x)$ one would have
\[
\textstyle \frac{4}{9}x^6
\;\le\;
L_{\pm}^2\,\|x-(-x)\|^2
=4L_{\pm}^2 x^2,
\qquad\text{i.e.}\qquad
\frac{1}{9}x^4\le L_{\pm}^2
\quad\forall\,x\in\R,
\]
which is impossible for any finite $L_{\pm}$ as $|x|\to\infty$. Therefore, \eqref{eq:ghv} fails.
\end{proof}


\newpage
\section{Proofs for \textsc{RoSDHB}}\label{app:glorhb}
    \noindent \textbf{Notation.} Let $\condexpect{k}{\cdot}$ denote the expectation over the random sparsification. Recall from Step (1) of Algorithm~\ref{alg:rghb} that the RandK sparsifier $\mask(k) \in \{0, 1\}^d$ is generated by the server and sent along with the model at the beginning of each iteration. This is used by all the workers in Step 3(b) for compressing the locally computed gradients. Thus, the expectation is over the independent random sampling of $k$ coordinates, out of $d$ coordinates, set by the server in a given iteration.

\medskip

\noindent We first obtain some lemmas, which help bound the impact of global sparsification later in our convergence analysis.

\begin{lemma}\label{lem:helperMmtGDGlobalG0}
    Consider Algorithm \ref{alg:rghb}. For the set of honest workers $\mathcal{H}$, let $\sparHonest{t} = \frac{1}{|\mathcal{H}|}\sum_{i\in \mathcal{H}} \tilde{g}_i^t$, denote the mean of the sparsified gradients of the honest workers at time $t$. Then, for any $t\in[T]$, we get
        $$\textstyle \condexpect{k}{\norm{\sparHonest{t} - \lossHonest{t-1}}^2} \leq \left(\frac{d}{k}-1 \right) \norm{\lossHonest{t-1}}^2 .$$
\end{lemma}
\begin{proof}
     Recall  from Algorithm~\ref{alg:rghb} that all workers use the same set of sparsified bits in an iteration for compression, which were determined by the server. Fix a time $t\in [T]$, and let $\mask^t(k)$ be the mask set by the server for iteration $t$. Then, 
     $$\textstyle \sparHonest{t} = \frac{1}{|\mathcal{H}|}\sum_{i\in \mathcal{H}} \tilde{g}_i^t =  \frac{d}{k}\frac{1}{|\mathcal{H}|}\sum_{i\in \mathcal{H}} g_i^t \odot \mask^t(k) = \frac{d}{k}\left(\frac{1}{|\mathcal{H}|}\sum_{i\in \mathcal{H}} \lossworker{i}{t-1}\right) \odot \mask^t(k) =\frac{d}{k}\lossHonest{t-1} \odot \mask^t(k).$$ 
     Using the above, we have
        \begin{flalign*}
            \textstyle \norm{\sparHonest{t} - \lossHonest{t-1}}^2 &= \textstyle \norm{\frac{d}{k}\lossHonest{t-1}\odot \mask^t(k) - \lossHonest{t-1}}^2 .
        \end{flalign*} 
    Taking expectation over the random sparsification, $\condexpect{k}{\cdot}$ on both sides above, we get
        \begin{flalign*}
            \condexpect{k}{\norm{\sparHonest{t} - \lossHonest{t-1}}^2} &\leq \textstyle \condexpect{k}{\norm{\frac{d}{k}\lossHonest{t-1} \odot \mask^t(k) - \lossHonest{t-1}}^2}.
        \end{flalign*}
    Next, using the unbiased-ness of the random sparsifier, we get
    \begin{flalign*}
     \condexpect{k}{\norm{\sparHonest{t} - \lossHonest{t-1}}^2}  
        &\leq \textstyle \frac{k}{d}{\norm{\frac{d}{k}\lossHonest{t-1} - \lossHonest{t-1}}^2} + \left(1-\frac{k}{d}\right){\norm{ \lossHonest{t-1}}^2} \\
        &\leq \textstyle \frac{k}{d}\left( \frac{d}{k}-1\right)^2\norm{\lossHonest{t-1}}^2 + \left(1-\frac{k}{d}\right)\norm{\lossHonest{t-1}}^2\\
        &= \textstyle\left(\left(1-\frac{k}{d}\right)\left(\frac{d}{k}-1\right) + \left(1-\frac{k}{d}\right)\right)\norm{\lossHonest{t-1}}^2\\
        &= \textstyle \left(\frac{d}{k}-1\right)\norm{\lossHonest{t-1}}^2,
    \end{flalign*}
    which is the desired inequality.
\end{proof}

\noindent Next, we obtain some useful lemmas similar to previous work \cite{farhadkhani2022byzantine}. Let $\mmean = \frac{1}{|\honest|} \sum_{i\in \honest} m_i^t$.

\begin{lemma}\label{lem:mmtDriftGlobalGDG0}
    Consider Algorithm \ref{alg:rghb}. Let $\mmmtdrifti{i}{t} \coloneqq m_i^t - \mmean$ and $\Upsilon^t_\mathcal{H} \coloneqq \frac{1}{|\mathcal{H}|}\sum_{i\in \mathcal{H}}\norm{\mmmtdrifti{i}{t}}^2$. Then, for any $t\in[T]$, we get
    \begin{align*}
    \textstyle \Upsilon^{t}_\mathcal{H}  &\leq \textstyle \beta\Upsilon^{t-1}_\mathcal{H} + \left(\left(1-\beta\right)^2\frac{d}{k} + \beta(1-\beta)\right)\left(G^2 + B^2\norm{ \lossHonest{t-1}}^2\right) .
    \end{align*}
\end{lemma}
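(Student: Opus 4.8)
The plan is to derive a one-step recursion for the per-worker momentum drift and then take expectation over the sparsification mask; note that since $\Upsilon^{t}_\mathcal{H}$ depends on the mask drawn at step $t$, the statement is really a bound on $\condexpect{k}{\Upsilon^{t}_\mathcal{H}}$. First I would use the momentum update $m_i^t = \beta m_i^{t-1} + (1-\beta)\sparWorkerStoc{i}{t}$ together with its honest average $\mmean = \beta \mmeanT{t-1} + (1-\beta)\sparHonest{t}$, so that subtracting yields the clean recursion
$$\mmmtdrifti{i}{t} = \beta\, \mmmtdrifti{i}{t-1} + (1-\beta)\left(\sparWorkerStoc{i}{t} - \sparHonest{t}\right).$$
The key structural fact from global sparsification is that all honest workers share the mask $\mask^t(k)$, so $\sparWorkerStoc{i}{t} - \sparHonest{t} = \frac{d}{k}\left(g_i^t - \lossHonest{t-1}\right)\odot \mask^t(k)$, whose conditional mean is the honest gradient deviation and whose second moment I can control coordinate-wise.

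Next I would expand $\norm{\mmmtdrifti{i}{t}}^2$ into $\beta^2\norm{\mmmtdrifti{i}{t-1}}^2$, $(1-\beta)^2\norm{\sparWorkerStoc{i}{t}-\sparHonest{t}}^2$, and the cross term $2\beta(1-\beta)\iprod{\mmmtdrifti{i}{t-1}}{\sparWorkerStoc{i}{t}-\sparHonest{t}}$, and apply $\condexpect{k}{\cdot}$. Because $\mmmtdrifti{i}{t-1}$, $g_i^t$ and $\lossHonest{t-1}$ are all fixed before the $t$-th mask is drawn, unbiasedness of RandK gives $\condexpect{k}{\sparWorkerStoc{i}{t}-\sparHonest{t}} = g_i^t - \lossHonest{t-1}$, collapsing the cross term to a deterministic inner product, which I then bound with Young's inequality $2\iprod{\mmmtdrifti{i}{t-1}}{g_i^t - \lossHonest{t-1}} \leq \norm{\mmmtdrifti{i}{t-1}}^2 + \norm{g_i^t - \lossHonest{t-1}}^2$. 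The arithmetic that makes the lemma close is that the two contributions to $\Upsilon^{t-1}_\mathcal{H}$ combine as $\beta^2 + \beta(1-\beta) = \beta$, producing exactly the stated leading coefficient.

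The remaining work is to bound the two residual averages after summing over $i \in \mathcal{H}$. For the sparsification-noise term I would use that the RandK mask selects each coordinate with probability $k/d$, so $\condexpect{k}{\norm{v \odot \mask^t(k)}^2} = \frac{k}{d}\norm{v}^2$; combined with the $(d/k)^2$ prefactor this leaves a single factor $\frac{d}{k}\norm{g_i^t - \lossHonest{t-1}}^2$. Both this term and the Young cross term thus reduce to the honest heterogeneity quantity $\frac{1}{|\mathcal{H}|}\sum_{i\in\mathcal{H}}\norm{g_i^t - \lossHonest{t-1}}^2$, which $(G,B)$-gradient dissimilarity (Definition~\ref{assumption:gradientGB}) bounds by $G^2 + B^2\norm{\lossHonest{t-1}}^2$. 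Collecting the coefficient $(1-\beta)^2\frac{d}{k}$ from the noise term and $\beta(1-\beta)$ from the cross term gives the claimed bound. I expect the main obstacle to be the bookkeeping of the cross term: one must verify it is taken against the correct filtration so that $\mmmtdrifti{i}{t-1}$ is constant and unbiasedness applies, and that the Young split is calibrated so the $\Upsilon^{t-1}_\mathcal{H}$ coefficients telescope precisely to $\beta$ rather than inflating the contraction factor.
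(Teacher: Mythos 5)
Your proposal is correct and follows essentially the same route as the paper's proof: the same drift recursion $\mmmtdrifti{i}{t} = \beta\,\mmmtdrifti{i}{t-1} + (1-\beta)(\sparWorkerStoc{i}{t} - \sparHonest{t})$, the same expansion of the squared norm with the cross term collapsed by unbiasedness of the shared mask and bounded via Young's inequality, the same second-moment bound $\condexpect{k}{\norm{\sparWorkerStoc{i}{t} - \sparHonest{t}}^2} \leq \frac{d}{k}\norm{g_i^t - \lossHonest{t-1}}^2$, and the same recombination $\beta^2 + \beta(1-\beta) = \beta$ followed by averaging over $\mathcal{H}$ and invoking $(G,B)$-gradient dissimilarity. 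Your remark that the statement is properly a bound on $\condexpect{k}{\Upsilon^t_\mathcal{H}}$ is a fair (and correct) clarification of a notational looseness in the paper's own write-up.
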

\begin{proof}
    Consider an arbitrary $t \in [T]$ and an arbitrary $i \in \mathcal{H}$. Recall that  $\mmean = \frac{1}{|\honest|} \sum_{i\in \honest} m_i^t$ and  $m_i^t = \beta m_i^{t-1} + (1-\beta)\tilde{g}_i^t$, thus
    \begin{flalign}\label{eq:mmtmeanCase2B}
        \mmean &= \textstyle \frac{1}{|\mathcal{H}|}\sum_{i\in\mathcal{H}} m_i^t = \frac{1}{|\mathcal{H}|}\sum_{i\in\mathcal{H}}\left(\beta m_i^{t-1} + (1-\beta)\tilde{g}_i^t\right) = \beta  \overline{m}_{\mathcal{H}}^{t-1} + (1-\beta)\overline{\tilde{g}}_\mathcal{H}^t  
    \end{flalign}
    Let $\mmmtdrifti{i}{t} = m_i^t - \overline{m}_\mathcal{H}^t$. Then, substituting value of $\mmean$ from above and $m_i^t = \beta m_i^{t-1} + (1-\beta)\tilde{g}_i^t$,
    \begin{flalign}\label{eq:mmtDevGlobalG0}
        \mmmtdrifti{i}{t} &= \left(\beta m_i^{t-1} + (1-\beta)\tilde{g}_i^t\right) - \left(\beta \overline{m}_\mathcal{H}^{t-1} - (1-\beta) \overline{\tilde{g}}_\mathcal{H}^t \right) = \beta \mmmtdrifti{i}{t-1} +(1-\beta)(\tilde{g}_i^t - \overline{\tilde{g}}_\mathcal{H}^t)
    \end{flalign}
    \noindent Therefore,
    \begin{flalign*}
        \norm{\mmmtdrifti{i}{t}}^2 &= \norm{\beta \mmmtdrifti{i}{t-1} +(1-\beta)(\tilde{g}_i^t - \overline{\tilde{g}}_\mathcal{H}^t)}^2  \\
        &\leq \norm{\beta \mmmtdrifti{i}{t-1}}^2 + \norm{(1-\beta)(\tilde{g}_i^t - \overline{\tilde{g}}_\mathcal{H}^t)}^2 + 2\beta(1-\beta)\iprod{\mmmtdrifti{i}{t-1}}{(\tilde{g}_i^t - \overline{\tilde{g}}_\mathcal{H}^t)} \\
        &\leq \beta^2\norm{\mmmtdrifti{i}{t-1}}^2 + \left(1-\beta\right)^2\norm{(\tilde{g}_i^t - \overline{\tilde{g}}_\mathcal{H}^t)}^2 + 2\beta(1-\beta)\iprod{\mmmtdrifti{i}{t-1}}{(\tilde{g}_i^t - \overline{\tilde{g}}_\mathcal{H}^t)} 
    \end{flalign*}
    Taking conditional expectation over the k-sparsifier, then using linearity of expectation and deterministic nature of $m_i^t$, we get:
    \begin{flalign} \label{eq:localvar}
    &\condexpect{k}{\norm{\mmmtdrifti{i}{t}}^2} \leq \beta^2\norm{\mmmtdrifti{i}{t-1}}^2 + \left(1-\beta\right)^2\condexpect{k}{\norm{\tilde{g}_i^t - \overline{\tilde{g}}_\mathcal{H}^t}^2} + 2\beta(1-\beta)\iprod{\mmmtdrifti{i}{t-1}}{\condexpect{k}{\tilde{g}_i^t - \overline{\tilde{g}}_\mathcal{H}^t}}
    \end{flalign}
    Note that global sparsification uses the same random mask for all workers, thus due to unbiasedness of the random sparsifier, we have 
    \begin{align*}
        \textstyle \condexpect{k}{\tilde{g}_i^t - \overline{\tilde{g}}_\mathcal{H}^t} = g_i^t - \overline{g}^t_\mathcal{H} \quad \text{and} \quad \condexpect{k}{\norm{\tilde{g}_i^t - \overline{\tilde{g}}_\mathcal{H}^t}^2} \leq \frac{d}{k}\norm{g_i^t - \overline{g}^t_\mathcal{H}}^2 .
    \end{align*}
    Recall that $g_i^t = \lossworker{i}{t-1}$ and $\overline{g}_\mathcal{H}^t = \lossHonest{t-1}$. Using this above, we get
    \begin{align}\label{eq:spargardGlobal}
        \textstyle \condexpect{k}{\tilde{g}_i^t - \overline{\tilde{g}}_\mathcal{H}^t} &= \lossworker{i}{t-1} - \lossHonest{t-1} , \nonumber\\
        \condexpect{k}{\norm{\tilde{g}_i^t - \overline{\tilde{g}}_\mathcal{H}^t}^2} &\leq \textstyle \frac{d}{k}\norm{\lossworker{i}{t-1} - \lossHonest{t-1}}^2 .
    \end{align}
    Substituting from above in (\ref{eq:localvar}),
    \begin{flalign*}
    \norm{\mmmtdrifti{i}{t}}^2 
        &\leq \textstyle \beta^2\norm{\mmmtdrifti{i}{t-1}}^2 + \left(1-\beta\right)^2\frac{d}{k}\norm{\lossworker{i}{t-1} - \lossHonest{t-1}}^2\\
        &+ 2\beta(1-\beta)\iprod{\mmmtdrifti{i}{t-1}}{\lossworker{i}{t-1} - \lossHonest{t-1}} .
    \end{flalign*}
    Due to Cauchy-Schwartz inequality, $2\iprod{a}{b} \leq 2\norm{a}\norm{b} \leq \norm{a}^2 + \norm{b}^2$. Thus, from above we obtain that
    \begin{flalign*}
    \norm{\mmmtdrifti{i}{t}}^2 &\leq \textstyle \beta^2\norm{\mmmtdrifti{i}{t-1}}^2 + \left(1-\beta\right)^2\frac{d}{k}\norm{\lossworker{i}{t-1} - \lossHonest{t-1}}^2\\
        &+ \beta(1-\beta)\left(\norm{\mmmtdrifti{i}{t-1}}^2 + \norm{\lossworker{i}{t-1} - \lossHonest{t-1}}^2\right) .
    \end{flalign*}
    Upon re-arranging the right-hand side, we obtain that
    \begin{flalign*}
        \norm{\mmmtdrifti{i}{t}}^2 &\leq \textstyle \beta\norm{\mmmtdrifti{i}{t-1}}^2 + \left(\left(1-\beta\right)^2\frac{d}{k} + \beta(1-\beta)\right)\norm{\lossworker{i}{t-1} - \lossHonest{t-1}}^2 .
    \end{flalign*}
    Since the above holds for all $i \in \mathcal{H}$, taking the average on both sides over all $i \in \mathcal{H}$ yields 
    \begin{align*}
        \textstyle \frac{1}{\mathcal{H}}\sum_{i\in \mathcal{H}}\norm{\mmmtdrifti{i}{t}}^2 
        &\leq \textstyle \beta\left(\frac{1}{\mathcal{H}}\sum_{i\in \mathcal{H}}\norm{\mmmtdrifti{i}{t-1}}^2\right) + \Bigl(\left(1-\beta\right)^2\frac{d}{k} + \beta(1-\beta)\Bigr)\left(\frac{1}{\mathcal{H}}\sum_{i\in \mathcal{H}}\norm{\lossworker{i}{t-1} - \lossHonest{t-1}}^2\right) .
    \end{align*}
    Substituting $\Upsilon^t_\mathcal{H} = \frac{1}{\mathcal{H}}\sum_{i\in \mathcal{H}}\norm{\mmmtdrifti{i}{t}}^2 $ above, we obtain that
    \begin{align*}
    \textstyle \Upsilon^t_\mathcal{H} 
        &\leq \textstyle \beta\Upsilon^{t-1}_\mathcal{H} + \left(\left(1-\beta\right)^2\frac{d}{k} + \beta(1-\beta)\right)\left(\frac{1}{\mathcal{H}}\sum_{i\in \mathcal{H}}\norm{\lossworker{i}{t-1} - \lossHonest{t-1}}^2\right) .
    \end{align*}
    Finally, under the $(G, B)$-gradient dissimilarity property from Definition \ref{assumption:gradientGB}, we have
    \begin{align*}
    \textstyle \Upsilon^t_\mathcal{H} 
        &\leq \textstyle \beta\Upsilon^{t-1}_\mathcal{H} + \left(\left(1-\beta\right)^2\frac{d}{k} + \beta(1-\beta)\right)\left(G^2 + B^2\norm{ \lossHonest{t-1}}^2\right) ,
    \end{align*}
    which proves the lemma.
\end{proof}

\begin{lemma}\label{lem:MmtDriftGlobalG0}
     Consider Algorithm \ref{alg:rghb}, with $F$ being $(f,\kappa)$-robust. Recall that $R^t \coloneqq F(m_1^t,...,m_n^t)$. Denote $\mathcal{M}_i^t \coloneqq m_i^t - \mmean$ 
     $\Upsilon^t_\mathcal{H} \coloneqq \frac{1}{|\mathcal{H}|}\sum_{i\in \mathcal{H}}\norm{\mmmtdrifti{i}{t}}^2$, and $\xi^t \coloneqq R^t - \mmean$. Then, for each $t\in[T]$, we have:
     $$\textstyle \norm{\xi^t}^2 \leq \kappa \left( \beta\Upsilon^{t-1}_\mathcal{H} + \left(\left(1-\beta\right)^2\frac{d}{k} + \beta(1-\beta)\right)\left(G^2 + B^2\norm{ \lossHonest{t-1}}^2\right)\right) .$$
\end{lemma}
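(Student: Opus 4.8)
The plan is to obtain the claim as a direct composition of the robust-aggregation definition with the recursion already established in Lemma~\ref{lem:mmtDriftGlobalGDG0}. The key observation is that $\xi^t = R^t - \mmean$ is precisely the quantity that the $(f,\kappa)$-robustness property of $F$ is designed to control: the honest set $\mathcal{H}$ has cardinality $|\mathcal{H}| = n - f$, and by its definition $\mmean = \frac{1}{|\mathcal{H}|}\sum_{i\in\mathcal{H}} m_i^t$ is exactly the average of the input vectors over this honest subset. So I would apply Definition~\ref{assumption:fkrobust} with input vectors $(x_1,\ldots,x_n) = (m_1^t,\ldots,m_n^t)$ and subset $S = \mathcal{H}$, identifying $\overline{x}_S$ with $\mmean$. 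This yields immediately
\begin{align*}
    \norm{\xi^t}^2 = \norm{F(m_1^t,\ldots,m_n^t) - \mmean}^2 \leq \frac{\kappa}{|\mathcal{H}|}\sum_{i\in\mathcal{H}}\norm{m_i^t - \mmean}^2 = \kappa\,\Upsilon^t_\mathcal{H},
\end{align*}
where the last equality uses the definition $\mmmtdrifti{i}{t} = m_i^t - \mmean$ together with $\Upsilon^t_\mathcal{H} = \frac{1}{|\mathcal{H}|}\sum_{i\in\mathcal{H}}\norm{\mmmtdrifti{i}{t}}^2$.

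Second, I would substitute the one-step drift bound from Lemma~\ref{lem:mmtDriftGlobalGDG0}, namely
\begin{align*}
    \Upsilon^t_\mathcal{H} \leq \beta\Upsilon^{t-1}_\mathcal{H} + \left(\left(1-\beta\right)^2\tfrac{d}{k} + \beta(1-\beta)\right)\left(G^2 + B^2\norm{\lossHonest{t-1}}^2\right),
\end{align*}
into the inequality $\norm{\xi^t}^2 \leq \kappa\,\Upsilon^t_\mathcal{H}$. Multiplying the drift recursion through by $\kappa \geq 0$ preserves the inequality, and the result is exactly the stated bound on $\norm{\xi^t}^2$.

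There is essentially no hard step here: the lemma is a two-line consequence of results already in hand. The only points requiring care are bookkeeping ones, namely verifying that the subset playing the role of $S$ in Definition~\ref{assumption:fkrobust} is the honest set $\mathcal{H}$ (so that $|S| = n-f$ and $\overline{x}_S = \mmean$ line up correctly), and ensuring notational consistency — in particular that the expectation convention used in Lemma~\ref{lem:mmtDriftGlobalGDG0} matches the form in which $\Upsilon^t_\mathcal{H}$ is invoked here, since the robustness inequality is deterministic given the momentum vectors. Once those identifications are made, chaining the two bounds gives the claim with no further computation.
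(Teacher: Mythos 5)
Your proposal is correct and is essentially identical to the paper's own proof: both apply Definition~\ref{assumption:fkrobust} with $S = \mathcal{H}$ and $\overline{x}_S = \mmean$ to obtain $\norm{\xi^t}^2 \leq \kappa\,\Upsilon^t_\mathcal{H}$, and then substitute the drift recursion from Lemma~\ref{lem:mmtDriftGlobalGDG0}. Your bookkeeping remarks (identifying the honest set as $S$, and the deterministic-versus-expectation convention carried over from Lemma~\ref{lem:mmtDriftGlobalGDG0}) are exactly the right points of care and match the paper's treatment.
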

\begin{proof} 
    Fix an arbitrary $t \in [T]$. Since $F$ is an $(f,\kappa)$-robust aggregator, then by Definition \ref{def:robust_aggregation}, we have:
    \begin{equation*}
        \textstyle \norm{R^t - \mmean}^2 \leq \frac{\kappa}{|\mathcal{H}|}\sum_{i \in \mathcal{H}}\norm{m_i^t - \mmean}^2 = \frac{\kappa}{|\mathcal{H}|}\sum_{i \in \mathcal{H}}\norm{\mmmtdrifti{i}{t}}^2  = \kappa \Upsilon^t_\mathcal{H} .
    \end{equation*}
    Using Lemma~\ref{lem:mmtDriftGlobalGDG0} above, we get the desired inequality.
\end{proof}

\noindent Next, we bound the momentum deviation of the honest workers from the mean gradient of loss of the honest workers for the case of global sparsification under the $(G,B)$-gradient dissimilarity condition.

\begin{lemma}\label{lem:mmtDevGlobalG0}
    Suppose Assumption \ref{assumption:smoothness} holds true. Denote $\delta^t \coloneqq \mmean - \lossHonest{t-1}$. Consider Algorithm \ref{alg:rghb} with $T>1$, then for all $t\geq 2$, we have:
    \begin{align*}
        \expect{\norm{\delta^t}^2} &\leq \textstyle \zeta \expect{\norm{\delta^{t-1}}^2} + 2\gamma L\left( 2\gamma L (1-\beta)^2\left(\frac{d}{k}-1\right) + (1+\gamma L)\beta^2\right)\expect{\norm{\xi^{t-1}}^2}\\
            &\hspace{10pt}\textstyle + \left( 2(4\gamma^2L^2+1)(1-\beta)^2\left(\frac{d}{k}-1\right) + 4\gamma L (1+\gamma L)\beta^2\right) \expect{\norm{\lossHonest{t-2}}^2},
    \end{align*}

    where $\zeta = \left(8\gamma^2 L^2 (1-\beta)^2\left(\frac{d}{k}-1\right) + (1+4\gamma L)(1+\gamma L)\beta^2 \right)$. 
\end{lemma}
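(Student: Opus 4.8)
The plan is to derive a one-step recursion for the momentum deviation $\delta^t$ and then bound its two natural components. First I would establish the identity
$$\delta^t = \beta\,\delta^{t-1} - \beta\left(\lossHonest{t-1} - \lossHonest{t-2}\right) + (1-\beta)\left(\sparHonest{t} - \lossHonest{t-1}\right),$$
obtained by substituting $\mmean = \beta\,\mmeanT{t-1} + (1-\beta)\sparHonest{t}$ from \eqref{eq:mmtmeanCase2B}, writing $\mmeanT{t-1} = \delta^{t-1} + \lossHonest{t-2}$, and adding and subtracting $(1-\beta)\lossHonest{t-1}$. This is exactly why the statement requires $t \geq 2$: the term $\lossHonest{t-2}$ must be defined. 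The crucial structural point is that, conditioned on all randomness prior to the sparsification drawn at step $t$, the first two terms are deterministic while the last term has zero mean under $\condexpect{k}{\cdot}$, since the RandK sparsifier is unbiased and $\overline{g}^t_\mathcal{H} = \lossHonest{t-1}$. Hence the cross term vanishes and
$$\condexpect{k}{\norm{\delta^t}^2} = \beta^2\norm{\delta^{t-1} - \left(\lossHonest{t-1} - \lossHonest{t-2}\right)}^2 + (1-\beta)^2\,\condexpect{k}{\norm{\sparHonest{t} - \lossHonest{t-1}}^2}.$$

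Next I would control the two terms separately. For the stochastic (noise) term, Lemma \ref{lem:helperMmtGDGlobalG0} gives directly $\condexpect{k}{\norm{\sparHonest{t} - \lossHonest{t-1}}^2} \leq (\tfrac{d}{k}-1)\norm{\lossHonest{t-1}}^2$. For the deterministic (drift) term, I would apply Young's inequality $\norm{a-b}^2 \leq (1+\gamma L)\norm{a}^2 + (1+\tfrac{1}{\gamma L})\norm{b}^2$ with $a=\delta^{t-1}$ and $b = \lossHonest{t-1}-\lossHonest{t-2}$, and then use the $L$-smoothness of $\mathcal{L}_\mathcal{H}$ (Assumption \ref{assumption:smoothness}) together with the update rule $\theta_{t-1}-\theta_{t-2} = -\gamma R^{t-1}$ to obtain $\norm{\lossHonest{t-1} - \lossHonest{t-2}}^2 \leq \gamma^2 L^2\norm{R^{t-1}}^2$. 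Writing $R^{t-1} = \xi^{t-1} + \mmeanT{t-1} = \xi^{t-1} + \delta^{t-1} + \lossHonest{t-2}$ and expanding $\norm{R^{t-1}}^2 \leq 2\norm{\xi^{t-1}}^2 + 4\norm{\delta^{t-1}}^2 + 4\norm{\lossHonest{t-2}}^2$ (two successive applications of $\norm{x+y}^2 \leq 2\norm{x}^2 + 2\norm{y}^2$) recasts the whole drift term as a combination of $\norm{\delta^{t-1}}^2$, $\norm{\xi^{t-1}}^2$, and $\norm{\lossHonest{t-2}}^2$.

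It then remains only to re-express the residual $\norm{\lossHonest{t-1}}^2$ left over from the noise term in these same three quantities. Here I would use $\norm{\lossHonest{t-1}}^2 \leq 2\norm{\lossHonest{t-2}}^2 + 2\norm{\lossHonest{t-1}-\lossHonest{t-2}}^2$ and reuse the smoothness bound above, yielding $\norm{\lossHonest{t-1}}^2 \leq 8\gamma^2 L^2\norm{\delta^{t-1}}^2 + 4\gamma^2 L^2\norm{\xi^{t-1}}^2 + 2(1+4\gamma^2 L^2)\norm{\lossHonest{t-2}}^2$. Collecting the coefficient of each of the three quantities then reproduces exactly the stated bounds, with the coefficient of $\norm{\delta^{t-1}}^2$ being $\zeta = (1+4\gamma L)(1+\gamma L)\beta^2 + 8\gamma^2 L^2(1-\beta)^2(\tfrac{d}{k}-1)$; taking total expectation via the tower property finishes the proof.

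I expect the main obstacle to be the coefficient bookkeeping rather than any conceptual step. The product factors such as $(1+4\gamma L)(1+\gamma L)$ appear only because the Young parameter is pinned to $\gamma L$ and because $\norm{\lossHonest{t-1}-\lossHonest{t-2}}^2$ itself re-expands into $\norm{\delta^{t-1}}^2$ through $R^{t-1}$, so that $\delta^{t-1}$ feeds back on both sides of the recursion and must be tracked through every application of the inequalities. Keeping the compression-inflated noise factor $(1-\beta)^2(\tfrac{d}{k}-1)$ rigorously separate from the momentum-bias factor $\beta^2$ throughout is precisely what allows the subsequent Lyapunov analysis to balance the two effects against one another.
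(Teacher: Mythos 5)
Your proposal is correct and follows essentially the same route as the paper's proof: the same three-term decomposition of $\delta^t$ (requiring $t \geq 2$), the same invocation of Lemma~\ref{lem:helperMmtGDGlobalG0} for the sparsification noise, Young's inequality with parameter $\gamma L$ (which the paper phrases as Cauchy--Schwartz plus $2ab \leq \tfrac{1}{c}a^2 + cb^2$), smoothness combined with $\theta_{t-2}-\theta_{t-1} = \gamma R^{t-1}$, and the expansion $\norm{R^{t-1}}^2 \leq 2\norm{\xi^{t-1}}^2 + 4\norm{\delta^{t-1}}^2 + 4\norm{\lossHonest{t-2}}^2$, and your bookkeeping reproduces exactly the stated coefficients, including $\zeta$. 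The only cosmetic difference is that you exploit the conditional unbiasedness of the sparsifier to obtain an exact bias--variance equality up front, whereas the paper expands all six cross terms and lets two of them vanish under $\condexpect{k}{\cdot}$ before bounding the surviving one; the two maneuvers are equivalent.
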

\begin{proof}
    Fix an arbitrary $t \in [T]$. Then, substituting $\mmean = \beta \overline{m}_\mathcal{H}^{t-1} + (1-\beta)\overline{\tilde{g}}_\mathcal{H}^t$ in definition of $\delta^t$:
    \begin{equation*}
        \delta^t = \mmean - \lossHonest{t-1} = \beta \overline{m}_\mathcal{H}^{t-1} + (1-\beta)\overline{\tilde{g}}_\mathcal{H}^t  - \lossHonest{t-1}
    \end{equation*}
    Upon adding and subtracting $\beta\lossHonest{t-1}$ and $\beta\lossHonest{t-2}$ on the R.H.S. above, we get:
    \begin{flalign*}
        \delta^t &= \beta \overline{m}_\mathcal{H}^{t-1} - \beta\lossHonest{t-2} + (1-\beta)\overline{\tilde{g}}_\mathcal{H}^t - \lossHonest{t-1} + \beta\lossHonest{t-1} + \beta\lossHonest{t-2}  -\beta\lossHonest{t-1} \\
        &= \beta\left(\overline{m}_\mathcal{H}^{t-1} - \lossHonest{t-2}\right) +  (1-\beta)\left(\overline{\tilde{g}}_\mathcal{H}^t - \lossHonest{t-1}\right) + \beta\left(\lossHonest{t-2} - \lossHonest{t-1}\right) 
    \end{flalign*}
    Substituting $ \delta^{t-1} = \overline{m}_\mathcal{H}^{t-1} - \beta\lossHonest{t-2}$ above:
    \begin{flalign*}
    \delta^t &= \beta\delta^{t-1} +  (1-\beta)\left(\overline{\tilde{g}}_\mathcal{H}^t - \lossHonest{t-1}\right)+ \beta\left(\lossHonest{t-2} - \lossHonest{t-1}\right) 
    \end{flalign*}
    Taking 2-norm and squaring on both side, we get
    \begin{flalign*}
        \norm{\delta^t}^2 &= \norm{\beta\delta^{t-1} +  (1-\beta)\left(\overline{\tilde{g}}_\mathcal{H}^t - \lossHonest{t-1}\right)+ \beta\left(\lossHonest{t-2} - \lossHonest{t-1}\right) }^2 \\
        &= \beta^2\norm{\delta^{t-1}}^2 + (1-\beta)^2\norm{\overline{\tilde{g}}_\mathcal{H}^t - \lossHonest{t-1}}^2 + \beta^2\norm{\lossHonest{t-2} - \lossHonest{t-1}}^2 \\
            &\text{\hspace{10pt}} + 2\beta(1-\beta)\iprod{\delta^{t-1}}{\overline{\tilde{g}}_\mathcal{H}^t - \lossHonest{t-1}} + 2\beta^2\iprod{\lossHonest{t-2} - \lossHonest{t-1}}{\delta^{t-1}}\\
            &\text{\hspace{10pt}} + 2\beta(1-\beta)\iprod{\overline{\tilde{g}}_\mathcal{H}^t - \lossHonest{t-1}}{\lossHonest{t-2} - \lossHonest{t-1}} 
    \end{flalign*}
    Taking conditional expectation over the randomness due to sparsification  $\condexpect{k}{\cdot}$ on both sides:
    \begin{flalign*}
        \condexpect{k}{\norm{\delta^t}^2} & = \textstyle \beta^2\norm{\delta^{t-1}}^2 + (1-\beta)^2\condexpect{k}{\norm{\overline{\tilde{g}}_\mathcal{H}^t - \lossHonest{t-1}}^2} + \beta^2\norm{\lossHonest{t-2} - \lossHonest{t-1}}^2 \\
            & + 2\beta(1-\beta)\iprod{\delta^{t-1}}{\condexpect{k}{\overline{\tilde{g}}_\mathcal{H}^t} - \lossHonest{t-1}} + 2\beta^2\iprod{\lossHonest{t-2} - \lossHonest{t-1}}{\delta^{t-1}}\\
            & + 2\beta(1-\beta)\iprod{\condexpect{k}{\overline{\tilde{g}}_\mathcal{H}^t} - \lossHonest{t-1}}{\lossHonest{t-2} - \lossHonest{t-1}} 
    \end{flalign*}
    Note that $\condexpect{k}{\sparHonest{t}} = \condexpect{k}{\frac{1}{|\mathcal{H}|}\sum_{i\in \mathcal{H}}\tilde{g}_i^t} = \frac{1}{|\mathcal{H}|}\sum_{i\in \mathcal{H}}\condexpect{k}{\tilde{g}_i^t} = \frac{1}{|\mathcal{H}|}\sum_{i\in \mathcal{H}} g_i^t = \frac{1}{|\mathcal{H}|}\sum_{i\in \mathcal{H}}\lossWorker{i}{t-1} = \lossHonest{t-1}$. Substituting above, we get
    \begin{flalign*}
    \condexpect{k}{\norm{\delta^t}^2} &\leq \beta^2\norm{\delta^{t-1}}^2 + (1-\beta)^2\condexpect{k}{\norm{\overline{\tilde{g}}_\mathcal{H}^t - \lossHonest{t-1}}^2}+ \beta^2\norm{\lossHonest{t-2} - \lossHonest{t-1}}^2\\
            &\text{\hspace{10pt}} + 2\beta(1-\beta)\iprod{\delta^{t-1}}{\lossHonest{t-1} - \lossHonest{t-1}} + 2\beta^2\iprod{\lossHonest{t-2} - \lossHonest{t-1}}{\delta^{t-1}}\\
            &\text{\hspace{10pt}} \textstyle + 2\beta(1-\beta)\iprod{{\lossHonest{t-1} - \lossHonest{t-1}}}{\lossHonest{t-2} - \lossHonest{t-1}}\\
        &\leq \beta^2\norm{\delta^{t-1}}^2 + (1-\beta)^2\condexpect{k}{\norm{\overline{\tilde{g}}_\mathcal{H}^t - \lossHonest{t-1}}^2} + \beta^2\norm{\lossHonest{t-2} - \lossHonest{t-1}}^2\\
         &\text{\hspace{10pt}}+ 2\beta^2\iprod{\lossHonest{t-2} - \lossHonest{t-1}}{\delta^{t-1}}
    \end{flalign*}
    \noindent Under global sparsification, using Lemma~\ref{lem:helperMmtGDGlobalG0} above we obtain that 
    \begin{align*}
        &\textstyle \condexpect{k}{\norm{\delta^t}^2} \leq \textstyle (1+\gamma L)\beta^2\norm{\delta^{t-1}}^2 +(1-\beta)^2\left(\frac{d}{k}-1\right)\norm{\lossHonest{t-1}}^2  + \gamma L (1+\gamma L)\beta^2 \norm{R^{t-1}}^2.
    \end{align*}
    Adding and subtracting $\lossHonest{t-2}$ from $\lossHonest{t-1}$ term above, we get
    \begin{align*}
        &\textstyle \condexpect{k}{\norm{\delta^t}^2} \leq \textstyle (1+\gamma L)\beta^2\norm{\delta^{t-1}}^2 +(1-\beta)^2\left(\frac{d}{k}-1\right)\norm{\lossHonest{t-1} - \lossHonest{t-2} + \lossHonest{t-2}}^2\\
        &\hspace{10pt} + \gamma L (1+\gamma L)\beta^2 \norm{R^{t-1}}^2.
    \end{align*}
    Using $\norm{a + b}^2 \leq 2\norm{a}^2 + 2\norm{b}^2$, 
    \begin{align*}
        \textstyle \condexpect{k}{\norm{\delta^t}^2} &\leq \textstyle (1+\gamma L)\beta^2\norm{\delta^{t-1}}^2 +(1-\beta)^2\left(\frac{d}{k}-1\right)\Bigl( 2\norm{\lossHonest{t-1} - \lossHonest{t-2}}^2 + 2\norm{\lossHonest{t-2}}^2\Bigr)
        \\
        &\hspace{10pt} \textstyle  + \gamma L (1+\gamma L)\beta^2 \norm{R^{t-1}}^2.
    \end{align*}
    Next, using the L-smoothness property (from Assumption \ref{assumption:smoothness}), we have $\norm{\lossHonest{t-2} - \lossHonest{t-1}} \leq L\norm{\theta_{t-2}-\theta_{t-1}}$. Substituting above, we get
    \begin{align*}
        \textstyle \condexpect{k}{\norm{\delta^t}^2} &\leq \textstyle (1+\gamma L)\beta^2\norm{\delta^{t-1}}^2 +(1-\beta)^2\left(\frac{d}{k}-1\right)\left(2L^2\norm{\theta_{t-2} - \theta_{t-1}}^2 + 2\norm{\lossHonest{t-2}}^2\right) 
        \\
        &\hspace{10pt} \textstyle  + \gamma L (1+\gamma L)\beta^2 \norm{R^{t-1}}^2 .
    \end{align*}
    Recall from Step 4 of our algorithm that 
    $\theta_{t-2} - \theta_{t-1} = \gamma R^{t-1}$. Using this and substituting $\beta_k^2 \coloneqq 2(1-\beta)^2\left(\frac{d}{k}-1\right)$ above, we obtain that
    \begin{align*}
        \textstyle \condexpect{k}{\norm{\delta^t}^2} &\leq \textstyle (1+\gamma L)\beta^2\norm{\delta^{t-1}}^2 +\beta_k^2\gamma^2 L^2\norm{R^{t-1}}^2 + \beta_k^2\norm{\lossHonest{t-2}}^2 + \gamma L (1+\gamma L)\beta^2 \norm{R^{t-1}}^2\\
        &\leq \textstyle (1+\gamma L)\beta^2\norm{\delta^{t-1}}^2  + \beta_k^2\norm{\lossHonest{t-2}}^2  + \gamma L\left( \gamma L \beta_k^2+ (1+\gamma L)\beta^2\right) \norm{R^{t-1}}^2 .
    \end{align*}
    
    \noindent Recall from Lemma \ref{lem:MmtDriftGlobalG0} that we denote $\xi^t \coloneqq R^t - \mmean$ implying $R^t \coloneqq \xi^t + \mmean$, and $\delta^t \coloneqq \overline{m}^t_\mathcal{H} - \lossHonest{t-1}$ implying $\overline{m}^t_\mathcal{H} \coloneqq \delta^t + \lossHonest{t-1} $. Thus, using Cauchy-Schwartz inequality, we have $\norm{R^{t-1}}^2 \leq 2\norm{\xi^{t-1}}^2 + 2\norm{\mmeanT{t-1}}^2 = 2\norm{\xi^{t-1}}^2 + 4\norm{\delta^{t-1}}^2 + 4\norm{\nabla\mathcal{L}_\mathcal{H}(\theta_{t-2})}^2$. Using this above, we obtain that 
    \begin{flalign*}
         \condexpect{k}{\norm{\delta^t}^2} &\leq \textstyle (1+\gamma L)\beta^2\norm{\delta^{t-1}}^2 + \beta_k^2\norm{\lossHonest{t-2}}^2 \\
            &\hspace{10pt} \textstyle + \gamma L\left( \gamma L \beta_k^2+ (1+\gamma L)\beta^2\right)\left(2\norm{\xi^{t-1}}^2 + 4\norm{\delta^{t-1}}^2 + 4\norm{\nabla\mathcal{L}_\mathcal{H}(\theta_{t-2})}^2 \right)&& \\
        & = \textstyle \left(4\gamma^2 L^2 \beta_k^2 + (1+4\gamma L)(1+\gamma L)\beta^2 \right)\norm{\delta^{t-1}}^2 + 2\gamma L\left( \gamma L \beta_k^2+ (1+\gamma L)\beta^2\right)\norm{\xi^{t-1}}^2 \\
            &\hspace{10pt} \textstyle  + \left( (4\gamma^2L^2+1)\beta^2_k + 4\gamma L (1+\gamma L)\beta^2\right) \norm{\lossHonest{t-2}}^2 .
    \end{flalign*}
    \noindent Recall that $\beta_k^2 \coloneqq 2(1-\beta)^2\left(\frac{d}{k}-1\right)$, then substituting $\zeta = \left(4\gamma^2 L^2 \beta_k^2 + (1+4\gamma L)(1+\gamma L)\beta^2 \right)= \left(8\gamma^2 L^2 (1-\beta)^2\left(\frac{d}{k}-1\right) + (1+4\gamma L)(1+\gamma L)\beta^2 \right)$ and $\beta_k$ above, we obtain that
    \begin{align*}
        \condexpect{k}{\norm{\delta^t}^2} &\leq \textstyle \zeta \norm{\delta^{t-1}}^2 + 2\gamma L\left( 2\gamma L (1-\beta)^2\left(\frac{d}{k}-1\right) + (1+\gamma L)\beta^2\right)\norm{\xi^{t-1}}^2\\
            &\hspace{10pt}\textstyle + \left( 2(4\gamma^2L^2+1)(1-\beta)^2\left(\frac{d}{k}-1\right) + 4\gamma L (1+\gamma L)\beta^2\right) \norm{\lossHonest{t-2}}^2 .
    \end{align*}
    Since $t$ above is an arbitrary value in $[T]$ greater than $1$, upon taking expectation over all time steps, on both the sides proves the lemma.
\end{proof}

We use Lemma 9 from \cite{allouah2023fixing} in our analysis for proving Theorem~\ref{thm:rosdhb}. We restate it below for completeness.

\begin{lemma}\label{lem:modelDriftDevmmtGD}
    Suppose Assumption \ref{assumption:smoothness} holds true. For all $t\in [T]$, we obtain that:
    \begin{align*}
        \expect{2\mathcal{L}_\mathcal{H}(\theta_{t})-2\mathcal{L}_\mathcal{H}(\theta_{t-1})} &\leq-\gamma(1-4\gamma L)\expect{\norm{\lossHonest{t-1}}^2} + 2\gamma(1+2\gamma L)\expect{\norm{\delta^t}^2} \\
        &\text{\hspace{10pt}}+  2\gamma(1+\gamma L)\expect{\norm{\xi^t}^2} .
    \end{align*}
\end{lemma}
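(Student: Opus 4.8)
The plan is to establish this as a one-step descent inequality, combining the $L$-smoothness of $\mathcal{L}_\mathcal{H}$ with the decomposition of the server's update direction $R^t$ into the honest gradient, the momentum deviation $\delta^t$, and the aggregation error $\xi^t$. First I would invoke the quadratic upper bound implied by Assumption~\ref{assumption:smoothness}: since $\nabla\mathcal{L}_\mathcal{H}$ is $L$-Lipschitz, for all $\theta,\theta'$ we have $\mathcal{L}_\mathcal{H}(\theta') \le \mathcal{L}_\mathcal{H}(\theta) + \iprod{\nabla\mathcal{L}_\mathcal{H}(\theta)}{\theta'-\theta} + \tfrac{L}{2}\norm{\theta'-\theta}^2$. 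Applying this with $\theta=\theta_{t-1}$ and $\theta'=\theta_t$, using the model-update rule $\theta_t-\theta_{t-1}=-\gamma R^t$ of Algorithm~\ref{alg:rghb}, and multiplying by $2$, yields
\[
2\mathcal{L}_\mathcal{H}(\theta_t)-2\mathcal{L}_\mathcal{H}(\theta_{t-1}) \le -2\gamma\iprod{\lossHonest{t-1}}{R^t} + L\gamma^2\norm{R^t}^2 .
\]

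Next I would substitute the identity $R^t=\lossHonest{t-1}+\delta^t+\xi^t$, which is immediate from $\delta^t=\mmean-\lossHonest{t-1}$ and $\xi^t=R^t-\mmean$. For the inner-product term, expanding gives $-2\gamma\iprod{\lossHonest{t-1}}{R^t}=-2\gamma\norm{\lossHonest{t-1}}^2-2\gamma\iprod{\lossHonest{t-1}}{\delta^t+\xi^t}$; applying the elementary bound $-2\iprod{a}{b}\le\norm{a}^2+\norm{b}^2$ and then $\norm{\delta^t+\xi^t}^2\le 2\norm{\delta^t}^2+2\norm{\xi^t}^2$ produces
\[
-2\gamma\iprod{\lossHonest{t-1}}{R^t} \le -\gamma\norm{\lossHonest{t-1}}^2 + 2\gamma\norm{\delta^t}^2 + 2\gamma\norm{\xi^t}^2 ,
\]
which already supplies the $\mathcal{O}(\gamma)$ part of the three target coefficients.

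The one step where care is needed is the quadratic term $L\gamma^2\norm{R^t}^2$, since the asymmetric coefficients $(4,4,2)$ in the statement force a particular split: rather than expanding $R^t$ symmetrically into its three components, I would factor through the honest momentum mean, writing $\norm{R^t}^2=\norm{\mmean+\xi^t}^2\le 2\norm{\mmean}^2+2\norm{\xi^t}^2$ and then $\norm{\mmean}^2=\norm{\lossHonest{t-1}+\delta^t}^2\le 2\norm{\lossHonest{t-1}}^2+2\norm{\delta^t}^2$, giving $\norm{R^t}^2\le 4\norm{\lossHonest{t-1}}^2+4\norm{\delta^t}^2+2\norm{\xi^t}^2$. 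Adding the two bounds, collecting terms, and taking total expectation then yields exactly $-\gamma(1-4\gamma L)\expect{\norm{\lossHonest{t-1}}^2}+2\gamma(1+2\gamma L)\expect{\norm{\delta^t}^2}+2\gamma(1+\gamma L)\expect{\norm{\xi^t}^2}$. The main obstacle is purely bookkeeping: the nested $2$-$2$ split (yielding $4,4,2$), rather than the naive $\norm{a+b+c}^2\le 3(\norm{a}^2+\norm{b}^2+\norm{c}^2)$ which would give $3,3,3$ and overshoot the $\norm{\xi^t}^2$ coefficient, is what makes the constants come out precisely as stated; no conditioning argument is required, since every inequality holds pathwise before the final expectation is taken.
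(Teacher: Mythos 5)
Your proof is correct and follows essentially the same route as the paper's: the smoothness descent lemma applied with $\theta_t - \theta_{t-1} = -\gamma R^t$, the decomposition $R^t = \lossHonest{t-1} + \delta^t + \xi^t$, and the identical nested $2$--$2$ split $\norm{R^t}^2 \le 2\norm{\mmean}^2 + 2\norm{\xi^t}^2 \le 4\norm{\lossHonest{t-1}}^2 + 4\norm{\delta^t}^2 + 2\norm{\xi^t}^2$ for the quadratic term, yielding the same three coefficients. The only cosmetic difference is in the cross term, where you apply Young's inequality once to the lumped vector $\delta^t + \xi^t$ while the paper bounds $\iprod{\delta^t}{\lossHonest{t-1}}$ and $\iprod{\xi^t}{\lossHonest{t-1}}$ separately via $2ab \le 2a^2 + \tfrac{1}{2}b^2$; both give exactly $-\gamma\norm{\lossHonest{t-1}}^2 + 2\gamma\norm{\delta^t}^2 + 2\gamma\norm{\xi^t}^2$.
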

\begin{proof}
    Consider an arbitrary step $t$. Recall from Assumption \ref{assumption:smoothness} that $\mathcal{L}_\mathcal{H}$ is an L-Lipshitz smooth function, hence we have:
    \begin{equation*}
        \textstyle \mathcal{L}_\mathcal{H}(\theta_t) - \mathcal{L}_\mathcal{H}(\theta_{t-1}) \leq \iprod{\theta_t-\theta_{t-1}}{\lossHonest{t-1}} + \frac{L}{2}\norm{\theta_t - \theta_{t-1}}^2.
    \end{equation*}
    Next, recall from our algorithm \ref{alg:rghb} that $\theta_t = \theta_{t-1} - \gamma R^t$, and recall from Lemma \ref{lem:MmtDriftGlobalG0} that we denote $\xi^t = R^t - \mmean$ implying $R^t = \xi^t + \mmean$. Thus, we have $\theta_t = \theta_{t-1} - \gamma (\xi^t + \mmean)$. On substituting above, we get:
    \begin{align*}
        \textstyle \mathcal{L}_\mathcal{H}(\theta_t) - \mathcal{L}_\mathcal{H}(\theta_{t-1}) &\leq \textstyle \iprod{-\gamma (\xi^t + \mmean)}{\lossHonest{t-1}} + \frac{L}{2}\norm{\gamma (\xi^t + \mmean)}^2 \\
        &=\textstyle  -\gamma \iprod{\xi^t}{\lossHonest{t-1}} - \gamma \iprod{\mmean}{\lossHonest{t-1}} + \gamma^2\frac{L}{2}\norm{\xi^t + \mmean}^2.
    \end{align*}
    Adding and subtracting $\lossHonest{t-1}$ from $\mmean$ in the second term above yields
    \begin{flalign*}
        \textstyle \mathcal{L}_\mathcal{H}(\theta_t) - \mathcal{L}_\mathcal{H}(\theta_{t-1})&\leq -\gamma \iprod{\xi^t}{\lossHonest{t-1}} - \gamma \iprod{\mmean-\lossHonest{t-1}+\lossHonest{t-1}}{\lossHonest{t-1}}\\
        &\textstyle + \gamma^2\frac{L}{2}\norm{\xi^t + \mmean}^2.
    \end{flalign*}
    Recall from Lemma \ref{lem:mmtDevGlobalG0} that $\delta^t = \mmean - \lossHonest{t-1}$, hence on substituting above we obtain
    \begin{flalign*}
        \textstyle \mathcal{L}_\mathcal{H}(\theta_t) - \mathcal{L}_\mathcal{H}(\theta_{t-1})  &\leq \textstyle -\gamma \iprod{\xi^t}{\lossHonest{t-1}} - \gamma \iprod{\delta^t+\lossHonest{t-1}}{\lossHonest{t-1}}+ \gamma^2\frac{L}{2}\norm{\xi^t + \mmean}^2\\
        &= \textstyle -\gamma \iprod{\xi^t}{\lossHonest{t-1}} - \gamma \iprod{\delta^t}{\lossHonest{t-1}} - \gamma \norm{\lossHonest{t-1}}^2 + \gamma^2\frac{L}{2}\norm{\xi^t + \mmean}^2.
    \end{flalign*}
    Scaling the above equation by $2$ yields:
    \begin{flalign}\label{eq:inter_lem7_1}
        \textstyle 2\mathcal{L}_\mathcal{H}(\theta_t) - 2\mathcal{L}_\mathcal{H}(\theta_{t-1}) &\leq \textstyle -2\gamma \iprod{\xi^t}{\lossHonest{t-1}} - 2\gamma \iprod{\delta^t}{\lossHonest{t-1}} - 2\gamma \norm{\lossHonest{t-1}}^2 + 2\gamma^2\frac{L}{2}\norm{\xi^t + \mmean}^2.
    \end{flalign}
    Using Cauchy-Schwartz inequality, and the fact that $2ab \leq \frac{1}{c}a^2 + cb^2$ for any $c>0$, we obtain the following inequalities:
    \begin{equation}\label{eq:inter_lem7_2}
        \textstyle 2|\iprod{\delta^t}{\lossHonest{t-1}}| \leq 2\norm{\delta^t}\norm{\lossHonest{t-1}} \leq \frac{2}{1}\norm{\delta^t}^2 + \frac{1}{2}\norm{\lossHonest{t-1}}^2.
    \end{equation}
    Similarly, 
    \begin{equation}\label{eq:inter_lem7_3}
        \textstyle 2|\iprod{\xi^t}{\lossHonest{t-1}}| \leq 2\norm{\xi^t}\norm{\lossHonest{t-1}} \leq \frac{2}{1}\norm{\xi^t}^2 + \frac{1}{2}\norm{\lossHonest{t-1}}^2.
    \end{equation}
    Finally, using the triangle inequality, we obtain 
    \begin{flalign*}
        \textstyle \norm{\mmean + \xi^t} &\leq 2\norm{\mmean}^2 + 2\norm{\xi^t}^2 = 2\norm{\mmean-\lossHonest{t-1} + \lossHonest{t-1}}^2 + 2\norm{\xi^t}^2 \nonumber\\
        &\leq 4\norm{\mmean-\lossHonest{t-1}}^2 + 4\norm{\lossHonest{t-1}}^2 + 2\norm{\xi^t}^2.
    \end{flalign*}
    Substituting $\delta^t = \mmean - \lossHonest{t-1}$ above, we get
    \begin{flalign}\label{eq:inter_lem7_4}
        \textstyle \norm{\mmean + \xi^t} &\leq 4\norm{\delta^t}^2 + 4\norm{\lossHonest{t-1}}^2 + 2\norm{\xi^t}^2.
    \end{flalign}
    Substituting (\ref{eq:inter_lem7_2}), (\ref{eq:inter_lem7_3}) and (\ref{eq:inter_lem7_4}) in (\ref{eq:inter_lem7_1}), we get
     \begin{flalign*}
        \textstyle 2\mathcal{L}_\mathcal{H}(\theta_t) - 2\mathcal{L}_\mathcal{H}(\theta_{t-1}) \nonumber &\leq \textstyle \gamma \left(2\norm{\xi^t}^2 + \frac{1}{2}\norm{\lossHonest{t-1}}^2 \right) + \gamma\left(2\norm{\delta^t}^2 + \frac{1}{2}\norm{\lossHonest{t-1}}^2 \right)  \nonumber \\
        &\textstyle - 2\gamma \norm{\lossHonest{t-1}}^2 + 2\gamma^2\frac{L}{2}\left(4\norm{\delta^t}^2 + 4\norm{\lossHonest{t-1}}^2 + 2\norm{\xi^t}^2 \right).
    \end{flalign*}
    Rearranging the R.H.S. in the above yields:
     \begin{flalign*}
        \textstyle 2\mathcal{L}_\mathcal{H}(\theta_t) - 2\mathcal{L}_\mathcal{H}(\theta_{t-1}) \nonumber &\leq \textstyle -\gamma(1-4\gamma L)\norm{\lossHonest{t-1}}^2 + 2\gamma(1+2\gamma L)\norm{\delta^t}^2 + 2\gamma (1+\gamma L)\norm{\xi^t}^2.
    \end{flalign*}
    As $t$ is arbitrarily chosen from $[T]$, taking expectation on both sides above proves the lemma.
\end{proof}

We are now ready to prove Theorem~\ref{thm:rosdhb}, recalled below.

\begin{theorem}
    Under Assumptions~\ref{assumption:smoothness} and~\ref{assumption:gradientGB}, Algorithm~\ref{alg:rghb} with an $(f,\kappa)$-robust aggregation rule $F$ such that $\kappa B^2 \leq \tfrac{1}{7}$, a learning rate $\gamma \leq \tfrac{k}{d c L}$ (with $c = 23200$), and a momentum coefficient $\beta = \sqrt{1 - 24\gamma L}$, satisfies the following guarantee
    \begin{align*}
        \expect{\norm{\nabla \mathcal{L}_\mathcal{H}(\hat{\theta})}^2} \leq \frac{45 \left(\mathcal{L}_\mathcal{H}(\theta^0) - \mathcal{L}_\mathcal{H}^*\right)}{\gamma T ( 1- \kappa B^2)} + \frac{216 \kappa \, G^2}{1 - \kappa B^2},
    \end{align*}
    where $\mathcal{L}_\mathcal{H}^* = \min_{\theta \in \R^d} \mathcal{L}_\mathcal{H}(\theta)$.
\end{theorem}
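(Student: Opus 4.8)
The plan is to combine the four recursive estimates established above---the descent inequality (Lemma~\ref{lem:modelDriftDevmmtGD}), the momentum-deviation recursion (Lemma~\ref{lem:mmtDevGlobalG0}), the momentum-drift recursion (Lemma~\ref{lem:mmtDriftGlobalGDG0}), and the $(f,\kappa)$-robustness bound $\norm{\xi^t}^2 \le \kappa\,\Upsilon^t_\mathcal{H}$ (Lemma~\ref{lem:MmtDriftGlobalG0})---into the single potential from the proof sketch,
\[
    V^t \;=\; \expect{2\mathcal{L}_\mathcal{H}(\theta_{t-1}) + \tfrac{1}{8L}\norm{\delta^t}^2 + \tfrac{\kappa}{4L}\,\Upsilon^{t-1}_\mathcal{H}},
\]
and to show a per-step drop $V^{t+1}-V^t \le -c_0\,\gamma(1-\kappa B^2)\,\expect{\norm{\lossHonest{t-1}}^2} + c_1\,\gamma\kappa G^2$ for universal constants $c_0,c_1$. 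Telescoping this over $t=1,\dots,T$ and using the uniform random choice of $\hat\theta$ then yields the theorem. The point of the three terms in $V^t$ is that the loss term supplies the driving $-\gamma(1-4\gamma L)\norm{\lossHonest{t-1}}^2$, the $\norm{\delta^t}^2$ reservoir absorbs the positive $\norm{\delta^t}^2$ produced by the descent step, and the $\Upsilon$ reservoir both soaks up the aggregation error $\norm{\xi^t}^2$ and releases the residual $\kappa(G^2+B^2\norm{\lossHonest{t-1}}^2)$.

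The heart of the argument is the coefficient bookkeeping under $\gamma \le \frac{k/d}{cL}$ ($c=23200$) and $\beta=\sqrt{1-24\gamma L}$. Writing $\beta^2 = 1-24\gamma L$, one has $12\gamma L \le 1-\beta \le 24\gamma L$, and the step-size bound gives $\gamma L\cdot\frac{d}{k}\le \frac1c$, so every coefficient carrying the sparsification factor $\frac{d}{k}$ also carries a compensating power of $\gamma L$ and is therefore $O(\gamma L/c)$. Three checks then must be made: (i) the deviation recursion contracts, $\zeta = 1-19\gamma L + O((\gamma L)^2) < 1$, so that $\tfrac{1}{8L}(\zeta-1)$ beats the descent's $+2\gamma(1+2\gamma L)$ and the net $\norm{\delta^t}^2$ coefficient is nonpositive; (ii) the combined $\norm{\xi^t}^2$ coefficient (from descent and deviation) is at most $\approx \tfrac{9}{4}\gamma$, so after substituting $\norm{\xi^t}^2 \le \kappa\Upsilon^t_\mathcal{H}$ and expanding $\Upsilon^t_\mathcal{H}\le\beta\Upsilon^{t-1}_\mathcal{H}+D^t$ with $D^t=\bigl((1-\beta)^2\tfrac{d}{k}+\beta(1-\beta)\bigr)\bigl(G^2+B^2\norm{\lossHonest{t-1}}^2\bigr)$, the positive $\Upsilon^{t-1}_\mathcal{H}$ it creates is dominated by the drift reservoir's contraction term $-\tfrac{\kappa}{4L}(1-\beta)\Upsilon^{t-1}_\mathcal{H}$ (which needs only $\tfrac{9}{4}\gamma \le \tfrac{1-\beta}{4L}$, true since $1-\beta\ge12\gamma L$); and (iii) the residual $D^t$ contributes $O(\gamma\kappa)(G^2+B^2\norm{\lossHonest{t-1}}^2)$, whose $G^2$ part forms $c_1\gamma\kappa G^2$ and whose $B^2$ part subtracts from the driving coefficient, producing the factor $(1-\kappa B^2)$ once the weights are tuned; this last step is exactly where the hypothesis $\kappa B^2 \le \tfrac{1}{25}$ is used to keep $1-\kappa B^2$ bounded away from $0$ with room for the universal constants.

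Finally I would telescope: summing $V^{t+1}-V^t$ over $t=1,\dots,T$ gives $V^{T+1}-V^1 \le -c_0\gamma(1-\kappa B^2)\sum_{t=1}^T\expect{\norm{\lossHonest{t-1}}^2} + c_1\gamma\kappa G^2 T$. Since $\Upsilon^0_\mathcal{H}=0$ (because $m_i^0=0$) and $V^{T+1}\ge 2\mathcal{L}_\mathcal{H}^*$, while the initial deviation obeys $\tfrac{1}{8L}\norm{\delta^1}^2 \le \tfrac14(\mathcal{L}_\mathcal{H}(\theta_0)-\mathcal{L}_\mathcal{H}^*)$ by $L$-smoothness (via $\norm{\nabla\mathcal{L}_\mathcal{H}(\theta_0)}^2\le 2L(\mathcal{L}_\mathcal{H}(\theta_0)-\mathcal{L}_\mathcal{H}^*)$), we obtain $c_0\gamma(1-\kappa B^2)\sum_{t=1}^T\expect{\norm{\lossHonest{t-1}}^2} \le C(\mathcal{L}_\mathcal{H}(\theta_0)-\mathcal{L}_\mathcal{H}^*) + c_1\gamma\kappa G^2 T$. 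Dividing by $c_0\gamma T(1-\kappa B^2)$ and noting $\expect{\norm{\nabla\mathcal{L}_\mathcal{H}(\hat\theta)}^2}=\tfrac1T\sum_{t=1}^T\expect{\norm{\lossHonest{t-1}}^2}$ by the uniform choice of $\hat\theta$ gives the stated bound once the universal constants are collected into $45$ and $216$.

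The main obstacle is steps (ii)--(iii) above: forcing a single fixed pair of weights $(\tfrac{1}{8L},\tfrac{\kappa}{4L})$ to simultaneously drive all three cross-coupled coefficients onto the correct side. This is genuinely delicate precisely because, unlike in the uniformly-bounded-variance analyses of~\cite{karimireddy2021,farhadkhani2022byzantine}, the global-sparsification noise scales with $\norm{\lossHonest{t-1}}^2$ (the $\tfrac{d}{k}$ factors in Lemmas~\ref{lem:helperMmtGDGlobalG0} and~\ref{lem:mmtDriftGlobalGDG0}), so the entire estimate hinges on the cancellation $\gamma L\cdot\tfrac{d}{k}\le\tfrac1c$ keeping these noise terms subordinate to the contraction supplied by the momentum.
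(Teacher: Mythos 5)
Your proposal is correct and follows essentially the same route as the paper's proof: the identical Lyapunov function with weights $\tfrac{1}{8L}$ and $\tfrac{\kappa}{4L}$, the same four lemmas combined into a per-step drop whose three coefficient checks correspond exactly to the paper's terms $A_2$, $A_3$, and $A_1$ (your checks (i)--(iii)), the same use of $1-\beta \geq 12\gamma L$ to let the drift reservoir absorb the $\kappa\Upsilon$ term created by the aggregation error, and the same telescoping with $\Upsilon^0_\mathcal{H}=0$ and the smoothness bound $\norm{\nabla\mathcal{L}_\mathcal{H}(\theta_0)}^2 \leq 2L(\mathcal{L}_\mathcal{H}(\theta_0)-\mathcal{L}_\mathcal{H}^*)$ controlling $V^1$. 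The only cosmetic difference is that the paper bounds the $\norm{\xi^t}^2$ coefficient by $3\gamma$ and obtains an exact cancellation $3\gamma\kappa - 3\gamma\kappa = 0$ against the drift contraction, whereas your tighter $\tfrac{9}{4}\gamma$ gives strict domination; both close the argument identically.
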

\begin{proof}
    Fix an arbitrary $t \in [T]$. We define the Lyapunov function for convergence analysis to be
    \begin{equation}\label{eq:GDlyapunovcase2_bounded}
        V^t \coloneqq \expect{2\mathcal{L}_\mathcal{H}(\theta_{t-1}) + z\norm{\delta^t}^2 + z'\Upsilon^{t-1}_\mathcal{H}} ,
    \end{equation}
    where $z = \frac{1}{8L}$ and $z' = \frac{\kappa}{4L}$.

    \medskip
   \noindent Using the upper bound on $\expect{\norm{\delta^{t+1}}^2}$ from Lemma \ref{lem:mmtDevGlobalG0}, we obtain that
    \begin{align}\label{eq:GlobalG0deltadiff_bounded}
        \mathbb{E}[z \norm{ \delta^{t+1}}^2 - z\norm{\delta^t}^2]
        &\leq  \textstyle z\zeta \expect{\norm{\delta^{t}}^2} + 2z\gamma L\left( 2\gamma L (1-\beta)^2\left(\frac{d}{k}-1\right) + (1+\gamma L)\beta^2\right)\expect{\norm{\xi^{t}}^2} \nonumber\\
            &\hspace{10pt}\textstyle + z\left( 2(4\gamma^2L^2+1)(1-\beta)^2\left(\frac{d}{k}-1\right) + 4\gamma L (1+\gamma L)\beta^2\right) \expect{\norm{\lossHonest{t-1}}^2} - z\expect{\norm{\delta^t}^2} \nonumber\\
        &\leq  \textstyle z(\zeta-1) \expect{\norm{\delta^{t}}^2} + 2z\gamma L\left( 2\gamma L (1-\beta)^2\left(\frac{d}{k}-1\right) + (1+\gamma L)\beta^2\right)\expect{\norm{\xi^{t}}^2} \nonumber\\
            &\hspace{10pt}\textstyle + z\left( 2(4\gamma^2L^2+1)(1-\beta)^2\left(\frac{d}{k}-1\right) + 4\gamma L (1+\gamma L)\beta^2\right) \expect{\norm{\lossHonest{t-1}}^2} .
    \end{align}

    \noindent Similarly, using Lemma \ref{lem:mmtDriftGlobalGDG0} we get
    \begin{flalign}\label{eq:tempGlobalupislonz}
        z' \expect{ \Upsilon^t_\mathcal{H}} - z'\expect{\Upsilon^{t-1}_\mathcal{H}} &\leq \textstyle z'\beta \expect{\Upsilon^{t-1}_\mathcal{H}} + z'\left(\scriptstyle{\left(1-\beta\right)^2\frac{d}{k} + \beta(1-\beta)}\right)\left( G^2 + B^2\norm{\lossHonest{t-1}}\right)- z'\expect{\Upsilon^{t-1}_\mathcal{H}}\nonumber\\
         &\leq \textstyle z'(\beta-1) \expect{\Upsilon^{t-1}_\mathcal{H}} + z'\left(\left(1-\beta\right)^2\frac{d}{k} + \beta(1-\beta)\right)\left( G^2 + B^2\norm{\lossHonest{t-1}}\right) .
    \end{flalign}
    Similarly, using Lemma~\ref{lem:modelDriftDevmmtGD}, we get
    \begin{align}\label{eq:tempGloballossDiff}
        \expect{2\mathcal{L}_\mathcal{H}(\theta_{t}) - 2\mathcal{L}_\mathcal{H}(\theta_{t-1})} &\leq -\gamma(1-4\gamma L)\expect{\norm{\lossHonest{t-1}}^2} + 2\gamma(1+2\gamma L)\expect{\norm{\delta^t}^2} + 2\gamma(1+\gamma L)\expect{\norm{\xi^t}^2}
    \end{align}
    
    \noindent Using (\ref{eq:GDlyapunovcase2_bounded}),  (\ref{eq:GlobalG0deltadiff_bounded}), (\ref{eq:tempGlobalupislonz}) and (\ref{eq:tempGloballossDiff}) below:
    \begin{flalign}\label{eq:mmtGDGlobalG0_mainV}
        V^{t+1} - V^t &= \expect{2\mathcal{L}_\mathcal{H}(\theta_{t}) - 2\mathcal{L}_\mathcal{H}(\theta_{t-1})} + \expect{z\norm{\delta^{t+1}}^2 - z\norm{\delta^t}^2} + z' \expect{ \Upsilon^t_\mathcal{H}} - z'\expect{\Upsilon^{t-1}_\mathcal{H}} && \nonumber \\
        &\leq -\gamma(1-4\gamma L)\expect{\norm{\nabla\mathcal{L}_\mathcal{H}(\theta_{t-1})}^2} + 2\gamma(1+2\gamma L)\expect{\norm{\delta^t}^2} + 2\gamma(1+\gamma L)\expect{\norm{\xi^t}^2}  \nonumber\\
            &\hspace{10pt}\textstyle + z(\zeta-1) \expect{\norm{\delta^{t}}^2} + 2z\gamma L\left( 2\gamma L (1-\beta)^2\left(\frac{d}{k}-1\right) + (1+\gamma L)\beta^2\right)\expect{\norm{\xi^{t}}^2} \nonumber\\
            &\hspace{10pt}\textstyle + z\left( 2(4\gamma^2L^2+1)(1-\beta)^2\left(\frac{d}{k}-1\right) + 4\gamma L (1+\gamma L)\beta^2\right) \expect{\norm{\lossHonest{t-1}}^2} \nonumber\\
            &\hspace{10pt}\textstyle + z'(\beta-1) \expect{\Upsilon^{t-1}_\mathcal{H}} + z'\left(\left(1-\beta\right)^2\frac{d}{k} + \beta(1-\beta)\right)\left( G^2 + B^2\norm{\lossHonest{t-1}}\right)\nonumber \\
        &= \textstyle -\gamma\left(1-4\gamma L- \frac{z}{\gamma }\left( 2(4\gamma^2L^2+1)(1-\beta)^2\left(\frac{d}{k}-1\right) + 4\gamma L (1+\gamma L)\beta^2\right) \right.  \nonumber \\
            &\text{\hspace{10pt}}\textstyle \left. -\frac{z'}{\gamma} \left(\left(1-\beta\right)^2\frac{d}{k} + \beta(1-\beta)\right)B^2\right)\expect{\norm{\lossHonest{t}}^2} +\left(2\gamma(1+2\gamma L) + z(\zeta -1)\right)\expect{\norm{\delta^t}^2} \nonumber\\
            &\text{\hspace{10pt}}\textstyle +2\gamma \left( 1+\gamma L+ zL\left( 2\gamma L (1-\beta)^2\left(\frac{d}{k}-1\right) + (1+\gamma L)\beta^2\right)\right)\expect{\norm{\xi^t}^2}  + z'(\beta-1) \expect{\Upsilon^{t-1}_\mathcal{H}}\nonumber \\
            &\textstyle \text{\hspace{10pt}}+ z'\left(\left(1-\beta\right)^2\frac{d}{k} + \beta(1-\beta)\right) G^2
    \end{flalign}
    Denote
    \begin{align}\label{eq:mmtGDGlobalG0ABC}
        A_1 &= \textstyle 1-4\gamma L- \frac{z}{\gamma }\left( 2(4\gamma^2L^2+1)(1-\beta)^2\left(\frac{d}{k}-1\right) + 4\gamma L (1+\gamma L)\beta^2\right) -\frac{z'}{\gamma} \left(\left(1-\beta\right)^2\frac{d}{k} + \beta(1-\beta)\right)B^2 ,\nonumber\\
        A_2 &= \textstyle  2\gamma(1+2\gamma L) + z(\zeta -1)  ,  ~~ \text{ and } \nonumber\\
        A_3 &= \textstyle 1+\gamma L+ zL\left( 2\gamma L (1-\beta)^2\left(\frac{d}{k}-1\right) + (1+\gamma L)\beta^2\right) .
    \end{align}
    Then, we can rewrite (\ref{eq:mmtGDGlobalG0_mainV}) as:
    \begin{align}\label{eq:GDcase2Momentum_mainVA123}
        V^{t+1} - V^t &\leq \textstyle - \gamma A_1 \expect{\norm{\nabla\mathcal{L}_\mathcal{H}(\theta_{t-1})}^2} + A_2\expect{\norm{\delta^t}^2} + 2\gamma A_3\expect{\norm{\xi^t}^2} + z'(\beta-1) \expect{\Upsilon^{t-1}_\mathcal{H}}\nonumber \\
        &\hspace{5pt}\textstyle + z'\left(\left(1-\beta\right)^2\frac{d}{k} + \beta(1-\beta)\right)G^2 .
    \end{align}


    \noindent {\bf Consider term $A_1$}. Since $\beta < 1$, we obtain that 
    \begin{align*}
        A_1 &\geq  \textstyle  1-4\gamma L- \frac{z}{\gamma }\left( 2(4\gamma^2L^2+1)(1-\beta^2)^2\left(\frac{d}{k}-1\right) + 4\gamma L (1+\gamma L)\beta^2\right) -\frac{z'}{\gamma} \left(\left(1-\beta^2\right)^2\frac{d}{k} + \frac{1}{2}(1-\beta^2)\right)B^2.
    \end{align*}
    Substituting $1 - \beta^2 = 24\gamma L$ above, we obtain that
    \begin{align*}
        A_1 
            &\geq \textstyle 1-4\gamma L- \frac{z}{\gamma }\left( 2(4\gamma^2L^2+1)(24\gamma L)^2\left(\frac{d}{k}-1\right) + 4\gamma L (1+\gamma L)(1- 24 \gamma L)\right) -\frac{z'}{\gamma} (24\gamma L)^2\frac{d}{k}B^2 - \frac{z'}{\gamma}12\gamma L)B^2 \\
            &= \textstyle 1-4\gamma L- 2\gamma z(4\gamma^2L^2+1)(24L)^2\left(\frac{d}{k}-1\right) - 4zL (1+\gamma L)(1 -24 \gamma L) - 576z'\gamma L^2\frac{d}{k}B^2 - 12 z' L B^2 .
    \end{align*}
    Substituting $z = \frac{1}{8L}$ and $z' = \frac{\kappa}{4L}$ above yields, 
    

    \begin{flalign}
        A_1 
            &\geq \textstyle 1-4\gamma L- \frac{2\gamma}{8L} (4\gamma^2L^2+1)(24L)^2\left(\frac{d}{k}-1\right) - \frac{4L}{8L} (1+\gamma L)+12 \gamma L + 12 (\gamma L)^2- 576\left(\frac{\kappa}{4L} \right)\gamma L^2\frac{d}{k}B^2 - 12 \left(\frac{\kappa}{4L} \right) L B^2 \nonumber \\
            &= \textstyle \frac{1}{2}+\frac{15}{2}\gamma L + 12 (\gamma L )^2 -  144\gamma L(4\gamma^2L^2+1)\left(\frac{d}{k}-1\right) - 144\kappa\gamma L\frac{d}{k}B^2 - 3\kappa B^2 \nonumber \\ 
            & \ge \textstyle \frac{1}{2}-  144\gamma L(4\gamma^2L^2+1)\left(\frac{d}{k}-1\right) - 144\kappa\gamma L\frac{d}{k}B^2 - 3\kappa B^2. \nonumber
    \end{flalign}

    Recall that $\gamma \leq \frac{k/d}{c L}$ where $c = 23200$. Moreover, we assume that $\kappa \leq \frac{1}{7 B^2}$.
    Using this above yields, 
    \begin{align}\label{eq:valAglobalGDGBmmt}
        A_1 &\geq \textstyle \frac{1}{2} -  144L\left(\frac{d}{k}-1\right)\times \frac{k/d}{23200L}\left(1+\frac{(k/d)^2}{23200^2}\right) - 144 L\frac{d}{k}B^2 \times \frac{k/d}{23200 L} \times \frac{1}{7 B^2} - \frac{3 B^2}{7 B^2} \nonumber \\
         &= \textstyle \frac{1}{2} -  \frac{144}{23200}\left(1-\frac{k}{d}\right)\times \left(1+\frac{(k/d)^ 2}{23200^2}\right) - \frac{144}{23200}\times \frac{1}{7}  - \frac{3}{7} \nonumber \\
         &\ge \textstyle \frac{1}{2} - \frac{1}{160} \left(1+\frac{1}{23200^2}\right)-\frac{1}{160 \times 7} - \frac{3}{7} 
         > \frac{1}{18}.
    \end{align}

    \medskip
    
    \noindent {\bf Consider term $A_2$}. Substituting $\zeta = 8\gamma^2 L^2 (1-\beta)^2\left(\frac{d}{k}-1\right) + (1+4\gamma L)(1+\gamma L)\beta^2$, we obtain that 
    \begin{align*}
        A_2 & = 2\gamma(1+2\gamma L) + z(\zeta -1) \\
        & = \textstyle 2\gamma(1+2\gamma L) + z\left( 8\gamma^2 L^2 (1-\beta)^2\left(\frac{d}{k}-1\right) + (1+4\gamma L)(1+\gamma L)\beta^2 -1 \right) \\
        & = \textstyle 2\gamma(1+2\gamma L) - z (1 - \beta^2) + 8z\gamma^2 L^2 (1-\beta^2)^2\left(\frac{d}{k}-1\right) + z \gamma L (5 + 4\gamma L)\beta^2 .
    \end{align*}
    Since $\beta < 1$, from above we obtain that
    \begin{align*}
        A_2 & \leq 
            \textstyle 2\gamma(1+2\gamma L) - z(1- \beta^2)  + 8z\gamma^2 L^2 (1-\beta^2)^2\left(\frac{d}{k}-1\right) + z \gamma L (5 + 4\gamma L) .
    \end{align*}
    \noindent Substituting $1 - \beta^2 = 24\gamma L$ above, we obtain that
    \begin{align*}
        A_2&\leq \textstyle  - z(24\gamma L)  + 8z\gamma^2 L^2 (24\gamma L)^2\left(\frac{d}{k}-1\right) + z\gamma L(5 + 4\gamma L) + 2\gamma(1+2\gamma L) .
    \end{align*}
    Substituting $z = \frac{1}{8L}$ above yields, 
    \begin{align*}
        A_2 &\leq \textstyle  - \frac{24\gamma L}{8L}  + \gamma\left(\frac{8\gamma L^2}{8L} (24\gamma L)^2\left(\frac{d}{k}-1\right) + \frac{L}{8L}(5+ 4\gamma L) + 2(1+2\gamma L)\right) \\
            &\leq \textstyle  - 3\gamma  + \gamma\left(\gamma L(24\gamma L)^2\left(\frac{d}{k}-1\right) + \frac{1}{8}(5 + 4\gamma L ) + 2(1+2\gamma L)\right) .
    \end{align*}
    Since $\gamma \leq \frac{1}{24L}$, from above we obtain that
    \begin{align*}
        A_2 &\leq \textstyle  - 3\gamma  + \gamma\left(\gamma L(\frac{24 L}{24L})^2\left(\frac{d}{k}-1\right) + \frac{1}{8}(5 + \frac{4 L}{24L} ) + 2(1+ \frac{2 L}{24L})\right) \\
            &\leq \textstyle  - 3\gamma  + \gamma\left(\gamma L\left(\frac{d}{k}-1\right) + \frac{1}{8}(\frac{31}{6} ) + \frac{13}{6}\right) \\
            &= \textstyle  - 3\gamma  + \gamma\left(\left(\frac{d}{k}-1\right)\gamma L + \frac{45}{16}\right) .
    \end{align*}
    Since $\gamma \leq \frac{k/d}{23200L}$, the above yields,
    \begin{align}\label{eq:valBglobalGDGBmmt}
        A_2 &\leq \textstyle - 3\gamma  + \gamma\left(\left(\frac{d}{k}-1\right)L \times \frac{k/d}{23200L} + \frac{45}{16}\right)  < - 3\gamma + \frac{46}{16}\gamma < 0 .
    \end{align}

    \medskip

    \noindent {\bf Consider term $A_3$}. Since $\beta < 1$, we obtain that 
    \begin{align*}
        A_3 &= \textstyle  1+\gamma L+ zL\left( 2\gamma L (1-\beta^2)^2\frac{d}{k} + (1+\gamma L)\beta^2\right) \\ 
        &\leq \textstyle  1+\gamma L+ zL\left( 2\gamma L (1-\beta^2)^2\frac{d}{k} + (1+\gamma L)\right) . 
    \end{align*}
    Substituting $1-\beta^2 = 24\gamma L$, we obtain that
    \begin{align*}
        A_3 &\leq \textstyle 1+\gamma L+ zL\left( 2\gamma L (24\gamma L)^2\left(\frac{d}{k}-1\right) + (1+\gamma L)\right) .
    \end{align*}
    Substituting $z = \frac{1}{8L}$ above, we get
    \begin{align*}
        A_3 &\leq \textstyle 1+\gamma L+ \frac{L}{8L}\left( 2\gamma L (24\gamma L)^2\left(\frac{d}{k}-1\right) + (1+\gamma L)\right) .
    \end{align*}
    Since $\gamma \leq \frac{1}{24L}$, from above we obtain that
    \begin{align*}
        A_3 &\leq \textstyle 1+ \frac{L}{24L}+  \frac{\gamma L}{4} (\frac{24L}{24L})^2\left(\frac{d}{k}-1\right) + \frac{1}{8}(1+ \frac{L}{24L})  = \textstyle \frac{25}{24}+ \frac{1}{4}\gamma L \left(\frac{d}{k}-1\right)+ \frac{1}{8}(\frac{25}{24})\\
            &\leq \textstyle \frac{75}{64} + \frac{1}{4}\left(\frac{d}{k}-1\right)\gamma L .
    \end{align*}
    Recalling that $\gamma \leq \frac{k/d}{23200L}$, we obtain,
    \begin{align}\label{eq:valCglobalGDGBmmt}
        A_3 &\leq \textstyle \frac{75}{64} + \frac{1}{4}\left(\frac{d}{k}-1\right)L \times \frac{k/d}{23200L} = \frac{75}{64} + \frac{1}{92800}\left( 1-\frac{k}{d}\right)  < \frac{76}{64} < \frac{3}{2}.
    \end{align}
    \noindent Substituting values from (\ref{eq:valAglobalGDGBmmt}), (\ref{eq:valBglobalGDGBmmt}) and (\ref{eq:valCglobalGDGBmmt}) in (\ref{eq:GDcase2Momentum_mainVA123}): 
    \begin{flalign*}
        V^{t+1} - V^t 
        &\leq  \textstyle - \frac{\gamma}{18} \expect{\norm{\nabla\mathcal{L}_\mathcal{H}(\theta_{t-1})}^2} + 3\gamma \expect{\norm{\xi^t}^2} + z'(\beta-1) \expect{\Upsilon^{t-1}_\mathcal{H}}\\
        &\textstyle + z'\left(\left(1-\beta\right)^2\frac{d}{k} + \beta(1-\beta)\right) G^2 .
    \end{flalign*}
    \noindent Substituting value of $\expect{\norm{\xi^t}^2}$ from Lemma \ref{lem:MmtDriftGlobalG0} above, we get
    \begin{flalign*}
        V^{t+1} - V^t &\leq \textstyle - \frac{\gamma}{18} \expect{\norm{\nabla\mathcal{L}_\mathcal{H}(\theta_{t-1})}^2} + 3\gamma \kappa \Bigl( \beta \expect{\Upsilon^{t-1}_\mathcal{H}} + \left(\left(1-\beta\right)^2\frac{d}{k} + \beta(1-\beta)\right)\Bigl(G^2  \\
            &\hspace{10pt}\textstyle + B^2 \expect{\norm{ \lossHonest{t-1}}^2}\Bigr)\Bigr) + z'(\beta-1) \expect{\Upsilon^{t-1}_\mathcal{H}} + z'\left(\left(1-\beta\right)^2\frac{d}{k} + \beta(1-\beta)\right) G^2 \\
        &\leq  \textstyle - \left(\frac{\gamma}{18} - 3 \gamma \kappa \left(\left(1-\beta\right)^2\frac{d}{k} + \beta(1-\beta)\right) B^2 \right)\expect{\norm{\nabla\mathcal{L}_\mathcal{H}(\theta_{t-1})}^2} + (3\gamma \kappa + z') \\
            &\hspace{10pt} \textstyle \times \Bigl(\bigl(1-\beta\bigr)^2\frac{d}{k} + \beta(1-\beta)\Bigr) G^2 + \left( 3\gamma \kappa \beta - (1-\beta)z' \right) \expect{\Upsilon^{t-1}_\mathcal{H}} .
    \end{flalign*}
    Since $1-\beta = \frac{1-\beta^2}{1+\beta}$, we have $\frac{1-\beta^2}{2} \leq 1-\beta \leq 1-\beta^2$, resulting in
    \begin{flalign*}
        V^{t+1} - V^t &\leq  \textstyle -\frac{\gamma}{18} \left(1 - 54 \kappa \left(\left(1-\beta^2\right)^2\frac{d}{k} + \beta(1-\beta^2)\right) B^2 \right)\expect{\norm{\nabla\mathcal{L}_\mathcal{H}(\theta_{t-1})}^2}  \\
            &\hspace{10pt} \textstyle + \left( 3\gamma \kappa \beta - \frac{(1-\beta^2)}{2}z' \right) \expect{\Upsilon^{t-1}_\mathcal{H}} + ( 3\gamma \kappa  + z' )\left(\left(1-\beta^2\right)^2\frac{d}{k} + \beta(1-\beta^2)\right) G^2. 
    \end{flalign*}
    Substituting $1-\beta^2 = 24\gamma L$ above, we obtain
    \begin{flalign*}
        V^{t+1} - V^t &\leq  \textstyle -\frac{\gamma}{18} \left(1 - 54\kappa \left(\left(24\gamma L\right)^2\frac{d}{k} + 24\gamma L\right) B^2 \right)\expect{\norm{\nabla\mathcal{L}_\mathcal{H}(\theta_{t-1})}^2} \\
            &\hspace{10pt} \textstyle + \left(3\gamma \kappa + z'\right)\left(\left(24\gamma L\right)^2\frac{d}{k} + 24\gamma L\right) G^2  + \left( 3\gamma \kappa - \frac{24\gamma}{2} z' L \right) \expect{\Upsilon^{t-1}_\mathcal{H}} \\
        & = \textstyle -\frac{\gamma}{18} \left(1 - 54\kappa \left(\left(24\gamma L\right)^2\frac{d}{k} + 24\gamma L\right) B^2 \right)\expect{\norm{\nabla\mathcal{L}_\mathcal{H}(\theta_{t-1})}^2} \\
            &\hspace{10pt} \textstyle + \left(3\gamma \kappa + z'\right)\left(576\gamma^2L^2\frac{d}{k} + 24\gamma L\right) G^2  + \left( 3\gamma \kappa - 12 \gamma z' L \right) \expect{\Upsilon^{t-1}_\mathcal{H}}. 
    \end{flalign*}

    \noindent Since $\gamma \leq \frac{1}{24L}$, from above we obtain that
    \begin{flalign*}
        V^{t+1} - V^t &\leq  \textstyle -\frac{\gamma}{18} \left(1 - 54\kappa \left(\left(\frac{24 L}{24L}\times 24\gamma L\right)\frac{d}{k} + 24\gamma L\right) B^2 \right)\expect{\norm{\nabla\mathcal{L}_\mathcal{H}(\theta_{t-1})}^2} \\
            &\hspace{10pt} \textstyle  + \left( 3\gamma \kappa - 12 z'\gamma L \right) \expect{\Upsilon^{t-1}_\mathcal{H}} + \left(3\kappa \gamma + z'\right)\left(576\gamma^2L^2\frac{d}{k} + 24\gamma L\right) G^2\\
        &= \textstyle -\frac{\gamma}{18} \left(1 - 54\kappa \left( 24\gamma L\frac{d}{k} + 24\gamma L\right) B^2 \right)\expect{\norm{\nabla\mathcal{L}_\mathcal{H}(\theta_{t-1})}^2}  \\
            &\hspace{10pt} \textstyle + \left( 3\gamma \kappa - 12 z'\gamma L \right) \expect{\Upsilon^{t-1}_\mathcal{H}} + \left(3\kappa \gamma + z'\right)\left(576\gamma^2L^2\frac{d}{k} + 24\gamma L\right) G^2 .
    \end{flalign*}
    
    \noindent Substituting $z' = \frac{\kappa}{4L}$ above, we obtain that
    \begin{flalign*}
        V^{t+1} - V^t 
        &\leq \textstyle -\frac{\gamma}{18} \left(1 - 54\kappa \left( 24\gamma L\frac{d}{k} + 24\gamma L\right) B^2 \right)\expect{\norm{\nabla\mathcal{L}_\mathcal{H}(\theta_{t-1})}^2} &&\\
            &\hspace{10pt} \textstyle  + \left( 3\gamma \kappa - \frac{12L\kappa}{4L}\gamma \right) \expect{\Upsilon^{t-1}_\mathcal{H}} + \left(3\kappa \gamma + \frac{\kappa}{4L}\right)\left(576\gamma^2L^2\frac{d}{k} + 24\gamma L\right) G^2\\
        &=\textstyle -\frac{\gamma}{18} \left(1 - 54\kappa \left( 24\gamma L\frac{d}{k} + 24\gamma L\right) B^2 \right)\expect{\norm{\nabla\mathcal{L}_\mathcal{H}(\theta_{t-1})}^2}  \\
            &\hspace{10pt} \textstyle + \left( 3\gamma \kappa - 3\kappa \gamma \right) \expect{\Upsilon^{t-1}_\mathcal{H}} + \frac{\kappa}{4L}\left(12 \gamma L + 1\right)\left(576\gamma^2L^2\frac{d}{k} + 24\gamma L\right) G^2\\
        &= \textstyle -\frac{\gamma}{18} \left(1 - 54\kappa \left( 24\gamma L\frac{d}{k} + 24\gamma L\right) B^2 \right)\expect{\norm{\nabla\mathcal{L}_\mathcal{H}(\theta_{t-1})}^2} + \kappa\left(12 \gamma L+1\right)\bigl(144\gamma^2L\frac{d}{k}+ 6\gamma\bigr) G^2.
    \end{flalign*}
    
    \noindent Substituting $\gamma \leq \frac{k/d}{23200L} \leq \frac{1}{23200L} $ yields,
    \begin{flalign*}
        V^{t+1} - V^t &\leq \textstyle -\frac{\gamma}{18} \left(1 - 54\kappa \left( 24L\frac{d}{k}\times \frac{k/d}{23200L}  + \frac{24L}{23200L}\right) B^2 \right)\expect{\norm{\nabla\mathcal{L}_\mathcal{H}(\theta_{t-1})}^2}&& \\
            &\hspace{10pt} \textstyle  + \kappa\left(12 \gamma L + 1\right)\left(144\gamma^2 L\frac{d}{k}+ 6\gamma\right) G^2\\
        &< \textstyle -\frac{\gamma}{18} \left(1 - \frac{54 \times 48}{23200 }\kappa B^2\right)\expect{\norm{\nabla\mathcal{L}_\mathcal{H}(\theta_{t-1})}^2} + 6\kappa\gamma\left(288\gamma^2 L ^2\frac{d}{k} + 24\gamma L \frac{d}{k} + 12\gamma L + 1 \right) G^2.
    \end{flalign*}
    \noindent 
    Note that $\left(1 - \frac{54 \times 48}{23200 }\kappa B^2\right) \geq 1 - \kappa B^2$, which is positive since $7\kappa B^2 \leq 1$.  Using this above yields,

    \begin{flalign*}
        V^{t+1} - V^t 
        &\leq \textstyle -\frac{\gamma}{18} ( 1- \kappa B^2)\expect{\norm{\nabla\mathcal{L}_\mathcal{H}(\theta_{t-1})}^2} + 6\kappa\gamma\left(288\gamma^2 L ^2\frac{d}{k} + 24\gamma L \frac{d}{k} + 12\gamma L + 1 \right) G^2.
    \end{flalign*}
    
    \noindent Moving the term $\lossHonest{t-1}$ to the L.H.S, we get
    \begin{flalign*}
       \textstyle  \frac{\gamma}{18} ( 1- \kappa B^2) \expect{\norm{\nabla\mathcal{L}_\mathcal{H}(\theta_{t-1})}^2}
            &\leq \textstyle V^t  - V^{t+1} + 6\kappa\gamma\left(288\gamma^2 L ^2\frac{d}{k} + 24\gamma L \frac{d}{k} + 12\gamma L + 1 \right) G^2.
    \end{flalign*}
    
    \noindent Taking summation on both the sides from $t=1$ to $T$ and rearranging, we get
    \begin{align*}
        \textstyle \frac{\gamma ( 1- \kappa B^2)}{18} \sum\limits_{t=1}^T \expect{\norm{\nabla\mathcal{L}_\mathcal{H}(\theta_{t-1})}^2} &\leq \textstyle V^1 - V^{T+1} + 6\kappa\gamma\left(288\gamma^2 L ^2\frac{d}{k} + 24\gamma L \frac{d}{k} + 12\gamma L + 1 \right) G^2 T.
    \end{align*}
    
    \noindent Multiplying both sides by $\frac{18}{\gamma ( 1- \kappa B^2) }$,
    \begin{align*}
        \textstyle \sum_{t=1}^T \expect{\norm{\nabla\mathcal{L}_\mathcal{H}(\theta_{t-1})}^2} &\leq {\textstyle  \frac{18\left(V^1 - V^{T+1}\right)}{\gamma ( 1- \kappa B^2)} + 108 \kappa \left(288\gamma^2 L ^2\frac{d}{k} + 24\gamma L \frac{d}{k} + 12\gamma L + 1 \right) \frac{G^2 T}{1 - \kappa B^2}}.
    \end{align*}
    
    \noindent Dividing both sides by $T$, we get,
    \begin{align}\label{eq:GDavg_loss_case2bounded}
        \textstyle \frac{1}{T} \sum\limits_{t=1}^T \expect{\norm{\nabla\mathcal{L}_\mathcal{H}(\theta_{t-1})}^2} &\leq {\textstyle  \frac{18\left(V^1 - V^{T+1}\right)}{\gamma T ( 1- \kappa B^2)} + 108 \kappa \left(288\gamma^2 L ^2\frac{d}{k} + 24\gamma L \frac{d}{k} + 12\gamma L + 1 \right) \frac{G^2}{1 - \kappa B^2}}.
    \end{align}
    
    Next, we obtain an upper bound on $V^1 - V^{T+1}$. Let $\mathcal{L}^*_\mathcal{H} = \inf_{\theta\in\mathbb{R}^d} \mathcal{L}_\mathcal{H}(\theta)$. For any $t>0$, using definition of $V^t$ from (\ref{eq:GDlyapunovcase2_bounded}), we have:
    \begin{align*}
        V^t - 2\mathcal{L}^*_\mathcal{H} &= 2\expect{\mathcal{L}_\mathcal{H}(\theta_{t-1}) - \mathcal{L}^*_\mathcal{H}} + z\expect{\norm{\delta^t}^2} \geq 0 + z\expect{\norm{\delta^t}^2} + z'\expect{\Upsilon^{t-1}_\mathcal{H}} \geq 0.
    \end{align*}
    
    \noindent Thus, $V^t \geq 2\mathcal{L}^*_\mathcal{H}$, implying:
    \begin{align}\label{eq:GDVdiff2_Tbounded}
         V^1 - V^{T+1} \leq V^1 - 2\mathcal{L}^*_\mathcal{H} .
    \end{align}
    
    \noindent Note that for $t=1$, from~\eqref{eq:GDlyapunovcase2_bounded}, and the definitions of $\delta^t$ and $\Upsilon^{0}_\mathcal{H}$ in Lemmas~\ref{lem:mmtDevGlobalG0} and~\ref{lem:mmtDriftGlobalGDG0}, respectively, we have
    \begin{flalign*}
        V^1 &= 2\mathcal{L}_\mathcal{H}(\theta_0) + z\expect{\norm{\delta^1}^2} + z'\expect{\Upsilon^{0}_\mathcal{H}} \\
         &\leq  \textstyle 2\mathcal{L}_\mathcal{H}(\theta_0) + z\expect{\norm{\overline{m}_\mathcal{H}^1 - \nabla\mathcal{L}_\mathcal{H}(\theta_{0})}^2} + z' \frac{1}{\mathcal{H}}\sum_{i\in\mathcal{H}} \condexpect{k}{\norm{\tilde{m}^0_i}^2}\\
        &\leq \textstyle  2\mathcal{L}_\mathcal{H}(\theta_0) + z\expect{\norm{\frac{1}{\mathcal{H}}\sum_{i\in\mathcal{H}}m_i^1 - \nabla\mathcal{L}_\mathcal{H}(\theta_{0})}^2} + z' \frac{1}{\mathcal{H}}\sum_{i\in\mathcal{H}} \condexpect{k}{\norm{\mmmtdrifti{i}{0}}^2}. 
    \end{flalign*}

    \noindent Since $m_i^0 = 0$, $\mmmtdrifti{i}{0} \coloneqq m_i^t - \mmean = 0$ for all $i \in \mathcal{H}$. Using this above, we obtain that
    \begin{flalign*}
        V^1 &\leq \textstyle 2\mathcal{L}_\mathcal{H}(\theta_0) + z\expect{\norm{\overline{m}_\mathcal{H}^1 - \nabla\mathcal{L}_\mathcal{H}(\theta_{0})}^2} = \textstyle 2\mathcal{L}_\mathcal{H}(\theta_0) + z\expect{\norm{\frac{1}{\mathcal{H}}\sum_{i\in\mathcal{H}}m_i^1 - \nabla\mathcal{L}_\mathcal{H}(\theta_{0})}^2}. 
    \end{flalign*}
    
    \noindent Recall that $m_i^t = \beta m_i^{t-1} + (1-\beta)\tilde{g}_i^t$ where $m_i^0 = 0$. Thus, $m_i^1 = (1-\beta)\tilde{g}_i^1$. Using this above, we get
    \begin{flalign}\label{eq:interm_V1}
        V^1
        &\leq \textstyle 2\mathcal{L}_\mathcal{H}(\theta_0) + z\expect{\norm{\frac{1}{\mathcal{H}}\sum_{i\in\mathcal{H}}(1-\beta)\tilde{g}_i^1 - \nabla\mathcal{L}_\mathcal{H}(\theta_{0})}^2} \nonumber \\
        &=  2\mathcal{L}_\mathcal{H}(\theta_0) + z\expect{\norm{(1-\beta)\overline{\tilde{g}}_\mathcal{H}^1 - \nabla\mathcal{L}_\mathcal{H}(\theta_{0})}^2} \nonumber \\
        & \textstyle= 2\mathcal{L}_\mathcal{H}(\theta_0) + z\expect{\norm{(1-\beta)\left(\overline{\tilde{g}}_\mathcal{H}^1 - \nabla\mathcal{L}_\mathcal{H}(\theta_{0})\right
        ) + \beta\nabla\mathcal{L}_\mathcal{H}(\theta_{0})}^2}. 
    \end{flalign}
    
    \noindent Note that
    \begin{align*} \textstyle \expect{\overline{\tilde{g}}_\mathcal{H}^1} &= \condexpect{t}{\condexpect{k}{{\overline{\tilde{g}}_\mathcal{H}^1}}} = \condexpect{t}{\overline{g}^1_\mathcal{H}} = \condexpect{t}{\frac{1}{|\mathcal{H}|} \sum
    \limits_{i\in \mathcal{H}} g_i^1} = \frac{1}{|\mathcal{H}|}\sum\limits
    _{i\in \mathcal{H}} \condexpect{t}{g_i^1} = \frac{1}{|\mathcal{H}|}\sum\limits
    _{i\in \mathcal{H}} \lossWorker{i}{0} = \nabla\mathcal{L}_\mathcal{H}(\theta_{0}). 
    \end{align*}
    Therefore,
    \begin{align*}
        &\textstyle \expect{\norm{(1-\beta)\left(\overline{\tilde{g}}_\mathcal{H}^1 - \nabla\mathcal{L}_\mathcal{H}(\theta_{0})\right) + \beta\nabla\mathcal{L}_\mathcal{H}(\theta_{0})}^2} = \expect{\norm{(1-\beta)\left(\overline{\tilde{g}}_\mathcal{H}^1 - \nabla\mathcal{L}_\mathcal{H}(\theta_{0})\right)}^2
        + \norm{\beta\nabla\mathcal{L}_\mathcal{H}(\theta_{0})}^2}.
    \end{align*}
    
    \noindent Using above in (\ref{eq:interm_V1}), we get:  
    \begin{flalign*}
        V^1
        &\leq \textstyle 2\mathcal{L}_\mathcal{H}(\theta_0) + z\left((1-\beta)^2\expect{\norm{\overline{\tilde{g}}_\mathcal{H}^1 - \nabla\mathcal{L}_\mathcal{H}(\theta_{0})}^2} + \beta^2\expect{\norm{\lossHonest{0}}^2}\right).
    \end{flalign*}
    
   \noindent Since $\expect{\norm{\overline{\tilde{g}}_\mathcal{H}^t - \nabla\mathcal{L}_\mathcal{H}(\theta_{t-1})}^2} \leq \left(\frac{d}{k}-1\right)\expect{\norm{\lossHonest{t-1}}^2}$, owing to Lemma \ref{lem:helperMmtGDGlobalG0}, we get:
   \begin{flalign*}
        V^1
        &\leq \textstyle 2\mathcal{L}_\mathcal{H}(\theta_0) + z\left((1-\beta)^2\left(\frac{d}{k}-1\right)\expect{\norm{\lossHonest{0}}^2} + \beta^2\expect{\norm{\lossHonest{0}}^2}\right)\\
        & = \textstyle 2\mathcal{L}_\mathcal{H}(\theta_0) + z\left((1-\beta^2)^2\left(\frac{d}{k}-1\right)+\beta^2\right)\expect{\norm{\lossHonest{0}}^2}.
    \end{flalign*}
    
    \noindent Recall that $\beta^2 < 1$, $1-\beta^2 = 24\gamma L$ and $z = \frac{1}{8L}$. Therefore, 
    \begin{flalign*}
        V^1
        &\leq \textstyle 2\mathcal{L}_\mathcal{H}(\theta_0) + \frac{1}{8L}\left((24\gamma L)^2\left(\frac{d}{k}-1\right) + 1\right)\expect{\norm{\lossHonest{0}}^2}\\
        & =  \textstyle 2\mathcal{L}_\mathcal{H}(\theta_0) + (72\gamma^2 L\left(\frac{d}{k}-1\right) + \frac{1}{8L}) \expect{\norm{\lossHonest{0}}^2}.
    \end{flalign*}

    \noindent Due to the $L$-Lipschitz smoothness (cf.~Assumption~\ref{assumption:smoothness}) 
    we have $\expect{\norm{\lossHonest{}}^2} \leq 2L\left(\mathcal{L}_\mathcal{H}(\theta) - \mathcal{L}_\mathcal{H}^*\right)$ (See \cite{nesterov2018lectures}, Theorem 2.1.5). Substituting above, we get

    \begin{flalign*}
        V^1
        &\leq \textstyle 2\mathcal{L}_\mathcal{H}(\theta_0) + (72\gamma^2 L\left(\frac{d}{k}-1\right) + \frac{1}{8L})2L\left(\mathcal{L}_\mathcal{H}(\theta_0) - \mathcal{L}_\mathcal{H}^*\right)\\
        &\leq \textstyle 2\mathcal{L}_\mathcal{H}(\theta_0) + (144\gamma^2 L^2\left(\frac{d}{k}-1\right) + \frac{1}{4})\left(\mathcal{L}_\mathcal{H}(\theta_0) - \mathcal{L}_\mathcal{H}^*\right).
    \end{flalign*}
   
   \noindent Substituting from above in (\ref{eq:GDVdiff2_Tbounded}), we obtain that
    \begin{align*}
        V^1 - V^{T+1} &\leq {\textstyle 2 (\mathcal{L}_\mathcal{H}(\theta_0)-\mathcal{L}^*) + (144\gamma^2 L^2\left(\frac{d}{k}-1\right) + \frac{1}{4})\left(\mathcal{L}_\mathcal{H}(\theta_0) - \mathcal{L}_\mathcal{H}^*\right)}\\
        &\leq {\textstyle (144\gamma^2 L^2\left(\frac{d}{k}-1\right) + \frac{9}{4})\left(\mathcal{L}_\mathcal{H}(\theta_0) - \mathcal{L}_\mathcal{H}^*\right)}.
    \end{align*}
    
    \noindent Recall that $\gamma \leq \frac{k/d}{23200L} \leq \frac{1}{23200L}$. Substituting above, we get
    \begin{align*}
        V^1 - V^{T+1} &\leq {\textstyle (\frac{144 L^2\times k/d}{(23200)^2 L^2}\left(\frac{d}{k}-1\right) + \frac{9}{4})\left(\mathcal{L}_\mathcal{H}(\theta_0) - \mathcal{L}_\mathcal{H}^*\right)} = {\textstyle (\frac{144 L^2}{(23200)^2 L^2}\left(1 - \frac{k}{d}\right) + \frac{9}{4})\left(\mathcal{L}_\mathcal{H}(\theta_0) - \mathcal{L}_\mathcal{H}^*\right)}\\
        &\leq {\textstyle \frac{10}{4}\left(\mathcal{L}_\mathcal{H}(\theta_0) - \mathcal{L}_\mathcal{H}^*\right)} = {\textstyle \frac{5}{2}\left(\mathcal{L}_\mathcal{H}(\theta_0) - \mathcal{L}_\mathcal{H}^*\right)}.
    \end{align*}
    
    \noindent Substituting from above in (\ref{eq:GDavg_loss_case2bounded}) proves the theorem, i.e., we obtain that
    \begin{flalign*}
        \textstyle \frac{1}{T}\sum_{t=1}^T \expect{\norm{\nabla\mathcal{L}_\mathcal{H}(\theta_{t-1})}^2} &\leq  {\textstyle  \frac{18 \times \frac{5}{2} \left(\mathcal{L}_\mathcal{H}(\theta_0) - \mathcal{L}_\mathcal{H}^*\right)}{\gamma T ( 1- \kappa B^2)} + 108 \kappa \left(288\gamma^2 L ^2\frac{d}{k} + 24\gamma L \frac{d}{k} + 12\gamma L + 1 \right) \frac{G^2}{1 - \kappa B^2}} \nonumber\\
        &= \textstyle  \frac{45 \left(\mathcal{L}_\mathcal{H}(\theta_0) - \mathcal{L}_\mathcal{H}^*\right)}{\gamma T ( 1- \kappa B^2)} + 108 \kappa \left(288\gamma^2 L ^2\frac{d}{k} + 24\gamma L \frac{d}{k} + 12\gamma L + 1 \right) \frac{G^2}{1 - \kappa B^2}.
    \end{flalign*}
    Note that since $\gamma \leq \frac{k/d}{23200L} \leq \frac{1}{23200L}$, 
    \begin{align*}
        \textstyle 288\gamma^2 L ^2\frac{d}{k} + 24\gamma L \frac{d}{k} + 12\gamma L &\leq \textstyle \frac{288L}{23200} \gamma + \frac{24}{23200} + 12 \gamma L \leq \frac{288}{23200^2}  + \frac{24}{23200} + \frac{12}{23200}  < 1
    \end{align*}
    Using this above, we obtain that
    \begin{flalign*}
        \textstyle \frac{1}{T}\sum_{t=1}^T \expect{\norm{\nabla\mathcal{L}_\mathcal{H}(\theta_{t-1})}^2} &\leq \textstyle  \frac{45 \left(\mathcal{L}_\mathcal{H}(\theta_0) - \mathcal{L}_\mathcal{H}^*\right)}{\gamma T ( 1- \kappa B^2)} + \frac{216\kappa G^2}{1 - \kappa B^2}.
    \end{flalign*}
    Since, $\hat{\theta}$ is chosen uniformly at random from $\{ \theta_0, \ldots, \theta_{T-1}\}$, $\expect{\norm{\nabla\mathcal{L}_\mathcal{H}(\hat{\theta})}^2} = \frac{1}{T}\sum_{t=1}^T \expect{\norm{\nabla\mathcal{L}_\mathcal{H}(\theta_{t-1})}^2}$. Using this above proves the theorem.
\end{proof}

\begin{corollary}
In the same conditions of Theorem~\ref{thm:rosdhb},  let $\gamma = \tfrac{1}{\alpha c L}$, then
        \begin{align*}            
         \textstyle   \expect{\norm{\nabla \mathcal{L}_\mathcal{H}(\hat{\theta})}^2} \leq \mathcal{O}\! \left( \frac{\alpha}{T ( 1- \kappa B^2)} + \frac{\kappa G^2}{1 - \kappa B^2} \right).
        \end{align*}
\end{corollary}

\begin{proof}
    Recall from Theorem \ref{thm:rosdhb} that
    $$\textstyle \expect{\norm{\nabla \mathcal{L}_\mathcal{H}(\hat{\theta})}^2} \leq \frac{45\left(\mathcal{L}_\mathcal{H}(\theta_0) - \mathcal{L}_\mathcal{H}^*\right)}{\gamma T ( 1- \kappa B^2)} + \frac{216 \kappa G^2}{1 - \kappa B^2}.$$

    \noindent Also, for $\alpha = \frac{d}{k}$, recall that $\gamma = \frac{1}{c\alpha L}$ implying that $\frac{1}{\gamma} = c\alpha L$, which results in
    $$\textstyle \expect{\norm{\nabla \mathcal{L}_\mathcal{H}(\hat{\theta})}^2} \leq \frac{45cL \alpha \left(\mathcal{L}_\mathcal{H}(\theta_0) - \mathcal{L}_\mathcal{H}^*\right)}{T ( 1- \kappa B^2)} + \frac{216 \kappa G^2}{1 - \kappa B^2} = \mathcal{O}\left( \frac{\alpha}{T(1-\kappa B^2)} + \frac{\kappa G^2}{1-\kappa B^2}\right),$$
    which concludes the proof. 
\end{proof}


\newpage
\section{Description and theoretical guarantees of Byz-DASHA-PAGE}\label{app:dasha}
In this section, we discuss the interpretation of the theoretical results of \dasha~\cite{rammal2024communication} under the $(f,\kappa)-$robust aggregator. For being precise in our interpretation, we recall below the \dasha algorithm.

\begin{algorithm*}
    \caption{\dasha~\cite{rammal2024communication}}
    \textbf{Input:} Initial model $\theta^0 \in \R^d$ (chosen arbitrarily), initial update vector $R^0 = 0 \in \R^d$, total iterations $T \ge 1$, learning rate $\gamma$, robust aggregator $F$, momentum coefficient $\varrho \in [0, 1)$ 
    and, for each honest worker $w_i$: $g_i^0 = 0 \in \R^d$, $h_i^0 = 0 \in \R^d$, $m_i^0 = 0  \in \R^d$, and an unbiased compressor $\mathcal{Q}_i$ with compression parameter $\omega$.\\
  
    \vspace{5pt}
    \textbf{For} $t = 1$ to $T$:
    \begin{algorithmic}[1]
        \State \textbf{Server} sets
        \begin{align*}
            c^{t} = \left\{\begin{array}{cc}
                1, & \text{with probability } p  \\
                0, & \text{with probability } 1 - p
            \end{array} \right. .
        \end{align*}  
        \State \textbf{Server} broadcasts $R^{t-1}$ to all the workers. 
        \State For each \textbf{honest worker} $i$ do:
        \State \hspace{1em}Compute the updated model $\theta^t = \theta^{t-1} - \gamma R^t$.
        \State \hspace{1em}\textbf{If} $c^t = 1$ \textbf{then} 
        \State \hspace{1em}\hspace{1em} Compute $h_i^t = \lossworker{i}{t-1}$. 
        \State\hspace{1em}\textbf{else} 
        \State \hspace{1em}\hspace{1em} Compute $h_i^t = h_i^{t-1} + \widehat{ \Delta} (\theta^t, \theta^{t-1})$, \\
        \hspace{1em}\hspace{1em} where $\widehat{ \Delta} (\theta^t, \theta^{t-1})$ is an unbiased stochastic estimate of $\Delta (\theta^t, \theta^{t-1}) = \lossworker{i}{t} - \lossworker{i}{t-1}$. 
        \State \hspace{1em} Compute $ m_i^t = \mathcal{Q}_i\left( h_i^t -  h_i^{t-1} - \varrho (g_i^{t-1} - h_i^{t-1}) \right)$.
        \State \hspace{1em} Compute $g_i^t = g_i^{t-1} + m_i^t$.
        \State \hspace{1em} Send $m_i^t$ to the Server.


        \State \textbf{Server} computes $R^t = F(m_1^t, \ldots ,m_n^t)$ .

        
    \end{algorithmic} 
    \textbf{Output:} $\hat{\theta}_T$, chosen uniformly at random from $\{\theta^0, \ldots, \theta^{T-1}\}$.
    \label{alg:dasha}

\end{algorithm*}

\subsection{From $(f,\kappa)$-robust aggregation to $(\delta,c)$-robust aggregation}
Recall from Section \ref{sec:background} that our work uses $(f,\kappa)$-robust aggregation, defined as: An aggregation rule $F : (\mathbb{R}^d)^n \to \mathbb{R}^d$ is said to be $(f,\kappa)$-robust if, for any set of $n$ vectors $\{x_1,\ldots,x_n\} \subseteq \mathbb{R}^d$ and any subset $S \subseteq [n]$ of size $n-f$, it holds that
\[
    \norm{F(x_1,\ldots,x_n) - \overline{x}_S}^2 \leq \frac{\kappa}{|S|}\sum_{i \in S}\norm{x_i - \overline{x}_S}^2,
\]
where $\overline{x}_S \coloneqq \tfrac{1}{|S|} \sum_{i \in S} x_i$.

On the other hand, \dasha~\cite{rammal2024communication} uses a $(\delta,c)$-RAgg, defined via pairwise dispersion on the honest set $\mathcal H$ (with $|\mathcal H|\ge (1-\delta)n$):
\[ \textstyle 
\frac{1}{|\mathcal H|(|\mathcal H|-1)}\sum_{i\neq \ell\in \mathcal H}\mathbb{E}\|x_i-x_\ell\|^2 \;\le\; \sigma^2,
\qquad 
\mathbb{E}\|\hat x-\bar x\|^2 \;\le\; c\,\delta\,\sigma^2,
\]
where $\bar x=\tfrac1{|\mathcal H|}\sum_{i\in \mathcal H}x_i$.

Using the identity
\[
\frac{1}{|\mathcal H|}\sum_{i\in \mathcal H}\|x_i-\bar x\|^2=\frac{|\mathcal H|-1}{2|\mathcal H|}\cdot \frac{1}{|\mathcal H|(|\mathcal H|-1)}\sum_{i\neq \ell}\|x_i-x_\ell\|^2,
\]
we obtain $(f,\kappa)$-robust averaging implies $(\delta,c)$-RAgg with
\[ \textstyle
c\,\delta \;=\; \frac{\kappa}{2}\left(1-\frac{1}{|\mathcal H|}\right),
\]
where $|\mathcal H|=n-f $ and $ \delta=\frac{f}{n}$.
\subsection{Notation mapping from \cite{rammal2024communication}}
\begin{itemize}
  \item Honest set size: $|\mathcal H|=n-f$ and $\delta=f/n$.
  \item Heterogeneity: Byz-DASHA-PAGE’s $B$ and $\zeta^2$ correspond to our $B^2$ and $G^2$, respectively.
  \item Robustness constants: with $\tilde\kappa\coloneqq \kappa\left(1-\tfrac{1}{n-f}\right)$, we have
  $8c\delta=4\tilde\kappa$.
  \item Unbiased compression parameter $\omega$: for RandK, $\omega=\frac{d}{k}-1$. Writing $\alpha=d/k$, this is $\omega=\alpha-1$.
\end{itemize}
  
\subsection{Interpreting Theorem 2.2 in \cite{rammal2024communication} in our setup}

Theorem~2.2 in \cite{rammal2024communication} states that
for 
\begin{align}
  \gamma &\le \frac{1}{L+\sqrt{\eta}},\nonumber\\ 
  \eta& =\Big(8\omega(2\omega+1)(L_{\pm}^2+L^2)+\frac{1-p}{b}\big(12\omega(2\omega+1)+\tfrac2p\big)L_{\pm}^2\Big)\Big(\tfrac{1}{\sqrt{|H|}}+\sqrt{8c\delta}\Big)^{2},\nonumber\\
  \delta & < ((8c+4\sqrt c)\,B)^{-1} \nonumber
\end{align}
it holds
\[
\mathbb E\|\nabla f(\hat \theta)\|^2 \;\le\; \frac{1}{A}\left(\frac{2\delta_0}{\gamma T}+\Bigl(8c\delta+\sqrt{\tfrac{8c\delta}{|\mathcal H|}}\Bigr)\zeta^2\right),
\quad \textrm{ with }
A=1-\Bigl(8c\delta+\sqrt{\tfrac{8c\delta}{|\mathcal H|}}\Bigr)B.
\]

Rewriting the theorem in our notation, we obtain that for:
\begin{align}
& \gamma \le \frac{1}{L+\sqrt{\eta}}, \nonumber \\ 
& \eta =\Big(8\omega(2\omega+1)(L_{\pm}^2+L^2)+\frac{1-p}{b}\big(12\omega(2\omega+1)+\tfrac2p\big)L_{\pm}^2\Big)\Big(\tfrac{1}{\sqrt{|H|}}+\sqrt{8c\delta}\Big)^{2}, \nonumber \\
& \left(4\tilde\kappa+4\sqrt{\tfrac{\tilde\kappa\,f/n}{2}}\right)B^2< 1, \nonumber
\end{align}
it holds
\begin{align*}
& \mathbb E\|\nabla L_H(\hat\theta_T)\|^2
\;\le\;
\frac{1}{\,1-\bigl(4\tilde\kappa+2\sqrt{\tfrac{\tilde\kappa}{\,n-f\,}}\bigr)B^2\,}
\left(
\frac{2\delta_0}{\gamma T}
+\bigl(4\tilde\kappa+2\sqrt{\tfrac{\tilde\kappa}{\,n-f\,}}\bigr)G^2
\right).
\end{align*}


For RoSDHB, the theoretical bounds hold for full gradient descent, hence $p=1$. Moreover, we analyze our algorithm specifically for RandK unbiased compressor, hence $\omega=\frac{d}{k}-1 = \alpha - 1$.
Choosing the largest admissible stepsize $\gamma_{\max}=(L+\sqrt{\eta})^{-1}$ and considering the (favorable) case when $L_{\pm}=0$, we obtain that for
\begin{align*}
\left(4\tilde\kappa+4\sqrt{\tfrac{\tilde\kappa\,f/n}{2}}\right)B^2< 1,
\end{align*}
it holds that
\begin{align}
\label{eq:dasha_bound}
\mathbb E\|\nabla L_H(\hat\theta_T)\|^2
\;\le\; & 
\frac{1}{\,1-\bigl(4\tilde\kappa+2\sqrt{\tfrac{\tilde\kappa}{\,n-f\,}}\bigr)B^2\,}\nonumber\\
& \times \left[
\frac{2L\,\delta_0}{T}\Biggl(1+\sqrt{8(\alpha-1)(2\alpha-1)}\left(\tfrac{1}{\sqrt{n-f}}+2\sqrt{\tilde\kappa}\right)\Biggr)
+\bigl(4\tilde\kappa+2\sqrt{\tfrac{\tilde\kappa}{\,n-f\,}}\bigr)G^2
\right].
\end{align}

For $n-f\ge 2$, it holds that $\tilde \kappa \ge \kappa/2$.
As our goal is to compare our algorithm to Byz-DASHA-PAGE in the most favorable conditions to Byz-DASHA-PAGE, we consider  the less restrictive condition
\begin{align}
\kappa B^2< 1/2. \nonumber
\end{align}
Using the above in the RHS of~\eqref{eq:dasha_bound}, we obtain the following lower bound:
\begin{align}
\mathbb E\|\nabla L_H(\hat\theta_T)\|^2 &\leq \frac{1}{\,1-2\kappa B^2\,} \left[
\frac{2L\,\delta_0}{T}\Biggl(1+4(\alpha-1)\left(\frac{1}{\sqrt{2 n}}+\sqrt{\kappa}\right)\Biggr)
+\frac{\kappa}{2} G^2
\right] \nonumber\\
&\le  \mathcal O \left(
\frac{1+4(\alpha-1) \left(\frac{1}{\sqrt{n}}+\sqrt{\kappa}\right)}{(1-2\kappa B^2) T} + \frac{\kappa G^2}{1-2\kappa B^2}\right). \nonumber
\end{align}


\newpage 

\section{Additional Empirical Details \& Results}
\subsection{Byzantine attacks and defense}\label{app:attacks_defense}

To evaluate the resilience of our algorithm, we simulate two Byzantine attack strategies: \emph{Fall of Empires (FOE)}~\cite{xie2020fall} and \emph{A Little Is Enough (ALIE)}~\cite{baruch2019alittle}. Both of these attacks rely on crafting malicious updates that bias the aggregated updates while aiming to remaining undetected by robust defenses. We briefly describe them below.

Fix a time step $t$. Let $B^t$ denote the vectors sent by the byzantine workers in time step $t$. Recall from Section \ref{sec:background} that the byzantine workers can collude seamlessly, know the learning algorithm used by the server, and observe all messages exchanged between the server and the honest workers throughout training.  For $i \in \mathcal{H}$, suppose $m_i^t$ is the momentum of the honest worker $w_i$, and define $\eta \geq 0$ to be a fixed constant. Then, we have:

\noindent \textbf{Fall of Empires (FOE).} The byzantine updates are set as:
$$B^t = (1-\eta)\frac{1}{|\mathcal{H}|}\sum_{i\in\mathcal{H}} m_i^t.$$

\noindent \textbf{A Little Is Enough (ALIE).} Let $\sigma^t_\mathcal{H}$ be the coordinate-wise standard deviation of the momentum vectors recieved by the server from honest workers in time step $t$. Then, the byzantine updates are set as:
$$B^t = \frac{1}{|\mathcal{H}|}\sum_{i\in\mathcal{H}} m_i^t + \eta \sigma^t_\mathcal{H}$$

In our experiments for RoSDHB, we determine optimal value of $\eta$ that maximizes the L2-distance between the aggregated update $R^t$ and mean of the honest momentum vectors received by the server. Note that the index of coordinates sent by the byzantine workers are the same as those of the honest workers for our algorithm, since they need to adhere to the mask sent by the server in time step $t$. For \dasha, since the server does not decide the mask in every time step, workers have the flexibility to determine their mask independently. We incorporate this in the attack design for \dasha in the following manner: The byzantine workers independently sample the RandK mask in each time step, and modify only those masked coordinates for the attack strategy.

\noindent \textbf{Robust Aggregation.} In federated learning, the server aggregates the received updates to compute the global model in an iteration. In the presence of adversarial workers, the goal of the aggregator is to compute a function over the gradients such that the distance between the resulting global gradient and the gradient of the honest workers is minimized. We assume that $F$ is an $(f,\kappa)$-robust aggregator. In our experiments, we employ the Coordinate-Wise Trimmed Mean (CWTM) robust aggregation method from \cite{yin2018byzantine}.

\begin{definition}[Coordinate-Wise-Trimmed Mean]
    Fix a time step. Suppose $g_1,...,g_n \in \mathbb{R}^d$ are the gradient updates of the workers in the system, and $f$ is the number of byzantine workers in the system. For each coordinate $k \in [d]$, let $[\overline{g}_{1}]_k \le [\overline{g}_{2}]_k \le \cdots \le [\overline{g}_{n}]_k$ be the sorted values of $k^{th}$ coordinate of the $n$ gradient updates. Then, the coordinate-wise trimmed mean of $g_1,\dots ,g_n$, denoted by $CWTM(g_1,\dots g_n)$, is a vector in $\mathbb{R^d}$ such that the $k^{th}$ coordinate is given by:
    $$[CWTM(g_1,\dots g_n)]_k = \frac{1}{n-2f}\sum_{i = f+1}^{n-f}[\overline{g}_i]_k.$$
\end{definition}

\clearpage
\subsection{Experimental setup details}\label{app:experimental details}
\paragraph{System specification}
All experiments were conducted on a server equipped with four NVIDIA L40S GPUs (45 GiB each), two AMD EPYC 9124 16-core processors, and 512 GiB of RAM. The software environment consisted of Python 3.10.6 and PyTorch 2.6.0, running with CUDA 12.4 and cuDNN 9.1 on Debian GNU/Linux 11 (Bullseye).

\paragraph{Hyperparameters}
\renewcommand{\arraystretch}{1.10}
\begin{table*}[!htbp]
    \centering
    \begin{tabular}{|p{3.5cm}|*{2}{c|}c|c|c|c|c|c|c|c|}
        \hline
        Compression & \multicolumn{5}{c|}{MNIST} & \multicolumn{4}{c|}{CIFAR-10} \\
        \cline{2-10}
         Ratio, $k/d$ & 0.05 & 0.10 & 0.30 & 0.50 & 1.00 & 0.25 & 0.5 & 0.75 & 1.00 \\
        \hline
        Learning Rate $\gamma$& 0.16 & 0.16 & 0.4 & 0.4 & 0.4 & 0.7 & 0.7 & 0.7 & 0.7    \\
        \hline
        Grid Search for $\gamma$  &  \multicolumn{5}{c|}{ \{0.008,0.04,0.08, 0.16, 0.4\} }  &  \multicolumn{4}{c|}{\{0.1,0.2,0.3,0.5,0.7\}}  \\
        \hline
        Momentum, $\beta$  &  \multicolumn{5}{c|}{0.8}  &  \multicolumn{4}{c|}{0.8}  \\
        \hline
        Batch Size &  \multicolumn{5}{c|}{full batch 6000}  &  \multicolumn{4}{c|}{128}  \\
        \hline
        \# Byzantine Workers &  \multicolumn{5}{c|}{$f = \{1,3,5 \}$}  &  \multicolumn{4}{c|}{$f = \{1,3,5 \}$}  \\
        \hline
    \end{tabular}
    \caption{Summary of hyperparameters used for \textsc{RoSDHB} experiments when data partition follows a symmetric Dirichlet distribution with parameters $w = 5$ . }
    \label{tab:hyper_param_rosdhb}
\end{table*}

\renewcommand{\arraystretch}{1.10}
\begin{table*}[hbt!]
    \centering
    \begin{tabular}{|p{3.5cm}|*{2}{c|}c|c|c|c|c|c|c|c|}
        \hline
        Compression & \multicolumn{5}{c|}{MNIST} & \multicolumn{4}{c|}{CIFAR-10} \\
        \cline{2-10}
         Ratio, $k/d$ & 0.05 & 0.10 & 0.30 & 0.50 & 1.00 & 0.25 & 0.5 & 0.75 & 1.00 \\
        \hline
        Learning Rate $\gamma$& 0.008 & 0.008 & 0.008 & 0.08 & 0.16 & 0.03 & 0.03 & 0.03 & 0.03    \\
        \hline
        Grid Search for $\gamma$  &  \multicolumn{5}{c|}{ \{0.004, 0.0056, 0.008, 0.04, 0.08, 0.16, 0.4\} }  &  \multicolumn{4}{c|}{\{ 0.01,0.03,0.05, 0.1,0.2\}}  \\
        \hline
        Momentum, $\varrho$ &  \multicolumn{5}{c|}{$1/(2(d/k)-1)$}  &  \multicolumn{4}{c|}{$1/(2(d/k)-1)$}  \\
        \hline
        Batch Size &  \multicolumn{5}{c|}{full batch 6000}  &  \multicolumn{4}{c|}{128}  \\
        \hline
        \# Byzantine Workers &  \multicolumn{5}{c|}{$f = \{1,3,5\}$}  &  \multicolumn{4}{c|}{$f = \{1,3,5 \}$}  \\
        \hline
    \end{tabular}
    \caption{Summary of hyper-parameters used for \textsc{Byz-DASHA-PAGE} (described in Algorithm~\ref{alg:dasha}) experiments when data partition follows a symmetric Dirichlet distribution with parameters $w = 5$.\protect\footnotemark}
    \label{tab:hyper_param_dasha}
\end{table*}

\footnotetext{We chose the optimal momentum $\alpha$ for Byz-DASHA-PAGE as suggested in their paper.}

\begin{table*}[!htbp]
    \centering
    \begin{tabular}{|p{3.5cm}|*{2}{c|}c|c|c|c|c|c|c|}
        \hline
        Heterogeneity & \multicolumn{4}{c|}{\textsc{RoSDHB}} & \multicolumn{4}{c|}{\textsc{Byz-DASHA-PAGE}} \\
        \cline{2-9}
         parameter, $w$ & 0.5 & 1.0 & 3.0 & 5.0 & 0.5 & 1.0 & 3.0 & 5.0  \\
        \hline
        Learning Rate $\gamma$& 0.4 & 0.4 & 0.4 & 0.16 & 0.04 & 0.008 & 0.008 & 0.008    \\
        \hline
        Grid Search for $\gamma$  &  \multicolumn{4}{c|}{ \{0.008,0.04,0.08, 0.16, 0.4\} }  &  \multicolumn{4}{c|}{\{0.008,0.04,0.08, 0.16, 0.4\}}  \\
        \hline
        Momentum & \multicolumn{4}{c|}{$\beta =$ 0.8}  &  \multicolumn{4}{c|}{$\varrho =$ 1/19}  \\
        \hline
    \end{tabular}
    \caption*{Table: Summary of hyperparameters used for \textsc{RoSDHB} and \textsc{Byz-DASHA-PAGE} experiments on MNIST dataset with $k/d=0.1$, $f=1$ and full batch under different data partitions follows a symmetric Dirichlet distribution with parameters~$w$ (Experiment in Figure~\ref{fig:data_hetero_effect}). }
\end{table*}

\newpage

\subsection{Further interpretations of the main results}
\label{app:further_results}



\begin{table}[h]
\centering
\scriptsize
\setlength{\tabcolsep}{5pt}
\begin{tabular}{|c|c|c|c|c|c|c|}
\hline
\textbf{Accuracy} & \textbf{RoSDHB (f=1)} & \textbf{SOTA (f=1)} & \textbf{Speedup (f=1)} & \textbf{RoSDHB (f=3)} & \textbf{SOTA (f=3)} & \textbf{Speedup (f=3)} \\
\hline
0.76 & 10.8 ± 1.04 & 55.8 ± 7.87 & \textbf{5.17$\times$} & 17.0 ± 1.17 & 93.4 ± 11.93 & \textbf{5.49$\times$} \\
0.78 & 12.0 ± 1.02 & 61.8 ± 8.32 & \textbf{5.15$\times$} & 19.0 ± 1.17 & 104.4 ± 13.11 & \textbf{5.49$\times$} \\
0.80 & 13.0 ± 1.23 & 70.2 ± 8.89 & \textbf{5.40$\times$} & 22.8 ± 1.18 & 117.0 ± 14.60 & \textbf{5.13$\times$} \\
0.82 & 15.6 ± 1.37 & 82.8 ± 9.97 & \textbf{5.31$\times$} & 25.6 ± 1.25 & 133.6 ± 16.52 & \textbf{5.22$\times$} \\
0.84 & 18.8 ± 1.15 & 102.6 ± 10.93 & \textbf{5.46$\times$} & 31.2 ± 1.91 & 162.2 ± 19.79 & \textbf{5.20$\times$} \\
0.86 & 23.0 ± 1.10 & 135.8 ± 12.20 & \textbf{5.90$\times$} & 37.6 ± 2.54 & 168.0 ± 10.03 & \textbf{4.47$\times$} \\
0.88 & 32.0 ± 1.36 & 201.3 ± 7.49 & \textbf{6.29$\times$} & 53.0 ± 3.09 & -- & -- \\
0.90 & 56.8 ± 2.60 & -- & -- & 90.8 ± 5.06 & -- & -- \\
\hline
\end{tabular}
\caption{Convergence time (mean ± std) required to reach various test threshold accuracy on MNIST with compression ratio \(k/d = 0.1\), comparing \textsc{RoSDHB} and Byz-DASHA-PAGE (SOTA) under $f = 1,3$ Byzantine workers . \textsc{RoSDHB} consistently outperforms SOTA, achieving up to \(6.3\times\) speedup.}
\label{tab:mnist_speedup_table}
\end{table}

\begin{table}[h!]
\centering
\scriptsize
\setlength{\tabcolsep}{5pt}
\begin{tabular}{|c|c|c|c|c|c|c|}
\hline
\textbf{Accuracy} & \textbf{RoSDHB (f=1)} & \textbf{SOTA (f=1)} & \textbf{Speedup (f=1)} & \textbf{RoSDHB (f=3)} & \textbf{SOTA (f=3)} & \textbf{Speedup (f=3)} \\
\hline
0.30 & 13.2 ± 0.36 & 54.4 ± 2.11 & \textbf{4.12$\times$} & 26.4 ± 0.18 & 87.6 ± 3.17 & \textbf{3.32$\times$} \\
0.32 & 15.6 ± 0.33 & 66.0 ± 2.42 & \textbf{4.23$\times$} & 28.8 ± 0.78 & 98.0 ± 3.66 & \textbf{3.40$\times$} \\
0.34 & 17.2 ± 0.22 & 78.0 ± 2.37 & \textbf{4.53$\times$} & 36.8 ± 0.61 & 108.0 ± 3.58 & \textbf{2.93$\times$} \\
0.36 & 19.6 ± 0.44 & 90.0 ± 3.16 & \textbf{4.59$\times$} & 43.6 ± 1.21 & 125.6 ± 3.00 & \textbf{2.88$\times$} \\
0.38 & 23.6 ± 0.33 & 106.4 ± 3.65 & \textbf{4.51$\times$} & 55.2 ± 1.99 & 151.6 ± 4.28 & \textbf{2.75$\times$} \\
0.40 & 26.0 ± 0.69 & 139.2 ± 5.79 & \textbf{5.35$\times$} & 80.4 ± 3.12 & 182.8 ± 2.48 & \textbf{2.27$\times$} \\
0.42 & 29.6 ± 0.59 & 164.8 ± 5.70 & \textbf{5.57$\times$} & 98.4 ± 2.30 & 248.0 ± 6.34 & \textbf{2.52$\times$} \\
0.44 & 33.6 ± 0.44 & 210.5 ± 8.71 & \textbf{6.26$\times$} & 111.6 ± 3.18 & -- & -- \\
\hline
\end{tabular}
\caption{Convergence time (mean ± std in epochs) to reach various test threshold accuracy  on CIFAR-10 for compression ratio $k/d = 0.5$, comparing \textsc{RoSDHB} and Byz-DASHA-PAGE (SOTA) under different numbers of Byzantine workers \(f=1\) and \(f=3\).}
\label{tab:cifar_speedup_table}
\end{table}

\begin{table}[h!]
\centering
\scriptsize
\setlength{\tabcolsep}{5pt}
\begin{tabular}{|c|c|cc|cc|c|cc|cc|c|}
\hline
\textbf{f} & \textbf{k/d} & \multicolumn{2}{c|}{\textbf{SOTA Time}} & \multicolumn{2}{c|}{\textbf{RoSDHB Time}} & \textbf{Speedup} & \multicolumn{2}{c|}{\textbf{SOTA Coords}} & \multicolumn{2}{c|}{\textbf{RoSDHB Coords}} & \textbf{Comm.} \\
           &               & Mean  & Std      & Mean  & Std        & $\scriptstyle{\frac{\text{SOTA}}{\text{RoSDHB}}}$ & Mean      & Std            & Mean     & Std          & \textbf{Savings (\%)} \\
\hline
1 & 0.05  & 161.0   & 21.81   & 81.0    & 8.84    & 1.99× & 95231.5   & 12902.84  & 47911.5  & 5229.34  & 49.67 \\
1 & 0.1   & 117.0   & 11.44   & 20.6    & 1.28    & 5.68× & 138411.0  & 13537.99  & 24369.8  & 1518.67  & 82.38 \\
1 & 0.3   & 109.2   & 11.65   & 12.0    & 0.40    & 9.10× & 387550.8  & 41332.06  & 42588.0  & 1419.60  & 89.01 \\
1 & 0.5   & 95.25   & 30.41   & 10.8    & 0.72    & 8.82× & 563403.8  & 179865.79 & 63882.0  & 4232.43  & 88.65 \\
1 & 1.0   & 19.6    & 2.63    & 10.6    & 0.73    & 1.85× & 231868.0  & 31137.85  & 125398.0 & 8596.10  & 45.93 \\
\hline
3 & 0.05  & 193.0   & 14.35   & 133.8   & 14.91   & 1.44× & 114159.5  & 8486.32   & 79142.7  & 8820.14  & 30.64 \\
3 & 0.1   & 185.4   & 23.34   & 33.4    & 1.54    & 5.55× & 219328.2  & 27609.51  & 39512.2  & 1820.44  & 82.00 \\
3 & 0.3   & 191.5   & 13.81   & 21.0    & 1.52    & 9.12× & 679633.5  & 49024.08  & 74529.0  & 5405.68  & 89.01 \\
3 & 0.5   & 117.0   & 34.34   & 18.4    & 1.43    & 6.36× & 692055.0  & 203148.65 & 108836.0 & 8464.86  & 84.28 \\
3 & 1.0   & 34.8    & 4.88    & 18.2    & 1.11    & 1.91× & 411684.0  & 57703.24  & 215306.0 & 13130.77 & 47.70 \\
\hline
\end{tabular}
\caption{Convergence time (mean ± std) and number of coordinates communicated (mean ± std) to achieve a threshold accuracy of $85\%$ for Byz-DASHA-PAGE (SOTA) and RoSDHB on MNIST dataset under various compression ratios ($k/d$) and Byzantine settings ($f$). Speedup indicates how many times faster RoSDHB converges compared to SOTA. Communication savings shows percentage reduction in communicated coordinates by RoSDHB compared to SOTA computed as $\text{Communication Savings (\%)} = \left( \frac{\text{SOTA Mean} - \text{RoSDHB Mean}}{\text{SOTA Mean}} \right) \times 100$ .}
\label{tab:convergence_time_coords_speedup_savings}
\end{table}

\begin{table}[h!]
\centering
\scriptsize
\setlength{\tabcolsep}{5pt}
\begin{tabular}{|c|c|cc|cc|c|cc|cc|c|}
\hline
\textbf{f} & \textbf{k/d} & \multicolumn{2}{c|}{\textbf{SOTA Time}} & \multicolumn{2}{c|}{\textbf{RoSDHB Time}} & \textbf{Speedup} & \multicolumn{2}{c|}{\textbf{SOTA Coords}} & \multicolumn{2}{c|}{\textbf{RoSDHB Coords}} & \textbf{Comm. } \\
           &              & Mean & Std & Mean & Std & $\scriptstyle{\frac{\text{SOTA}}{\text{RoSDHB}}}$ & Mean & Std & Mean & Std & \textbf{Savings (\%)} \\
\hline
1 & 0.25 & 127.2 & 7.28 & 25.2 & 0.44 & 5.05× & 382.6M & 21.90M & 70.4M & 1.22M & 81.59 \\
1 & 0.5  & 139.2 & 11.57 & 26.0 & 1.39 & 5.35× & 797.6M & 66.32M & 145.3M & 7.74M & 81.79 \\
1 & 0.75 & 150.0 & 8.52 & 28.4 & 1.31 & 5.28× & 1267.8M & 72.04M & 238.0M & 11.02M & 81.22 \\
\hline
3 & 0.25 & 151.6 & 10.90 & 86.8 & 3.47 & 1.75× & 456.0M & 32.79M & 242.5M & 9.69M & 46.84 \\
3 & 0.5  & 182.8 & 4.95  & 80.4 & 6.23 & 2.27× & 1047.4M & 28.37M & 449.2M & 34.82M & 57.12 \\
3 & 0.75 & 246.0 & 11.11 & 67.6 & 4.94 & 3.64× & 2079.2M & 93.93M & 566.5M & 41.44M & 72.76 \\
\hline
\end{tabular}
\caption{Convergence time (mean ± std) and number of coordinates communicated (mean ± std) to achieve a threshold accuracy of $40\%$ for Byz-DASHA-PAGE (SOTA) and RoSDHB on CIFAR-10 under different compression ratios (\(k/d\)) and Byzantine worker settings (\(f\)). }
\label{tab:cifar10_comm_time_bits_savings}
\end{table}

\clearpage
\subsection{Full set of experiments}\label{app:fullset_experiments}
We present our experimental results where the system is under the FOE attack and the ALIE attack for the MNIST and CIFAR-10 datasets. 

We model a system consisting of 10 honest workers. We consider three Byzantine regimes: $f = 1$, $f = 3$, and $f = 5$. We evaluate the performance of our algorithm under the FOE and ALIE attacks and compare against SOTA approach - \dasha from \cite{rammal2024communication} for the CWTM aggregation rule (detailed in Appendix \ref{app:attacks_defense}). The plots are presented below and complement our (partial) results in Section \ref{sec:experiments} of the main
paper. 

\paragraph{Comprehensive results for FOE attack}\label{sec:compFOE}

\begin{figure}[h!]
    \centering
    \includegraphics[scale=0.375]{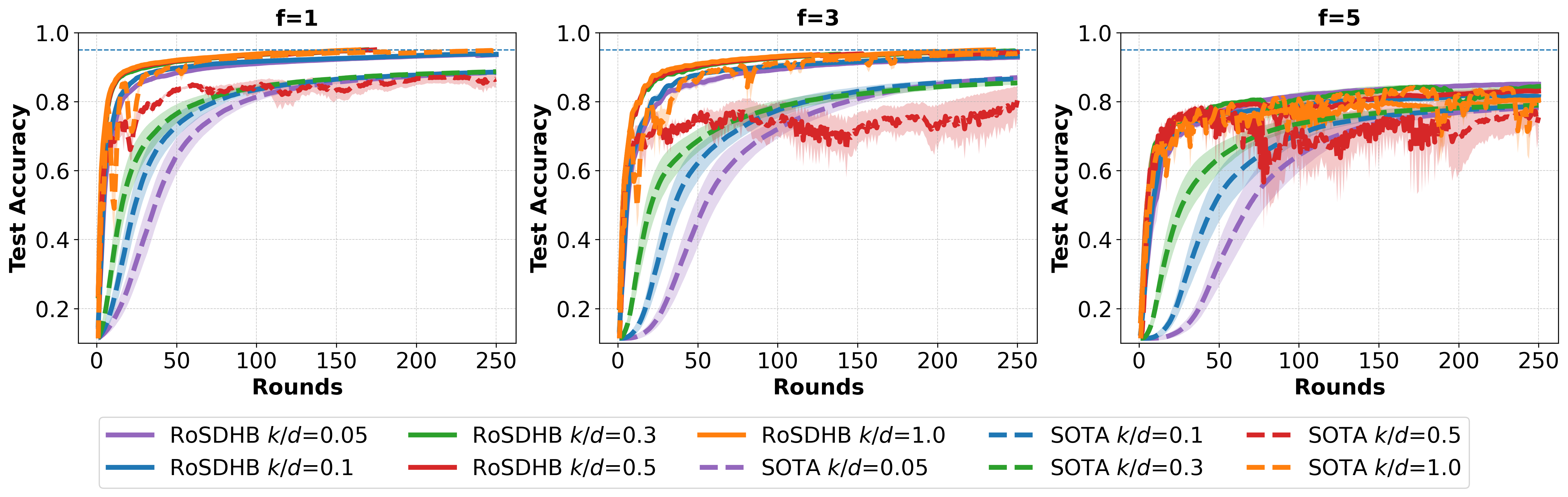}
    \caption{The convergence plots of RoSDHB and \dasha (SOTA) on MNIST under the FOE attack for varying sampling ratio $k/d \in \{ 0.05, 0.1, 0.3, 0.5, 1.0\}$ and number of byzantine workers $f \in \{1,3,5\}$.}
    \label{fig:MNIST_foe}
\end{figure}

\begin{figure}[h!]
    \centering
    \includegraphics[scale=0.375]{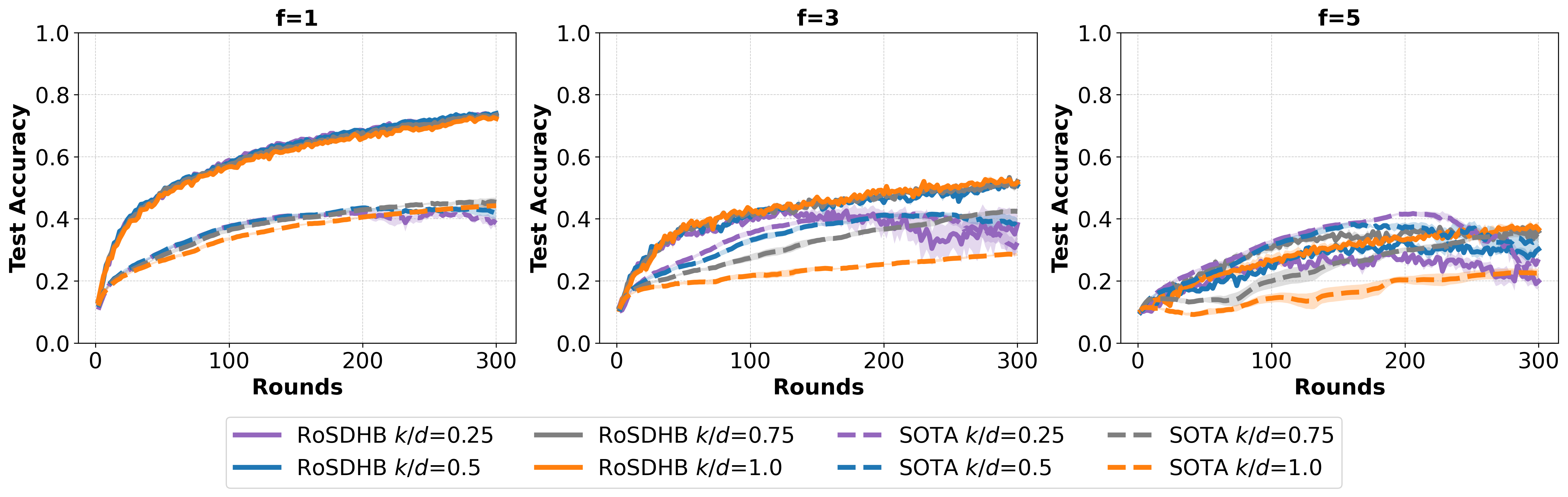}
    \caption{The convergence plots of stochastic version of RoSDHB and \dasha (SOTA) on CIFAR-10 under the FOE attack for varying sampling ratio $k/d \in \{0.25,0.5,0.75,1.0 \}$ and number of byzantine workers $f \in \{1,3,5\}$.}
    \label{fig:Cifar_foe}
\end{figure}

\newpage
\paragraph{Comprehensive results for ALIE attack}\label{sec:compALIE}
\begin{figure}[h!]
    \centering
    \includegraphics[scale=0.375]{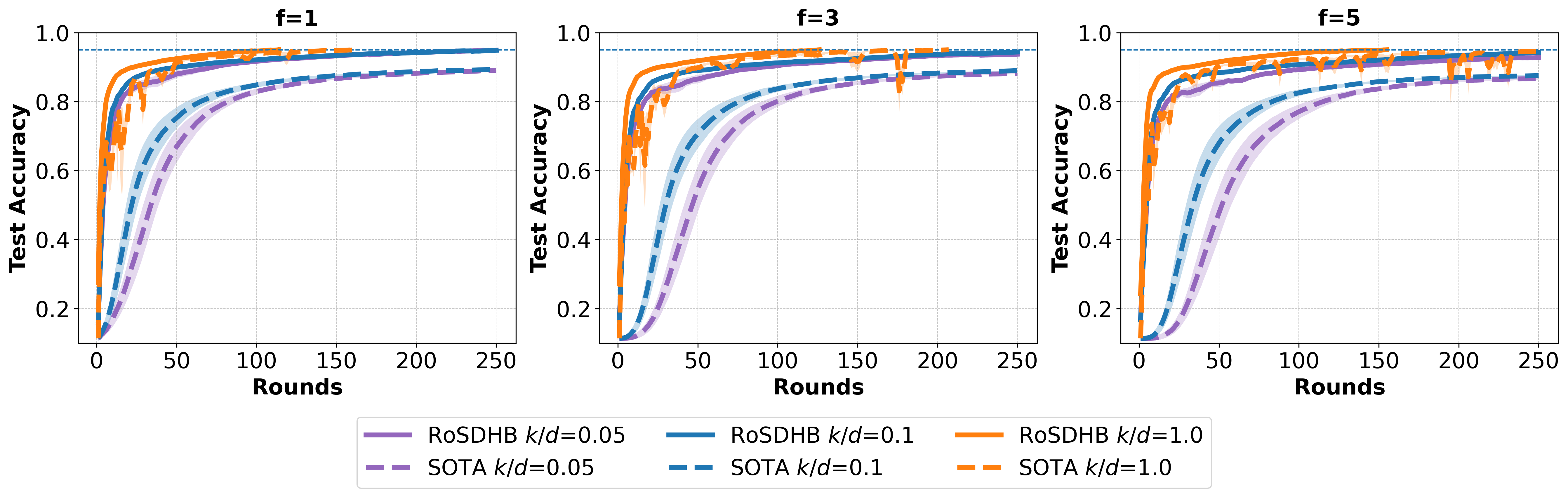}
    \caption{The convergence plots of RoSDHB and \dasha (SOTA) on MNIST under the ALIE attack for varying sampling ratio $k/d \in \{0.05,0.1,1.0\}$ and number of byzantine workers $f \in \{ 1,3,5 \} $.}
    \label{fig:MNIST_alie}
\end{figure}
\begin{figure}[h!]
    \centering
    \includegraphics[scale=0.375]{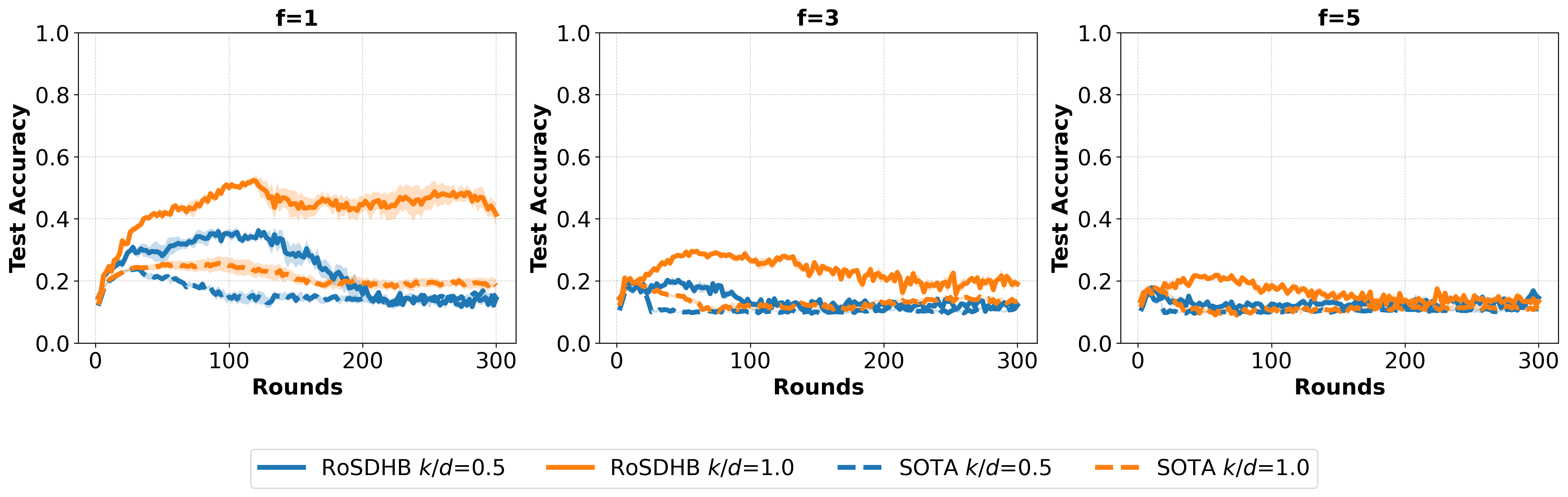}
    \caption{The convergence plots of stochastic version of RoSDHB and \dasha (SOTA) on CIFAR-10 under the ALIE attack for varying sampling ratio $k/d \in \{0.5,1.0\}$ and number of byzantine workers $f \in \{ 1,3,5 \} $.}
    \label{fig:cifar10_alie}
\end{figure}

\end{document}